\def\eqref#1{equation~\ref{#1}}
\def\1{\bm{1}}
\def\eps{{\epsilon}}
\def\rx{{\textnormal{x}}}
\def\rvx{{\mathbf{x}}}
\def\vtheta{{\bm{\theta}}}
\def\va{{\bm{a}}}
\def\vb{{\bm{b}}}
\def\ve{{\bm{e}}}
\def\vf{{\bm{f}}}
\def\vg{{\bm{g}}}
\def\vv{{\bm{v}}}
\def\vw{{\bm{w}}}
\def\vx{{\bm{x}}}
\def\vz{{\bm{z}}}
\def\mA{{\bm{A}}}
\def\mB{{\bm{B}}}
\def\mF{{\bm{F}}}
\def\mG{{\bm{G}}}
\def\mH{{\bm{H}}}
\def\mI{{\bm{I}}}
\def\mM{{\bm{M}}}
\def\mP{{\bm{P}}}
\def\mQ{{\bm{Q}}}
\def\mR{{\bm{R}}}
\def\mS{{\bm{S}}}
\def\mU{{\bm{U}}}
\def\mV{{\bm{V}}}
\def\mW{{\bm{W}}}
\def\mX{{\bm{X}}}
\def\mY{{\bm{Y}}}
\def\mZ{{\bm{Z}}}
\DeclareMathAlphabet{\mathsfit}{\encodingdefault}{\sfdefault}{m}{sl}
\SetMathAlphabet{\mathsfit}{bold}{\encodingdefault}{\sfdefault}{bx}{n}
\def\sB{{\mathbb{B}}}
\def\sC{{\mathbb{C}}}
\def\sD{{\mathbb{D}}}
\def\sO{{\mathbb{O}}}
\def\sR{{\mathbb{R}}}
\def\sS{{\mathbb{S}}}
\DeclareMathOperator*{\argmin}{arg\,min}
\pgfplotsset{compat=1.16} 
\title{\rebuttal{Share Your Representation Only: Guaranteed Improvement of the Privacy-Utility Tradeoff in Federated Learning}}
\author{Zebang Shen\thanks{The work is done when Zebang Shen was a post-doctoral researcher at University of Pennsylvania.} \\
ETH Zürich\\
\texttt{zebang.shen@inf.ethz.ch} \\
	\And
	Jiayuan Ye, Anmin Kang \\
	National University of Singapore \\
	\texttt{\{jiayuanye,anmin.kang\}@u.nus.edu} \\
    \And
    Hamed Hassani \\
	University of Pennsylvania \\
	\texttt{hassani@seas.upenn.edu} 
	\And
	Reza Shokri \\
	National University of Singapore \\
	\texttt{reza@comp.nus.edu.sg}\\
}
\definecolor{jycolor}{rgb}{0.523, 0.235, 0.625}
\definecolor{rebuttalcolor}{rgb}{0, 0, 1}
\newcommand{\rebuttal}[1]{{#1}}
\begin{document}

\maketitle

\begin{abstract}
Repeated parameter sharing in federated learning causes significant information leakage about private data, thus defeating its main purpose: data privacy.  Mitigating the risk of this information leakage, using state of the art differentially private algorithms, also does not come for free.  Randomized mechanisms can prevent convergence of models on learning even the useful representation functions, especially if there is more disagreement between local models on the classification functions (due to data heterogeneity). In this paper, we consider a representation federated learning objective that encourages various parties to collaboratively refine the consensus part of the model, with differential privacy guarantees, while separately allowing sufficient freedom for local personalization (without releasing it).  We prove that in the linear representation setting, while the objective is non-convex, our proposed new algorithm \DPFEDREP\ converges to a ball centered around the \emph{global optimal} solution at a linear rate, and the radius of the ball is proportional to the reciprocal of the privacy budget.  With this novel utility analysis, we improve the SOTA utility-privacy trade-off for this problem by a factor of $\sqrt{d}$, where $d$ is the input dimension.  We empirically evaluate our method with the image classification task on CIFAR10, CIFAR100, and EMNIST, and observe a significant performance improvement over the prior work under the same small privacy budget. The code can be found in this  \href{https://github.com/shenzebang/CENTAUR-Privacy-Federated-Representation-Learning}{link}.
\end{abstract}

\section{Introduction}

In federated learning (FL), multiple parties cooperate to learn a model under the orchestration of a central server while keeping the data local. However, this paradigm alone is insufficient to provide rigorous privacy guarantees, even when local parties only share partial information (e.g. gradients) about their data. An adversary (e.g. one of the parties) can infer whether a particular record is in the training data set of other parties~\citep{nasr2019comprehensive}, or even precisely reconstruct their training data~\citep{zhu2019deep}. To formally mitigate these privacy risks, we need to guarantee that any information \emph{shared between the parties} during the training phase has \emph{bounded information leakage} about the local data.  This can be achieved using FL under differential privacy (DP) guarantees. 

FL and DP are relatively well-studied separately. However, their challenges multiply when conducting FL under a DP constraint, in real-world settings where the data distributions can vary substantially across the clients \citep{li2020federated, acar2020federated, shen2022federated}. A direct consequence of such data heterogeneity is that the optimal local models might vary significantly across clients and differ drastically from the global solution.  This results in large local gradients~\citep{jiang2019improving}. However, these large signals leak information about the local training data, and cannot be communicated as such when we need to guarantee DP.  We require clipping gradient values (usually by a small threshold \citep{de2022unlocking}) before sending them to the server, to bound the sensitivity of the gradient function with respect to changes in training data~\cite{abadi2016deep}. As the local per-sample gradients (due to data heterogeneity) tend to be large even at the global optimum, clipping per-example gradient by a small threshold and then randomizing it, will result in a high error in the overall gradient computation, and thus degrading the accuracy of the model learned via FL.

\noindent{\bf Contributions.} 
\rebuttal{In this work, we identify an important bottleneck for achieving high utility in FL under a tight privacy budget: There exists a magnified conflict between learning the representation function and classification head, when we clip gradients to bound their sensitivity (which is required for achieving DP). This conflict causes slow convergence of the representation function and disproportional scaling of the local gradients, and consequently leads to the inevitable utility drop in DP FL. To address this issue, we observe that in many FL classification scenarios, participants have minimal disagreement on data representations \citep{bengio2013representation, chen2020simple, collins2021exploiting}, but possibly have very different classifier heads (e.g., the last layer of the neural network). Therefore, instead of solving the standard classification problem, we borrow ideas from the literature of model personalization and view the neural network model as a composition of a representation extractor and a small classifier head, and optimize these two components in different manners. In the proposed scheme, CENTAUR, we train a single differentially private global representation extractor while allowing each participant to have a different personalized classifier head. Such a decomposition has been considered in previous arts like \citep{collins2021exploiting} and \citep{singhal2021federated}, but only in a non-DP setting, and also in \citep{jain2021differentially}, but only for a linear embedding case.}

\setlength{\intextsep}{1pt}
\begin{wrapfigure}{r}{0.37\textwidth}
    \resizebox{0.37\textwidth}{!}{
    \begin{tikzpicture}
    
    \begin{axis}[
    legend cell align={left},
    legend style={
      fill opacity=0.8,
      draw opacity=1,
      text opacity=1,
      at={(0.99,0.01)},
      anchor=south east,
      draw=white!80!black
    },
    tick align=outside,
    tick pos=left,
    x grid style={white!70!black},
    xlabel={$\eps$ in $(\eps, \delta = 10^{-5})$-differential privacy bound},
    xmin=0.225, xmax=4.25,
    xmode=log,
    xtick={0.25, 0.5, 1,2, 4}, xticklabels={0.25, 0.5, 1,2, 4},
    xtick style={color=black},
    y grid style={white!70!black},
    ylabel={Testing accuacy \%},
    ymin=39, ymax=56,
    ytick style={color=black},
    ylabel near ticks, ylabel shift={0pt}
    ]
    
    
    
    \addplot [blue, mark=*, mark size=1.5, mark options={solid,fill=blue}] table[x=x,y=y] {data_DP_FedRep.dat};
    \addlegendentry{\DPFEDREP\ }

    
    \addplot [violet, mark=*, mark size=1.5, mark options={solid,fill=violet}] table[x=x,y=y] {data_local.dat};
    \addlegendentry{\LOCAL\ }
    
    
    \addplot [red, mark=*, mark size=1.5, mark options={solid,fill=red}] table[x=x,y=y] {data_PPSGD.dat};
    \addlegendentry{\PPSGD\ }
    
    
    \addplot [green!50!black, mark=*, mark size=1.5, mark options={solid,fill=green!50!black}] table[x=x,y=y] {data_PMTL.dat};
    \addlegendentry{\PMTLFT\ }
    
    
    \addplot [orange, mark=*, mark size=1.5, mark options={solid,fill=orange}] table[x=x,y=y] {data_DP_FedAvg_ft.dat};
    \addlegendentry{\DPFEDAVGFT\ }


    \addplot [name path=upper,draw=none] table[x=x,y expr=\thisrow{y}+\thisrow{err}] {data_DP_FedAvg_ft.dat};
    \addplot [name path=lower,draw=none] table[x=x,y expr=\thisrow{y}-\thisrow{err}] {data_DP_FedAvg_ft.dat};
    \addplot [fill=orange!10] fill between[of=upper and lower];
    
    \addplot [name path=upper,draw=none] table[x=x,y expr=\thisrow{y}+\thisrow{err}] {data_DP_FedRep.dat};
    \addplot [name path=lower,draw=none] table[x=x,y expr=\thisrow{y}-\thisrow{err}] {data_DP_FedRep.dat};
    \addplot [fill=blue!10] fill between[of=upper and lower];
    
    \addplot [name path=upper,draw=none] table[x=x,y expr=\thisrow{y}+\thisrow{err}] {data_PMTL.dat};
    \addplot [name path=lower,draw=none] table[x=x,y expr=\thisrow{y}-\thisrow{err}] {data_PMTL.dat};
    \addplot [fill=green!10] fill between[of=upper and lower];
    
    \addplot [name path=upper,draw=none] table[x=x,y expr=\thisrow{y}+\thisrow{err}] {data_PPSGD.dat};
    \addplot [name path=lower,draw=none] table[x=x,y expr=\thisrow{y}-\thisrow{err}] {data_PPSGD.dat};
    \addplot [fill=red!10] fill between[of=upper and lower];
    
    \addplot [name path=upper,draw=none] table[x=x,y expr=\thisrow{y}+\thisrow{err}] {data_local.dat};
    \addplot [name path=lower,draw=none] table[x=x,y expr=\thisrow{y}-\thisrow{err}] {data_local.dat};
    \addplot [fill=violet!10] fill between[of=upper and lower];

    \end{axis}
    
    \end{tikzpicture}
    }
    \vspace*{-6mm}
    \caption{Privacy utility trade-off for models trained under \DPFEDREP\  and other algorithms on CIFAR10 (500 clients, 5 shards per user). Error bar denotes the std. across 3 runs.}
    \label{fig:trade_off}
\end{wrapfigure}
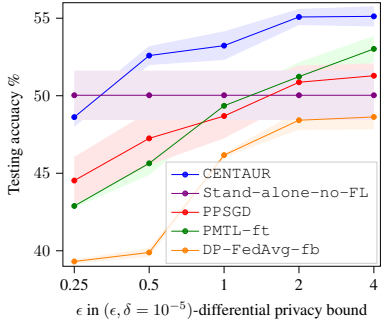
Due to low heterogeneity in data representation (compared to the whole model), the DP learned representation in our new scheme outperforms prior schemes that perform DP optimization over the entire model. 
In the setting where both the representation function and the classifier heads are linear w.r.t. their parameters, we prove a novel utility-privacy trade-off for an instance of \DPFEDREP, yielding a significant $O(\sqrt{d})$ improvement over previous art, where $d$ is the input dimension (Corollary~\ref{corollary_tradeoff}). A major algorithmic novelty of our proposed approach is a cross-validation scheme for boosting the success probability of the classic noisy power method for privacy-preserving spectral analysis.\\
We present strong empirical evidence for the superior performance of \DPFEDREP\ over the prior work, under the small DP budget of ($1, 10^{-5}$) in a variety of data-heterogeneity settings on benchmark datasets CIFAR10, CIFAR100, and EMNIST. Our method outperforms the prior work in all settings. Moreover, we showcase that \DPFEDREP\ \emph{uniformly} enjoys a better utility-privacy trade-off over its competitors on the CIFAR10 dataset across different privacy budget $\epsilon$ (Figure~\ref{fig:trade_off}). Importantly, \DPFEDREP\ outperforms the local stand-alone training even with, $\epsilon = 0.5$, thus justifying the benefit of collaborative learning compared to stand-alone training for a larger range of privacy budget.

\vspace{-3mm}
\subsection{Related Work}
\vspace{-3mm}
Federated learning with differential privacy has been extensively studied since its emergence~\citep{shokri2015privacy,mcmahan2017communication}. Without any trusted central party, the local differential privacy model requires each client to randomize its messages before sending them to other (malicious) parties. Consequently, the trade-off between local DP and accuracy is significantly worse than that for centralized setting and requires huge amount of data for learning even simple statistics~\citep{duchi2014privacy,erlingsson2014rappor,ding2017collecting}. By using secure aggregation protocol, 
recent works~\citep{mcmahan2017learning,agarwal2018cpsgd,levy2021learning,kairouz2021practical} study user-level DP under Billboard model to enable better utility. We also focus on such user-level DP setting under Billboard model. \\
Model personalization approaches~\citep{smith2017federated,fallah2020personalized,li2020federated,arivazhagan2019federated, collins2021exploiting,pillutla2022federated} enable each client to learn a different (while related) model, thus alleviating the model drifting issue due to data heterogeneity. Recent works further investigate whether model personalization approaches enable improved privacy accuracy trade-off for federated learning. \citet{hu2021private} propose a private federated multi-task learning algorithm by adding task-specific regularization to each client's optimization objective. However, the regularization
has limited ability to deal with data heterogeneity. \citet{Bietti2022personalization} propose the \PPSGD\ algorithm that enables training additive personalized models with user-level DP. However, their generalization guarantees crucially rely on the convexity of loss functions. On the contrary, we study the convergence of \DPFEDREP\ algorithm under more general non-convex objectives.  \\
The closest prior work to our approach in the literature is \citet{jain2021differentially}, who also propose differentially privately learning shared low-dimensional linear representation with individualized classification head. However, their algorithm relies on solving the least square problem exactly at server side, by performing SVD on perturbed noisy representation matrix. Hence, the generalization guarantee of their algorithm intrinsically has an expensive $d^{1.5}$ dependency on the data dimension $d$. By contrast, we perform noisy gradient descent at server side and improve upon this error dependency by a factor of $\sqrt{d}$. Their algorithm is also limited to the linear representation learning problem, unlike our \DPFEDREP\ algorithm which enables training multiple layers of shared representations. Upon submitting this paper, we became aware of a recent concurrent work ~\citet{xu2023learning} that uses a very similar algorithm to learn image embedding (representation) with user-level differential privacy in the extreme multi-class setting (i.e., when the classifier is extremely large).\\
Our work  builds on the \FEDREP\ algorithm \citep{collins2021exploiting}, which also relies on learning shared representations between clients but does not consider privacy. In contrast, our work provides a novel private and federated representation learning framework. Moreover, a major different ingredient of our algorithm is the initialization procedure, which requires performing \emph{differentially private} SVD to the data matrix. We use the noisy power method~\citep{hardt2014noisy} as a crucial tool to enable a constant probability for utility guarantee. We then perform cross-validation to further boost success probability to arbitrarily large (inspired by~\citet{liang2014improved}).

\section{Notations and Background on Privacy}
\noindent{\bf Notations.}
We denote the clipping operation by  $\clip(\vx; \zeta)\ \defi\ \vx\cdot \min\{1, {\zeta}/{\|\vx\|}\}$, and 
denote the Gaussian mechanism as $\gm_{\zeta, \sigma}(\{\vx_i\}_{i=1}^{s})\ \defi\ \frac{1}{s} (\sum_{i=1}^{s} \clip(\vx_i; \zeta) + \sigma\zeta W)$ where $W\sim\mathcal{N}(0,\mathbb{I})$.

Define \renyi\ Differential Privacy (RDP) on a dataset space $\mathcal{D}$ equipped with a distance $\mathrm{d}$ as follows.
\begin{definition}[RDP]\label{definition_RDP}
	For measures $\nu, \nu'$ over the same space with $\nu \ll \nu'$, their \renyi\ divergence  
		$R_\alpha(\nu, \nu') = \frac{1}{\alpha - 1}\log E_\alpha(\nu, \nu'),\ \mathrm{where}\ E_\alpha(\nu, \nu') = \int \left(\frac{\ud \nu}{\ud \nu'}\right)^\alpha \ud \nu'$.
 A randomized algorithm $\MM:\DM\rightarrow\Theta$ satisfies $(\alpha, \epsilon)$-RDP, if $\forall\sD, \sD' \in \mathcal{D}$ with $\mathrm{d}(\sD, \sD') \leq 1$, we have $R_\alpha(\MM(\sD), \MM(\sD')) \leq \epsilon$.
\end{definition}
\noindent\textbf{User-level-RDP.} 
Let $\mathcal{D}$ be the space of all of $n$ tuples of local datasets $\{\sS_i\}_{i=1}^n$, where each local dataset consists of $m$ data points, i.e. $\mathcal{D} = \{\{\sS_i\}_{i=1}^n\ |\ \sS_i = \{\vz_{ij}\}_{j=1}^m \}$.
The distance $\mathrm{d}$ is the Hamming distance in the dataset level, i.e. $\mathrm{d}(\{\sS_i\}_{i=1}^n, \{\sS'_i\}_{i=1}^n) = \sum_{i=1}^n \1_{\sS_i \neq \sS_i'}$. We refer to the privacy guarantee recovered by this choice of dataset space as user-level-RDP. \\
In Appendix \ref{appendix_privacy_preliminary}, we further describe the Gaussian Mechanism and the composition of RDP, the standard notion of Differential Privacy (DP), and the conversion Lemma from RDP to DP.

\noindent{\bf Threat Models.}
We aim to protect the privacy of each user against potential adversarial other clients, i.e., any eavesdropper will not be able to tell whether one users has participated in the collaborative learning procedure, given the information released during training phase. We establish user-level RDP guarantees under the billboard model, which is a communication protocol that is particularly compatible with algorithms in the federated setting and has been adopted in many previous works \citep{jain2021differentially, hu2021private, Bietti2022personalization}. In this model, a trusted server (either a trusted party or one that uses cryptographic techniques like multiparty computation) aggregates information subject to a DP constraint, which is then shared as public messages with all the $n$ users. 
Then, each user computes its own personalized model based on the public messages and its own private data \citep{Hsu2014private}. Our RDP guarantees only hold for releasing the shared representations $b_{T_g}$ in Algorithm~\ref{algorithm_server}, and the guarantees are equivalent to joint-(R)DP guarantees~\cite{kearns2014mechanism} if all the personalized models $w_i^{T_l}$ of individual users were additionally released as outputs of Algorithm~\ref{algorithm_client_general}.

\section{Problem Formulation} \label{section_problem_formulation_general}
Consider a Federated Learning (FL) setting where there are $n$ agents interacting with a central server and the $i$th agent holds $m_i$ local data points $\sS_i = \{(\vx_{ij}, y_{ij})\}_{j=1}^{m_i}$.
Here $(\vx_{ij}, y_{ij}) \in \sR^{d} \times \sR$ denotes a pair of an input vector and the corresponding label. 
Suppose that each local dataset $\sS_i$ is sampled from an underlying joint distribution $p_i(\vx, y)$ of the input and response pair.
We consider the \emph{data-heterogenous}  setting where potentially $p_i \neq p_j$ for $i\neq j$.
We further consider a hypothesis model $f(\cdot; \vtheta) : \sR^d \rightarrow \sR$ which maps an input $\vx$ to the predicted label $y$.
Here $\vtheta \in \sR^{q}$ is the trainable parameter of the model $f$.
Let $\ell: \sR \times \sR \rightarrow \sR$ be a loss function that penalizes the error of the prediction $f(\vx; \vtheta)$ given the true label $y$.
The local objective on client $i$ is defined as 
\begin{equation}\label{eqn_local_objective}
	l(\vtheta; \sS_i) \defi \frac{1}{m_i} \sum_{j=1}^{m_i} \ell(f(\vx_{ij}; \vtheta), y_{ij}).
\end{equation}
The standard FL formulation seeks a global consensus solution, whose objective is defined as
\begin{equation} \label{eqn_standard_FL_objective}
	\arg\min_{\theta} \LM(\vtheta) \defi \frac{1}{n} \sum_{i=1}^{n} l(\vtheta; \sS_i).
\end{equation}
While this formulation is reasonable when the local data distributions are similar, the obtained global solution may be far from the local optima $\arg\min l(\vtheta; \sS_i)$ under diverse local data distributions, a phenomenon known as \emph{statistical heterogeneity} in the FL literature \citep{li2020federated_survey, wang2019federated, malinovskiy2020local, mitra2021achieving, charles2021convergence,acar2020federated,karimireddy2020scaffold}.
Such a (potentially significant) mismatch between local and global optimal solutions limits the incentive for collaboration, and cause extra difficulties when DP constraints are imposed (Remark \ref{remark_model_drifiting_DP}). These considerations motivate us to search for personalized solutions that can be learned in a federated fashion, with less utility loss due to the DP constraint.

\vspace{-3mm}
\paragraph{Federated representation learning with differential privacy.} 
It is now well-documented that in some common and real-world FL tasks, such as image classification and word prediction, clients have minimal disagreement on data representations \citep{chen2020simple,collins2021exploiting}.
Based on this observation, a more reasonable alternative to the FL objective in \eqref{eqn_standard_FL_objective} should focus on learning the data representation, which is the information that is agreed on among most parties, while also allowing each client to personalize its learning on information that the other clients disagree on. 
To formalize this, suppose that the variable $\vtheta\in\sR^q$ can be partitioned into a pair $[\vw, \vb] \in \sR^{q_1} \times \sR^{q_2}$ with $q = q_1 + q_2$ and the parameterized model admits the composition $f = h \circ \phi$ where $\phi: \sR^d \rightarrow \sR^k$ is the \emph{representation extractor} that maps $d$-dimensional data points to a lower dimensional space of size $k$ and $h:\sR^k \rightarrow \sR$ is a \emph{classifier head} that maps from the lower dimensional subspace to the space of labels. 
An important example is bottom and top layers of the neural network model.  We use $\vw$ and $\vb$ to denote the parameters that determine $h$ and $\phi$, respectively.
With the above notation, we consider the following federated representation learning (FRL) problem
\begin{equation} \label{eqn_federated_representation_learning_objective}
	\min_{\vb \in \sB} \frac{1}{n}\sum_{i=1}^n \min_{\vw_i} \left\{ l ([\vw_i, \vb]; \sS_i):= \frac{1}{m_i}\sum_{j=1}^{m_i}\ell (h(\phi(\vx_{ij}; \vb); \vw_i), y_{ij})\right\},
\end{equation}
where we maintain a single global representation extraction function $\phi(\cdot; \vb)$ subject to the constraint $\vb\in\sB\subseteq\sR^{q_2}$ while allowing each client to use its personalized classification head $h(\cdot; \vw_i)$ locally. The constraint $\sB$ is included so that \eqref{eqn_federated_representation_learning_objective} also covers the linear case studied in section \ref{section_LRL}. 

The choice of the FRL formulation in \eqref{eqn_federated_representation_learning_objective} entails considerations from both DP and optimization perspectives:
From the DP standpoint, the phenomenon of statistical heterogeneity introduces additional difficulties for federated learning under DP constraint (see Remark \ref{remark_model_drifiting_DP} below). If the clients collaborate to train only a shared representation function, then the aforementioned disadvantages can be alleviated; From the optimization standpoint, we typically have $k\ll d$, i.e. the dimension of the extracted features is much smaller than that of the original input. Hence, for a fixed representation function $\phi(\cdot; \vb)$, the client specific heads $h(\cdot; \vw_i)$ are in general easy to optimize locally as the number of parameters, $k$, is typically small. 

\begin{remark}[Statistical heterogeneity makes DP guarantee harder to establish.] \label{remark_model_drifiting_DP}
	To establish DP guarantees for gradient based methods, e.g. DP-SGD, a common choice is the Gaussian mechanism, which is comprised of the gradient clipping step and the noise injection step. It is empirically observed that to achieve a better privacy-utility trade-off, a small clipping threshold is preferred, since it limits the large variance due to the injected noise \citep{de2022unlocking}. Moreover, the effect of the bias (due to clipping) subsides as the per-sample gradient norm diminishes during the centralized training, a phenomenon known as benign overfitting in deep learning \citep{bartlett2020benign,li2021towards,bartlett2021deep}. 
	However, due to the phenomenon of distribution shift, the local (per-sample) gradients in the standard FL setting (described in \eqref{eqn_standard_FL_objective}) remain large even at the global optimal solution, and hence setting a small (per-sample) gradient clipping threshold will result in a large and non-diminishing bias in the overall gradient computation.\\
	In contrast, for tasks where the representation extracting functions are approximately homogeneous, the local and global optimal of the FRL formulation \ref{eqn_federated_representation_learning_objective} are close and hence the gradients w.r.t. the representation function vanishes at the optimal, which is amiable to small clipping threshold.
\end{remark}

\section{Differential Private Federated Representation Learning} \label{section_DPFRL}
\begin{algorithm}[t]
	\caption{\textsc{Server} procedure of \DPFEDREP}
	\begin{algorithmic}[1]
		\Procedure {Server}{$\vb^0$, $p_g$, $T_g$, $\eta_g$, $\sigma_g$, $\zeta_g$}
		\State // $\vb^0$ is obtained from the \textsc{Initialization} procedure.
		\For {$t \leftarrow 0$ to $T_g-1$}
		\State Sample set $\sC^{t}$ of active clients using Poisson sampling with parameter $p_g$.
		\State Broadcast the current global representation parameter $\vb^{t}$ to active clients.
		\State Receive the local update directions $\{\vg_i^{t}\}_{i \in \sC^t}$ from the \textsc{Client} procedures.
		\State Compute the update direction $\vg^t = \gm_{\zeta_g, \sigma_g}( \{\vg_i^{t}\}_{i\in\sC^{t}})$ \label{line_GM}
		\State Update the global representation function $\vb^{t+1} := \textsc{Aggregation}(\vb^t, \vg^t, \eta_g)$.
		\State // The \textsc{Aggregation} procedure depends on the feasible set $\sB$ in \eqref{eqn_federated_representation_learning_objective}.
		\EndFor
		\State {\Return $\vb^{T_g}$.}
		\EndProcedure
	\end{algorithmic}
	\label{algorithm_server}
\end{algorithm}

\begin{algorithm}[t]
	\caption{\textsc{Client} procedure of \DPFEDREP\ in the general case (for client $i$)}
	\begin{algorithmic}[1]
		\Procedure {Client}{$\vb^t$, $\bar{m}$, $T_l$, $\eta_l$}
		\State [Phase 1: Local classifier update.] $\vw^{t+1}_i = \argmin_\vw l([\vb^{t}, \vw]; \sS_i).$\label{alg:step_local_classifier_update}
		\State [Phase 2: Local representation function update.] Set $\vb_i^{t, 0} = \vb^{t}$;
		\For {$s \leftarrow 0$ to $T_l-1$}
		\State Sample a subset $\sS_i^s$ of size $\bar{m}$ from the local dataset $\sS_i$ without replacement
		\State Update the local representation function $\vb_i^{t, s+1} := \vb_i^{t, s} - \eta_l\cdot \partial_\vb l([\vb_i^{t,s}, \vw_i^{t+1}]; \sS_i^s)$.
		\EndFor
		\State [Phase 3: Summarize the local update direction.] \Return $\vg_i^{t} := \vb_i^{t, T_l} - \vb^t$.
		\EndProcedure
	\end{algorithmic}
	\label{algorithm_client_general}
\end{algorithm}
In this section we present the proposed \DPFEDREP\ method to solve the FRL problem in \eqref{eqn_federated_representation_learning_objective}.

\setlength{\intextsep}{1pt}%
\begin{wrapfigure}{r}{0.37\textwidth}
\centering
\raisebox{0pt}[\dimexpr\height-0.2\baselineskip\relax]{\includegraphics[width=0.37\textwidth]{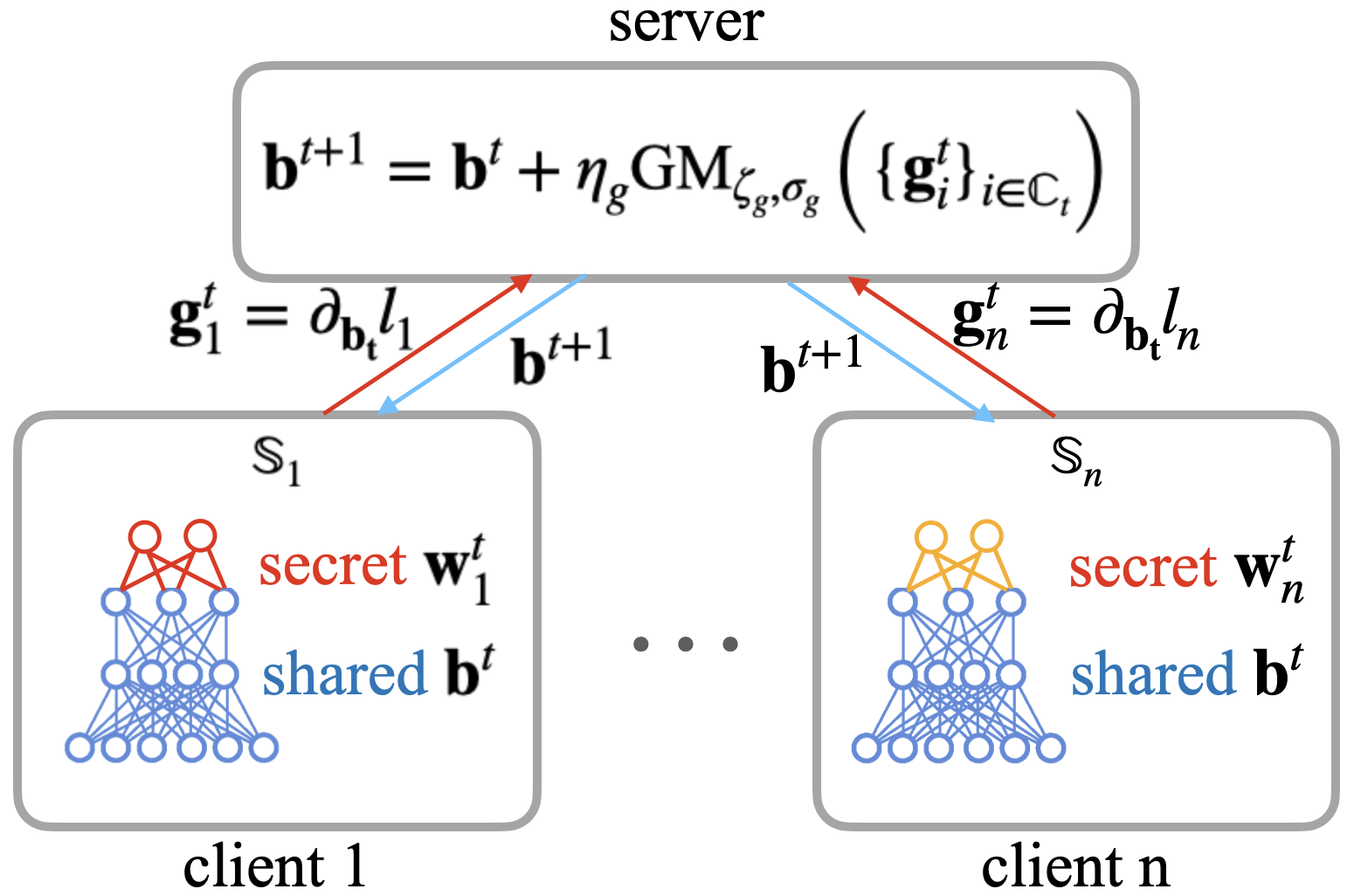}}
\vspace*{-6mm}
\caption{The $t$-th global round of the \DPFEDREP\ algorithm, where clients keep their classification head $\vw_i^t$ secret while updating shared representation $\vb^t\rightarrow \vb^{t+1}$ based on perturbed gradients $\vg_i^t$ from sampled clients $i\in\mathbb{C}_t$.}
\label{fig:algorithm}
\end{wrapfigure} 
\noindent{\bf\textsc{Server} procedure (Algorithm~\ref{algorithm_server})} takes the following quantities as inputs: $\vb^0$ denotes the initializer for the parameter of the global representation function, obtained from a procedure \textsc{Initialization}; $p_g$ denotes the portion of the clients that will participate in training per global communication round; $T_g$ denotes the total number of global communication rounds; $\eta_g$ denotes the global update step size; $(\sigma_g, \zeta_g)$ stand for the noise multiplier and the clipping threshold of the Gaussian mechanism (that ensures user-level RDP).
Note that in this section, we consider random \textsc{Initialization} over unconstrained space $\sB = \sR^{q_2}$, and the procedure $\textsc{Aggregation}(\vb^t, \vg^t, \eta_g) = \vb^t + \eta_g \cdot \vg^t.$
Under these configurations, \textsc{Server} follows the standard FL protocol: After broadcasting the current global representation function to the activate clients, it aggregates the information returned from the \textsc{Client} procedure to update the global representation function.

\noindent{\bf\textsc{Client} procedure (Algorithm~\ref{algorithm_client_general})} takes the following quantities as inputs: $\vb^t$ denotes the parameter of the global representation function received from the server; $\bar{m}$ denotes the number of local data points used as mini-batch to update the local representation function; $T_l$ denotes the number of local update iterations; $\eta_l$ denotes the local update step size. 
\textsc{Client} can be divided into three phases: 1. After receiving the current global parameter $\vb^{t}$ of the representation function from the server, the client update the local classifier head to the minimizer of the local objective $\vw^{t+1}_i = \argmin_\vw l([\vb^{t}, \vw]; \sS_i)$. This is possible since the local objective $l$ usually admits very simple structure, e.g. it is convex w.r.t. $\vw$, once the representation function is fixed. In practice, we would run multiple SGD epochs on $\vw$ to approximately minimize $l$ with $\vb^t$ fixed. This is computationally cheap since the dimension of $\vw$ is much smaller compared to the whole variable $\vtheta = [\vw, \vb]$. 2. Once the local classifier head is updated to $\vw^{t+1}_i$, the client optimizes its local representation function with multiple SGD steps, starting from the current global consensus $\vb^{t}$.
3. Finally, each client calculate its local update direction for representation via the difference between its latest local representation function and the previous global consensus, $\vb_i^{t, T_l} - \vb^t$.  

\begin{remark}[Privacy guarantee] \label{remark_privacy_general}
	By the composition theorem for subsampled Gaussian Mechanism~\cite{mironov2019r}, we prove that Algorithm~\ref{algorithm_server} and Algorithm~\ref{algorithm_client_general} satisfies user-level $(\alpha, \eps)$-R\'enyi differential privacy for $\eps = T_g\cdot S_{\alpha}(p_g,\sigma_g)$, where $S_\alpha(p_g, \sigma_g) = R_\alpha(\mathcal{N}(0,\sigma_g^2)\lVert \mathcal{N}(0, \sigma_g^2) + p_g\cdot \mathcal{N}(1, \sigma_g^2))$. In the special case of full gradient descent with $p_g=1$, we have that $\eps = {\alpha\cdot T_g}/({2\sigma_g^2})$.
\end{remark}

\section{Guaranteed Improvement of the Utility-Privacy Trade-off} \label{section_LRL}
In the previous section, we present \DPFEDREP\ for the general FRL problems (\ref{eqn_federated_representation_learning_objective}). Due to the lack of structure information, for a general non-convex optimization problem we cannot expect any utility guarantee beyond the convergence to a stationary point.
In this section, we consider a specific instance of the FRL problems where both the representation function $\phi$ and the local classifiers $h$ are \emph{linear} w.r.t. their parameters $\vb$ and $\vw_i$.
This model has been commonly used in the analysis of representation learning~\citep{collins2021exploiting,collins2022fedavg}, meta learning~\citep{tripuraneni2021provable,du2020few,thekumparampil2021statistically,sun2021towards}, model personalization \citep{jain2021differentially}, multi-task learning \citep{maurer2016benefit}.
For this (still nonconvex) instance, we prove that \DPFEDREP\ converges to a ball centered around the \emph{global minimizer} in a linear rate where the size of the ball depends on the required RDP parameters ($\alpha, \epsilon$). 
Let $\epsilon_a$ be the utility and let $\epsilon_{dp}$ be the DP-parameter (\DPFEDREP\ is an $(\epsilon_{dp}, \delta)$-DP mechanism). We obtain the improved privacy-utility tradeoff $\epsilon_a \cdot \epsilon_{dp} \geq O({d}/n)$, which is $O(\sqrt d)$ better than the current SOTA result, $\epsilon_a \cdot \epsilon_{dp} \geq O(d^{1.5}/n)$, by \citep{jain2021differentially}.
In the following, we will first review objective (\ref{eqn_federated_representation_learning_objective}) in the linear representation setting, and analyse \DPFEDREP\ to show the improved utility-privacy tradeoff. 

\noindent{\bf Federated Linear Representation Learning (LRL)}
Recall the problem formulation in Section \ref{section_problem_formulation_general}. 
For simplicity, we assume $m_i = m$ for all $i\in\{1, \dots, n \}$ for some constant $m$.
Consider the LRL setting, where given the input $\vx_{ij}$ the response $y_{ij} \in\sR$ satisfies
$y_{ij} = {\vw_i^*}^\top {\mB^*}^\top \vx_{ij}$.~\footnote{\rebuttal{A similar problem is considered in \citep{jain2021differentially}, but the measurement $y_{ij}$ suffers from an extra white noise with variance $\sigma_F$. In our paper, we consider the noiseless case and hence when comparing with \citep{jain2021differentially}, we treat the $\sigma_F = 0$ in their results for a fair comparison.}}
Here $\mB^{*}\in\sR^{d \times k}$ is a \emph{column orthonormal} global representation matrix and $\vw_i^*\in\sR^k$ is an agent-specific optimal local linear classifiers.
In terms of the notations, $\mB$ corresponds to the parameter $\vb$ in the general formulation (\ref{eqn_federated_representation_learning_objective}), but is capitalized since it is now regarded as a matrix. The feasible domain $\sB$ is the set of column orthonormal matrices $\sO_{d, k} = \{\mB\in\sR^{d\times k}| \mB^\top \mB = \mI_k \}$.\\
Given a local dataset $\sS$ (could be $\sS_i$ or its subset), define 
\begin{equation} \label{eqn_LRL_local_objective}
	l([\vw, \mB];   \sS) = \frac{1}{|\sS|} \sum_{(\vx, y) \in \sS} \frac{1}{2}(\vw^\top \mB^\top \vx - y)^2.
\end{equation}
Our goal is to recover the ground truth representation matrix $\mB^*$ using the collection of the local datasets $\{\sS_i\}_{i=1}^n$ via solving the following linear instance of the FRL problem \eqref{eqn_federated_representation_learning_objective}
\begin{equation} \label{eqn_RLP} \tag{L-FRL}
	\min_{\mB^\top \mB = \mI_k } \frac{1}{n}\sum_{i=1}^{n} \min_{\vw_i} l([\vw_i, \mB]; \sS_i)
\end{equation}
in a federated and DP manner. Here, \eqref{eqn_RLP} is an instance of the general FRL problem (\ref{eqn_federated_representation_learning_objective}) with $h$ and $\phi$ set to linear functions and $\ell$ set to the least square loss.
Note that, despite the (relatively) simple structure of \eqref{eqn_RLP}, it is still \emph{non-convex} w.r.t. the variable $\mB$.

\begin{algorithm}[t]
	\begin{algorithmic}[1]
		\Procedure{Initilization}{$T_0, \epsilon_i, L_0, \sigma_0, \zeta_0$}	
		\State Run $T_0$ independent copies of $\PPM(L, \sigma_0, \zeta_0)$ to obtain $T_0$ candidates $\{\mB_c\}_{c=1}^{T_0}$;
		\State // \PPM\ stands for private power method and is presented in Algorithm \ref{alg_PPM} in the appendix.
		\State // Boost this probability to $1 - n^{-k}$ via cross validation.
		\State Find $\hat c$ in $[T_0]$ such that for at least half of $c\in [T_0]$, $s_{i}(\mB_c^\top \mB_{\hat{c}}) \geq 1 - 2\epsilon_i^2, \forall i \in [k]$.
		\State \rebuttal{// $s_i(\cdot)$ denotes the $i^{th}$ singular value of a matrix.}
		\State \Return $\mB_{\hat{c}}$.
		\EndProcedure
	\end{algorithmic}
	\caption{\textsc{Initialization} procedure for \DPFEDREP\ in the LRL case.}
	\label{algorithm_initialization}
\end{algorithm}
\begin{algorithm}[t]
	\begin{algorithmic}[1]
		\Procedure{\textsc{Client}}{$\mB^t$, $\bar m$}
		\State Sample without replacement two subsets $\sS_i^{t, 1}$ and $\sS_i^{t, 2}$ from $\sS_i$ both with cardinality $\bar m$. \label{line_sample_s}
		\State [Phase 1:] Update the local head $\vw_{i}^{t+1} = \argmin_{\vw\in\RBB^{k}} l(\vw, \mB^t; \sS_i^{t, 1}).$
		\State [Phase 2:] Compute the local gradient of the representation $\mG_i^{t} =  \partial_{\mB} l([\vw_{i}^{t+1}, \mB^t]; \sS_i^{t, 2})$;  \label{line_g_i^t}
		\State [Phase 3:] \Return $\mG_i^{t}$ \label{line_GM_LRL}
		\EndProcedure
	\end{algorithmic}
	\caption{\textsc{Client} procedure for \DPFEDREP\ in the LRL case}
	\label{algorithm_client_LRL}
\end{algorithm}

\noindent{\bf Methodology for the LRL case.}
To establish our novel privacy and utility guarantees, we need to specify \textsc{Initialization} and \textsc{Aggregation} in the procedures \textsc{Server} and we also need to slightly modify the \textsc{Client} procedure, which are elaborated as follows.\\
1. To obtain the novel guarantee of converging to the global minimizer for the LRL case, the initializer $\mB^0$ needs to be within a constant distance to the global optimal $\mB^*$ which requires a more sophisticated procedure than the simple random initialization. 
We show that this requirement can be ensured by running a modified instance of the private power method (\PPM) by \citep{hardt2014noisy}, but the utility guarantee only holds \emph{with a constant probability} (Lemma \ref{lemma_utility_ppm}). 
A key contribution of our work is to propose a novel \emph{cross-validation} scheme to boost the success probability of the \PPM\ with a small extra cost. 
Our scheme only takes as input the outputs of independent \PPM\ trials, and hence can be treated as post-processing, which is free of privacy risk (Lemma \ref{lemma_cv}).
The proposed \textsc{Initialization} procedure is presented in Algorithm \ref{algorithm_initialization}.
The analyses are deferred to Appendix \ref{appendix_initialization}.\\
2. As discussed earlier, the feasible domain $\sB$ is the set of column orthonormal matrices. In order to ensure the feasibility of $\mB^{t+1}$, we set \rebuttal{$\textsc{Aggregation}(\mB^t, \mG^{t}, \eta_g) = \QR{\mB^t - \eta_g \cdot \mG^{t}}$},
where $\QR{\cdot}$ denotes the QR decomposition and only returns the $Q$ matrix.\\
3.
We make a small modification in line \ref{line_sample_s} where two subsets $\sS_i^{t, 1}$ and $\sS_i^{t, 2}$ of the local dataset are sampled \emph{without replacement} from $\sS_i$ and are used to replace $\sS_i$ in Phases 1 and 2. This change is required to establish our novel utility result, which ensures that clipping threshold of the Gaussian mechanism (line \ref{line_GM}) in the \textsc{Server} procedure is never reached with a high probability (Lemma~\ref{lemma_zeta_g}).

\subsection{Analysis of \DPFEDREP\ in the LRL setting}

Use $s_i(\mA)$ to denote the $i$th largest singular value of a matrix $\mA$. 
Let $\mW^* \in \sR^{n \times k}$ be the collection of the local optimal classifier heads with $\mW^*_{i, :} = \vw^*_i$.
We use $s_i$ as a shorthand for $s_i(\mW^*/\sqrt{n})$ and use $\kappa = s_1/s_k$ to denote the condition number of the problem.
We choose the scaling $1/\sqrt{n}$ such that $s_i$ remains meaningful as $n\rightarrow\infty$.
We make the following assumptions.
\begin{assumption} \label{ass_x_ij}
	$\vx_{ij}$ is zero mean, $\mI_d$-covariance, $1$-\subgaussian\ (defined in Appendix \ref{appendix_subgaussian}).
\end{assumption}
\begin{assumption}\label{ass_incoherence}
	There exists a constant
	$\mu > 0$ such that $\max_{i\in[\nusers]} \|\vw^*_{i}\|_2 \leq \mu \sqrt{k}s_k$.
\end{assumption}
\begin{assumption} \label{ass_m}
    The number of local data points is sufficiently large: $m \geq \tilde c_{ld} \max(k^2, k^4d / n)$. Here $\tilde c_{ld}$ hides the dependence on $\kappa$, $\mu$, and the log factors.
\end{assumption}
These assumptions are standard in literature \citep{collins2021exploiting,jain2021differentially}.
\rebuttal{An elaborated discussion is provided in Appendix \ref{appendix_assumption_discussion}.}
While Problem (\ref{eqn_RLP}) is non-convex, we show that the consensus representation $\mB^t$ converges to a ball centered around the ground truth solution $\mB^*$ in a linear rate. The size of the ball is controlled by the noise multiplier $\sigma_g$, which will be the free parameter that controls the utility-privacy tradeoff. 

\noindent{\bf High level idea.} For Problem (\ref{eqn_RLP}), we find an initial point close to the ground truth solution via the method of moments. 
Given this strong initialization, \DPFEDREP\ converges linearly to the vicinity of the ground truth since it can be interpreted as an inexact gradient descent method with a fast decreasing bias term. 
One caveat that requires particular attention is the clipping step in the Gaussian mechanism (line \ref{line_GM_LRL} in Algorithm \ref{algorithm_client_LRL}) will destroy the above interpretation if the threshold parameter $\zeta_g$ is set too small.
To resolve this, we set $\zeta_g$ to be a high probability upper bound of $\|\mG_i^{t}\|_F$ so that the clipping step only takes effect with a negligible probability. \\
In the utility analysis of the LRL case, we use the principal angle distance to measure the convergence of the column-orthonormal variable $\mB$ towards the ground truth $\mB^*$. We also refer to this quantity as the \emph{utility} of the algorithm. Let $\mB_\perp$ be the orthogonal complement of $\mB$. We define
\begin{equation*}
	\dist(\mB, \mB^*) := \|(\mI_{\dimension} - \mB\mB^\top) \mB^*\|_2 = \|\mB_\perp^\top \mB^*\|_2 \rebuttal{\leq 1}.
\end{equation*}
To simplify the presentation of the result, we make the following assumptions: The dimension of the input is sufficiently large $d \geq \kappa^8 k^3 \log n$ and the number of clients is sufficiently large $n \geq m$.
The proof of the following theorem can be find in Appendix \ref{appendix_utility}.
\begin{theorem}[Utility Analysis] \label{thm_utility}
	Consider the instance of \DPFEDREP\ for the LRL setting with its \textsc{Client} and \textsc{Initialization} procedures defined in Algorithms \ref{algorithm_client_LRL} and \ref{algorithm_initialization} respectively.
	Suppose that the matrix $\mB^0$ returned by \textsc{Initialization} satisfies $\dist(\mB^0, \mB^*) \leq \epsilon_0 = 0.2$, and suppose that the mini-batch size parameter $\bar m$ satisfies 
	$\bar m \geq c_m\max\{\kappa^2 k^2 \log n, {k^2d}/{n}\}.$
	Set the clipping threshold of the Gaussian mechanism $\zeta_g = c_\zeta \mu^2 k s_k^2 \sqrt{dk\log n}$, the global step size $\eta_g = 1/4s_1^2$, the number of global rounds $T_g = c_T\kappa^2 \log (\kappa \eta_g \zeta_g \sigma_g d/n ) $.
	Assuming that the noise multiplier in the Gaussian mechanism is sufficiently small\footnote{The intuition behind this requirement is that our convergence analysis requires the iterates to stay within a ball centered around the ground truth, with a constant radius (measured in terms of the principal angle distance). Adding a large noise will break this argument. Similar requirements are made in ~\citet{tripuraneni2021provable}.}:
	$\sigma_g \leq {c_\sigma n \kappa^4}/({\mu^2{d} \sqrt{k \log n}})$.
	Let $c_m$, $c_\zeta$, $c_T$, $c_\sigma$, $c_p$ and $c_d$ be universal constants.
	We have with probability at least $1 - c_p \bar m T_g \cdot n^{-k}$, 
	$\dist(\mB^{T_g}, \mB^*) \leq c_d\kappa^2{\eta_g\sigma_g\zeta_g}\sqrt{{\dimension}}/n = \tilde c_d \sigma_g \mu^2 k^{1.5} d/n$.~\footnote{\rebuttal{Note that \citet{jain2021differentially} use the Frobenius norm (instead of the spectral norm) of $\mB_\perp^\top \mB^* \in \sR^{d\times k}$ as the optimality metric. However, since $\rank(\mB_\perp^\top \mB^*) \leq k$, we can always bound $\|\mB_\perp^\top \mB^*\|_F \leq \sqrt{k} \dist(\mB, \mB^*)$. With this extra factor, Theorem \ref{thm_utility} quadratically depends on $k$, same as Lemma 4.4 in \citep{jain2021differentially}, while the dependency on $d$ is substantially reduced, from $d^{1.5}$ to $d$.}}
\end{theorem}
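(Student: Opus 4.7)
The plan is to view \DPFEDREP\ (in the LRL setting) as a noisy projected gradient descent on the column-orthonormal manifold and track the principal angle distance $\dist(\mB^t, \mB^*)$ across rounds. Concretely, one round performs $\mB^{t+1} = \QR{\mB^t - \eta_g \mG^t}$ where $\mG^t = \gm_{\zeta_g, \sigma_g}(\{\mG_i^t\}_{i\in \sC^t})$. I would first decompose $\mG^t$ as (i) an ``ideal'' population gradient whose noiseless contraction properties are well-studied in \citet{collins2021exploiting,jain2021differentially}, (ii) a statistical fluctuation coming from finite-sample estimation of both the optimal head $\vw_i^{t+1}$ and of the expectation defining $\partial_\mB l$, (iii) a clipping bias, and (iv) the Gaussian noise. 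The goal is to absorb (ii)--(iii) into the contraction term and isolate (iv) as the ``radius of the ball'' appearing in the final bound.

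The technical core is the per-round improvement lemma. First I would use the strong initialization $\dist(\mB^0,\mB^*) \le 0.2$ plus the local head update in Phase 1 of Algorithm~\ref{algorithm_client_LRL} to argue that $\vw_i^{t+1}$ is a good surrogate for $\vw_i^*$ whenever $\mB^t$ is close to $\mB^*$; the independence of the two subsamples $\sS_i^{t,1}, \sS_i^{t,2}$ (line~\ref{line_sample_s}) is what enables a clean conditional argument. Next, using the \subgaussian\ assumption on $\vx_{ij}$ and the incoherence bound $\|\vw_i^*\|_2 \le \mu\sqrt{k}s_k$, I would derive two concentration results: (a) the averaged gradient $\frac1n\sum_i \mG_i^t$ is close, in spectral norm, to the population expression whose leading part is $\mB^t \mW^{*\top}\mW^*/n$ minus a cross term involving $\mB^*$, yielding the usual $(1-\Omega(1/\kappa^2))$-contraction in $\dist(\cdot, \mB^*)$ after the QR step, and (b) each $\|\mG_i^t\|_F$ is bounded by $c_\zeta \mu^2 k s_k^2\sqrt{dk\log n}$ with probability at least $1 - n^{-k}$. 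Claim (b) is exactly what is needed to set $\zeta_g$ so that \emph{no} sample is clipped in any round; a union bound over $\bar m T_g$ sampling events gives the $1 - c_p \bar m T_g n^{-k}$ failure probability announced in the theorem.

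With clipping effectively inactive, the Gaussian mechanism contributes an additive term $(\eta_g \sigma_g \zeta_g / n)\,W$, $W \sim \mathcal N(0,\mI)$. Using standard bounds on the spectral norm of a $d\times k$ Gaussian with high probability ($O(\sqrt{d})$), and the first-order perturbation relation
\begin{equation*}
\dist(\QR{\mB^t - \eta_g \mG^t}, \mB^*) \;\le\; (1-c/\kappa^2)\,\dist(\mB^t, \mB^*) \;+\; O\!\left(\tfrac{\eta_g \sigma_g \zeta_g \sqrt{d}}{n}\right),
\end{equation*}
I would unroll the recursion. The smallness condition $\sigma_g \le c_\sigma n \kappa^4 / (\mu^2 d\sqrt{k\log n})$ ensures the fixed point of this recursion is smaller than the initial radius $\epsilon_0 = 0.2$, so the iterates never escape the basin in which the contraction argument is valid; this is the induction that justifies re-using claims (a)--(b) at every round. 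Choosing $T_g = c_T \kappa^2 \log(\kappa \eta_g \zeta_g \sigma_g d/n)$ drives the transient term below the noise floor, leaving the residual $c_d \kappa^2 \eta_g \sigma_g \zeta_g \sqrt d / n$, which after substituting $\eta_g = 1/(4s_1^2)$ and $\zeta_g$ simplifies to $\tilde c_d\, \sigma_g \mu^2 k^{1.5} d/n$.

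The main obstacle will be the rigorous control of the per-round bias when the head $\vw_i^{t+1}$ is estimated from only $\bar m$ samples and then re-used to form a gradient on a disjoint $\bar m$ samples: the cross-moments of $\vx$ that appear make $\mG_i^t$ a sum of products of dependent subgaussian quantities, so establishing claim (b) with the right $\sqrt{dk\log n}$ scaling (rather than a looser $d$ or $d\log n$ bound that would inflate $\zeta_g$ and ruin the $\sqrt d$ improvement over \citep{jain2021differentially}) requires careful Hanson--Wright-style concentration together with the lower bound $\bar m \gtrsim \max\{\kappa^2 k^2 \log n, k^2 d/n\}$. Once that high-probability envelope on $\|\mG_i^t\|_F$ is in hand, the rest of the argument is a fairly standard noisy-power-method-style induction on $\dist(\mB^t, \mB^*)$.
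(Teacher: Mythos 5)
Your plan follows essentially the same route as the paper's own proof: show that the clipping step in the Gaussian mechanism is inactive with high probability so that each round reduces to noisy inexact gradient descent on $\sO_{d,k}$, establish a one-step contraction $\dist(\mB^{t+1},\mB^*) \le \rho\,\dist(\mB^t,\mB^*) + O(\eta_g\sigma_g\zeta_g\sqrt{d}/n)$ with $\rho = \sqrt{1-\Theta(1/\kappa^2)}$, unroll, and use the smallness condition on $\sigma_g$ to keep the iterates in the basin by induction. The paper does exactly this via Lemmas~\ref{lemma_zeta_g} and~\ref{lemma_contraction_utility}. Two points of divergence are worth noting. First, for the envelope $\|\mG_i^t\|_F \le \zeta_g$ you suggest a Hanson--Wright-style argument; the paper takes a simpler route, writing $\mG_i^t$ as an average of rank-one terms $(\langle \mB^t\vx_{ij},\vw_i^{t+1}\rangle - y_{ij})\,\vx_{ij}{\vw_i^{t+1}}^\top$, bounding each scalar/vector factor separately by subgaussian tails (giving $\zeta_y,\zeta_1 = O(\mu s_k k\sqrt{\log n})$, $\zeta_x = O(\sqrt{d})$, $\zeta_w = O(\mu\sqrt{k}s_k)$) and combining by triangle inequality plus a union bound over $\bar m T_g$ events. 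No concentration across the $\bar m$ samples is needed, so Hanson--Wright is heavier machinery than required, and the tighter per-sample bound you worry about comes for free from this factorization. Second, your sketch does not surface the specific technical backbone the paper uses inside the contraction step: the noisy-power-iteration expression for $\mV^{t+1}$ (Lemma~\ref{lemma_noisy_power_iteration}), three RIP-type conditions on the subsampled sensing operators (Lemmas~\ref{condition_1}--\ref{condition_3}, whose validity is precisely what the lower bound on $\bar m$ buys), and the separate bound on $\|(\mR^{t+1})^{-1}\|$ after QR (Lemma~\ref{lemma_bound_R_sigma_min}), which is where the upper bound on $\sigma_g$ is actually consumed. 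These are the pieces that make ``claim (a)'' rigorous, and they are nontrivial to supply; but they are consistent with the framework you describe, so your outline is a correct (if coarser-grained) description of the paper's argument.
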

Since the \textsc{Server} procedure remains exactly the same as Algorithm \ref{algorithm_server} in the LRL case, the main body (anything after \textsc{Initialization}) of the resulting \DPFEDREP\ instance has the same privacy guarantee as described in remark \ref{remark_privacy_general}. However, we still need to account for the privacy leakage of the \textsc{Initialization} procedure in Algorithm \ref{algorithm_initialization}  as it is data-dependent. This will be deferred to Appendix \ref{appendix_initialization}, where we show that Algorithm \ref{algorithm_initialization} is an $(\alpha, \epsilon_{init})$-RDP mechanism, with $\epsilon_{init}$ defined in Corollary \ref{corollary_cv}.
Combining this fact with the RDP analysis for the main body leads to the following privacy guarantee for \DPFEDREP\ in the LRL case (see Appendix \ref{appendix_proof_of_thm_privacy_overall}).
\begin{theorem}[Privacy Bound] \label{thm_privacy_overall}
	Consider the instance of \DPFEDREP\ with its \textsc{Client} procedure defined in Algorithm \ref{algorithm_client_LRL} and its \textsc{Initialization} procedure defined in Algorithm \ref{algorithm_initialization}.
	Suppose that the \textsc{Initialization} procedure is an $(\alpha, \epsilon_{init})$-RDP mechanism (proved in Corollary \ref{corollary_cv}).
	Let $\sigma_g$, the noise multiple in the \textsc{Client} procedure, be a free parameter that controls the privacy-utility trade-off.
	By setting the inputs to \textsc{Server} as $p_g = 1$, $T_g = c_T\kappa^2 \log (\kappa \eta_g \zeta_g \sigma_g d/n)$, 
	the instance of \DPFEDREP\ under consideration is an $(\alpha, \epsilon_{init} + \epsilon_{rdp}/2)$-RDP mechanism, where $\epsilon_{rdp} = {4\alpha T_g}/{\sigma_g^2}$.
	Moreover, when $\sigma_g = \tilde O({n}/({k^4 \mu^2 d}))$, we have $\epsilon_{init} \leq \epsilon_{rdp}/2$, in which case \DPFEDREP\ is an $(\alpha, \epsilon_{rdp})$-RDP mechanism and is also an $(\epsilon_{dp}, \delta)$-DP mechanism with $\epsilon_{dp} = 2 \sqrt{T_g \log (1/\delta)}/\sigma_g$.
\end{theorem}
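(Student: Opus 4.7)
My plan is to decompose the algorithm into two data-dependent phases---the \textsc{Initialization} procedure and the main training loop (\textsc{Server}+\textsc{Client})---and account for their privacy costs separately before combining them via sequential composition of \renyi\ Differential Privacy (RDP). Under the billboard model, the only quantities released to potentially adversarial parties are the intermediate global representations $\{\mB^t\}_{t=0}^{T_g}$; each user's personalized head $\vw_i^{t+1}$ is computed locally from the broadcast $\mB^t$ and that user's own private data, so from the user's viewpoint it is post-processing and adds nothing to the user-level privacy loss.

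For the main loop, I would first analyze a single round. Line~\ref{line_GM_LRL} of the \textsc{Client} procedure returns an unclipped per-user gradient $\mG_i^t$, and line~\ref{line_GM} of the \textsc{Server} procedure clips each contribution to $\ell_2$-norm $\zeta_g$, aggregates, adds Gaussian noise $\sigma_g \zeta_g W$ with $W \sim \mathcal{N}(0, \mathbb{I})$, and rescales by $1/|\sC^t|$. Because clipping caps each user's contribution to the sum at $\zeta_g$, the user-level replace-one sensitivity of the (unnormalized) sum is at most $\Delta = 2\zeta_g$, so the standard Gaussian-mechanism RDP bound (with $p_g=1$, no subsampling amplification) gives a per-round cost of $\alpha \Delta^2/(2\sigma_g^2 \zeta_g^2) = 2\alpha/\sigma_g^2$; the subsequent QR-based update $\mB^{t+1} = \QR{\mB^t - \eta_g \vg^t}$ is a deterministic function of $\mB^t$ and the noisy aggregate, hence post-processing. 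Adding over $T_g$ rounds by additivity of RDP composition yields the main-loop cost
\[
\epsilon_{\mathrm{main}} \;=\; \frac{2\alpha T_g}{\sigma_g^2} \;=\; \frac{\epsilon_{rdp}}{2}.
\]

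The \textsc{Initialization} procedure is already $(\alpha, \epsilon_{init})$-RDP by Corollary~\ref{corollary_cv}. Since Initialization and the main loop act sequentially on the same local datasets, the adaptive composition rule for RDP (additivity in $\epsilon$) gives the first claim: the full mechanism is $(\alpha, \epsilon_{init} + \epsilon_{rdp}/2)$-RDP. For the second claim, I would substitute the stated noise level $\sigma_g = \tilde O(n/(k^4 \mu^2 d))$ into the explicit expression for $\epsilon_{init}$ from Corollary~\ref{corollary_cv} and verify by direct computation that it is then dominated by $\epsilon_{rdp}/2 = 2\alpha T_g/\sigma_g^2$; the two contributions are thereby balanced and the total is at most $\epsilon_{rdp}$. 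The DP conclusion then follows from the standard RDP-to-DP conversion recalled in Appendix~\ref{appendix_privacy_preliminary}: optimizing $\alpha$ in $\epsilon_{rdp} + \log(1/\delta)/(\alpha - 1)$ with $\epsilon_{rdp} = 4\alpha T_g/\sigma_g^2$ produces $\epsilon_{dp} = 2\sqrt{T_g \log(1/\delta)}/\sigma_g$ up to the stated constants.

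The only genuinely non-routine step is verifying $\epsilon_{init} \leq \epsilon_{rdp}/2$ under the stated $\sigma_g$ regime; the rest is standard RDP bookkeeping. This bound must absorb the noise budget of every \PPM\ call inside \textsc{Initialization} together with the subsequent cross-validation phase (which is pure post-processing of the \PPM\ outputs and therefore costs nothing extra), and it is precisely the need to match these initialization costs to the main-loop noise that dictates the condition on $\sigma_g$ appearing in the theorem.
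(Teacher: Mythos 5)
Your proposal matches the paper's argument: you compute a per-round RDP cost of $2\alpha/\sigma_g^2$ from the Gaussian mechanism (via Lemma~\ref{lemma_gaussian_mechanism}, using user-level sensitivity $2\zeta_g$ against noise standard deviation $\sigma_g\zeta_g$, the $1/n$ rescaling being immaterial), compose over $T_g$ rounds to obtain $\epsilon_{rdp}/2$, add $\epsilon_{init}$ by sequential RDP composition, and convert to DP by optimizing $\alpha$ in Theorem~\ref{lemma_rdp_to_dp}. Your write-up is actually more complete than the paper's terse proof in Appendix~\ref{appendix_proof_of_thm_privacy_overall}, which stops at the RDP bound and leaves the $\epsilon_{init}\leq\epsilon_{rdp}/2$ check and the RDP-to-DP optimization implicit; your observations that the QR step and the locally computed $\vw_i^{t+1}$ are post-processing, and that the cross-validation in \textsc{Initialization} costs nothing beyond the \PPM\ calls, are correct and worth making explicit.
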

\noindent\textbf{Overall Utility-Privacy Trade-off} 
We now combine the utility and privacy analyses of \DPFEDREP\ in the LRL setting to obtain the overall utility-privacy trade-off in the following sense:
According to Theorem \ref{thm_utility}, to achieve a high utility, i.e. a small $\epsilon_a$, we need to choose a small noise multiplier $\sigma_g$ while Theorem \ref{thm_privacy_overall} states that the smaller $\sigma_g$ is, the larger the privacy cost.
\begin{corollary} \label{corollary_tradeoff}
    \rebuttal{
	Use $\epsilon_a$ to denote a target utility, i.e. $\dist(\mB^T, \mB^*) \leq \epsilon_a$ where $\mB^T$ is the output of \DPFEDREP\ and use $\epsilon_{dp}$ to denote a privacy budget, i.e. \DPFEDREP\ is an $(\epsilon_{dp}, \delta)$-DP mechanism.
	Suppose that $\epsilon_{dp} \geq \tilde c_t'\mu^2 d \sqrt{k}/ (\kappa^3 n)$, which is a restriction due to the requirement on $\sigma_g$ in Theorem \ref{thm_utility}.
	Under Assumptions \ref{ass_x_ij} to \ref{ass_m}, CENTUAR outputs a solution that provably achieves the $\epsilon_a$ utility within the $\epsilon_{dp}$ budget, under the condition that the tuple $(\epsilon_a, \epsilon_{dp})$ satisfies
	${\tilde c_t \kappa k^{1.5} \mu^2d}/{n}  \leq \epsilon_a \cdot \epsilon_{dp}$,
	where $\tilde c_t$ and $\tilde c_t'$ hide the constants and $\log$ terms.
	}
\end{corollary}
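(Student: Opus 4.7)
The plan is to treat $\sigma_g$ as the pivot variable that couples the utility guarantee of Theorem \ref{thm_utility} with the privacy guarantee of Theorem \ref{thm_privacy_overall}, then eliminate it to obtain a direct inequality between $\epsilon_a$ and $\epsilon_{dp}$. First, I invoke Theorem \ref{thm_utility}: under Assumptions \ref{ass_x_ij}--\ref{ass_m} together with the stated choices of $\zeta_g$, $\eta_g$, and $T_g$, the output $\mB^{T_g}$ satisfies $\dist(\mB^{T_g},\mB^*) \leq \tilde c_d\, \sigma_g \mu^2 k^{1.5} d/n$. To reach a target utility $\epsilon_a$, it therefore suffices to pick $\sigma_g$ so that $\tilde c_d\, \sigma_g \mu^2 k^{1.5} d/n \leq \epsilon_a$, i.e. any $\sigma_g \leq \epsilon_a n / (\tilde c_d \mu^2 k^{1.5} d)$ works.

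Next, I invoke Theorem \ref{thm_privacy_overall}, which in the regime of interest yields an $(\epsilon_{dp},\delta)$-DP guarantee with $\epsilon_{dp} = 2\sqrt{T_g\log(1/\delta)}/\sigma_g$. Substituting the largest admissible $\sigma_g$ from the utility side gives
\begin{equation*}
\epsilon_{dp} \;\geq\; \frac{2\sqrt{T_g\log(1/\delta)}\,\tilde c_d \mu^2 k^{1.5} d}{\epsilon_a\, n},
\end{equation*}
which rearranges to $\epsilon_a\cdot\epsilon_{dp} \geq \tilde c_t\, \kappa k^{1.5}\mu^2 d/n$ once I absorb $\sqrt{T_g} = O(\kappa\sqrt{\log(\cdot)})$ and the $\sqrt{\log(1/\delta)}$ factor into the hidden constant $\tilde c_t$. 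This is exactly the claimed trade-off.

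The remaining task is to justify the admissibility condition on $\epsilon_{dp}$. The utility theorem requires $\sigma_g \leq c_\sigma n \kappa^4 / (\mu^2 d \sqrt{k\log n})$ so that the iterates remain within the constant-radius basin of the ground truth. Inverting this bound through $\epsilon_{dp} = 2\sqrt{T_g\log(1/\delta)}/\sigma_g$ and again using $\sqrt{T_g} = O(\kappa)$ produces the matching lower bound $\epsilon_{dp} \geq \tilde c_t' \mu^2 d\sqrt{k}/(\kappa^3 n)$ stated in the corollary; below this threshold the utility theorem simply does not apply, and so the regime is naturally excluded.

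The proof is mostly algebraic substitution, so I do not anticipate a genuine obstacle beyond bookkeeping. The one subtlety worth flagging is that $T_g$ itself depends on $\sigma_g$ through a logarithmic term $\log(\kappa\eta_g\zeta_g\sigma_g d/n)$; I will handle this by observing that substituting any polynomial $\sigma_g$ of the problem parameters only changes $T_g$ by a $\log$ factor, which is absorbed into $\tilde c_t$. Aside from this, I must confirm that the privacy accounting in Theorem \ref{thm_privacy_overall} already includes the \textsc{Initialization} cost $\epsilon_{init}$ in the regime $\sigma_g = \tilde O(n/(k^4\mu^2 d))$, so that no additional budget charge arises when combining the two bounds.
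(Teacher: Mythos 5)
Your argument is correct and is exactly the intended derivation: Theorem~\ref{thm_utility} bounds utility as $\tilde c_d \sigma_g \mu^2 k^{1.5} d/n$, Theorem~\ref{thm_privacy_overall} gives $\epsilon_{dp} = 2\sqrt{T_g \log(1/\delta)}/\sigma_g$, and eliminating $\sigma_g$ (with $\sqrt{T_g} = O(\kappa\sqrt{\log(\cdot)})$ and $\sqrt{\log(1/\delta)}$ absorbed into $\tilde c_t$) yields the product bound, while inverting the $\sigma_g$-ceiling from Theorem~\ref{thm_utility} gives the admissibility lower bound on $\epsilon_{dp}$. The paper states the corollary without a separate written proof (the text before it just says to combine the two theorems), so your reconstruction is essentially the paper's argument, and your flagged subtleties — that $T_g$ has a mild $\log\sigma_g$ dependence absorbed by the tildes, and that the $\epsilon_{dp}$ formula already accounts for $\epsilon_{init}$ once $\sigma_g = \tilde O(n/(k^4\mu^2 d))$ — are exactly the right bookkeeping to verify.
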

\vspace{-3mm}
\rebuttal{
When focusing on the input dimension $d$ and the number of clients $n$ and treating other factors as constants, the restriction on $\epsilon_{dp}$ and the trade-off of the tuple $(\epsilon_a, \epsilon_{dp})$ can be simplified to $\epsilon_{dp} \geq \Theta(d/n)$ and $\Theta(d/n) \leq \epsilon_a \cdot \epsilon_{dp}$. Recall that in the previous SOTA result of the LRL setting \citep{jain2021differentially}, the restriction on the DP budget is $\epsilon_{dp} \geq \Theta(d^{1.5}/n)$ (point iii in Assumption 4.1 therein) and the utility-privacy tradeoff can be interpreted as $\Theta(d^{1.5}/n) \leq \epsilon_a \cdot \epsilon_{dp}$ (Lemma 4.4 therein). 
Hence, we obtain a $\Theta(\sqrt{d})$ improvement in both regards, which means that \DPFEDREP\ delivers the utility-privacy guarantees for a much wider range of combinations of $(\epsilon_a, \epsilon_{dp})$.
Please see an elaborated discussion of this result in Appendix \ref{appendix_tradeoff_discussion}.
}

\section{Experiments} \label{section_experiment}
\begin{table}[t]
    \small
    \centering
	\caption{Testing accuracy (\%) on CIFAR10/CIFAR100/EMNIST with various data allocation settings. No data augmentation is used for training the representations. $n$ stands for the number clients and $S$ stands for the number classes per client. The $\delta$ parameter of DP is fixed to $10^{-5}$ as a common choice in the literature. The budget parameter of DP is fixed to a small value of $1$ for results in this table, i.e. $\epsilon_{dp} = 1$.}
	\scalebox{0.95}{
	\begin{tabular}{c | c | c | c| c | c}
		\hline
		              Methods               & \makecell{\texttt{Stand-} \\\texttt{alone-no-FL}  }     & \DPFEDAVGFT  & \PMTLFT      & \PPSGD       & \DPFEDREP             \\ \hline
		\makecell{CIFAR10,\ \ n=500,\ \ S=5} & 50.03 (1.59)  &    46.17 (0.12) & 49.35 (0.44) & 48.69 (1.44) & \textbf{53.23 (0.92)} \\ 
		 \makecell{CIFAR10,\ \ n=1000,S=2}   & 74.06 (0.45) &    57.02 (1.36) & 67.79 (1.77) & 71.52 (2.30)   & \textbf{76.23 (0.48)} \\ 
		 \makecell{CIFAR10,\ \ n=1000,S=5}   & 44.60 (1.30) &    37.15 (1.22)  &   35.99 (1.79)   &      44.61 (2.68)        &   \textbf{49.92 (0.71)} \\ 
		 \makecell{CIFAR100,\ \ n=1000,S=5}   & 39.20 (1.12) & 22.17 (2.12) & 24.17 (1.43) & 32.97 (1.48) & \textbf{44.54 (1.05)} \\ 
		 \makecell{EMNIST,\ \ n=1000,S=5}   & 93.47 (0.14) & 91.32 (0.22) & 92.32 (0.22) & 93.44 (0.20) & \textbf{94.17 (0.19)} \\ 
		 \makecell{EMNIST,\ \ n=2000,S=5}   & 90.67 (0.46) & 86.85 (1.11) & 88.81 (2.08) & 88.96 (1.93) & \textbf{92.79 (0.25)} \\ \hline
	\end{tabular}
	}
	\label{table_utility_given_privacy}
\end{table}
\vspace{-3mm}
In this section, we present the empirical results that show the significant advantage of the proposed \DPFEDREP\ over previous arts. 
Four included comparison baselines are as follows: 
\begin{itemize}
    \item \LOCAL, which stands for local stand-alone training, where each user train a model on its local dataset (without differential privacy);
    \item  \DPFEDAVGFT, which stands for DP-FedAvg followed by local freeze-base non-private fine tuning \citep{yu2020salvaging}. That is, after DP-FedAvg training,  the classification head layers are further non-privately updated using individual user's private data (while all the base representation layers remain frozen);
    \item \PPSGD\ proposed by \citet{Bietti2022personalization}, which is equivalent to \DPFEDREP\ after replacing Line \ref{alg:step_local_classifier_update} with $\vw_i^{t+1} = \vw_t^i - \eta_l \partial_\vw l([\vb^{t}, \vw_i^t]; \sS_i).$ and setting $T_l=1$ (i.e., single update to the local classifier and representation layers per round); 
    \item \PMTLFT\ which stands for the differentially private multi-task learning algorithm proposed by \citet{hu2021private} followed by further fine tuning on local private dataset.
\end{itemize}
Note that \LOCAL\ does not involve any global communications, therefore no privacy mechanism is added to its implementation. This makes \LOCAL\ a strong baseline as the utility of all included differentially private competing methods are affected by gradient clipping and noise injection, especially when the DP budget is small, e.g. $\epsilon_{dp} = 1$. Another advantage of \LOCAL\ setting is that, the local stand-alone models are highly flexible, i.e. the model on one client and be completely different from the one on others. On the contrary, while the models of all other non-local methods share a common representation part, which takes up the major portion of the whole model.\\
We focus on the task of image classification and conduct experiments on three representative datasets, namely CIFAR10, CIFAR100, and EMNIST. In terms of architecture of the neural network, we use LeNet for CIFAR10/CIFAR100 and use MLP for EMNIST, as commonly used in the federated learning literature, the details of which are discussed in Appendix. In terms of data augmentation, we do not perform any data augmentation for training the representation, as we observe that classic data augmentation for DP training leads to worse accuracy, as also reported in~\citet{de2022unlocking}. We also tried a new type of data augmentation suggested by \citet{de2022unlocking}, which does not consistently improve the classification (validation) accuracy in our experiments neither.

\paragraph{\bf \DPFEDREP\ Has the Best Privacy Utility Trade-off.}
We first present the utility (testing accuracy) of models trained with \DPFEDREP\ and other baselines algorithms under a fixed small privacy budget $\eps_{dp}=1$, for a variety of heterogeneous FL settings.
To simulate the data-heterogeneity phenomenon ubiquitous in the research of federated learning, we follow the data allocation scheme of \citep{collins2021exploiting}: Specifically, we first split the original dataset into a training part (90\%) and a validation part (10\%) and we then allocate the training part equally to $n$ clients while ensuring that each client has at most data from $S$ classes. In Table \ref{table_utility_given_privacy},
we observe that, under this small privacy budget, our proposed \DPFEDREP\ enjoy better performance than all the included baseline algorithms. Importantly, \DPFEDREP\ is {\bf the only method} that consistently {\bf outperforms the strong local-only baseline}, and therefore justifies the choice of collaborative learning as opposed of local stand-alone training. Finally, we further demonstrate that \DPFEDREP\ enables superior privacy utility trade-off uniformly across different privacy budget $\eps$, for the setting of EMNIST dataset ($n=2000, S=5$) in Figure~\ref{fig:trade_off}.

\noindent{\bf Conclusion.} In this work, we point out that the phenomenon of statistical heterogeneity, one of the major challenges of federated learning, introduces extra difficulty when DP constraints are imposed. To alleviate this difficulty, we consider the federated representation learning where only the representation function is to be globally shared and trained. We provide a rigorous guarantee for the utility-privacy trade-off of the proposed \DPFEDREP\ method in the linear representation setting, which is $O(\sqrt{d})$ better than the SOTA result. We also empirically show that \DPFEDREP\ provides better utility uniformly on several vision datasets under various data heterogeneous settings.

\section*{Acknowledgement}
The work of Zebang Shen was supported by NSF-CPS-1837253. Hamed Hassani acknowledges the support from the NSF Institute for CORE Emerging Methods in Data Science (EnCORE), under award CCF-2217058. The research of Reza Shokri is supported by Google PDPO faculty research award,  Intel within the www.private-ai.org center, Meta faculty research award,  the NUS Early Career Research Award (NUS ECRA award number NUS ECRA FY19 P16), and the National Research Foundation, Singapore under its Strategic Capability Research Centres Funding Initiative. Any opinions, findings and conclusions or recommendations expressed in this material are those of the author(s) and do not reflect the views of National Research Foundation, Singapore.
In addition, Zebang Shen thanks Prof. Hui Qian from Zhejiang University for providing the computational resource, who is supported by National Key Research and Development Program of China under Grant 2020AAA0107400. The authors would like to thank Hongyan Chang for helpful discussions on earlier stages of this paper.

\bibliography{iclr2023_conference}
\bibliographystyle{iclr2023_conference}

\appendix
\section{More on Preliminaries}
\subsection{Subgaussian Random Variable} \label{appendix_subgaussian}
Let $\rx \in \sR$ be a \subgaussian\ random variable. We use $\|\rx\|_{\psi_2} = \inf \{s>0 | \EBB[\exp({\rx^2}/{s^2})] \leq 2\}$ to denote \subgaussian\ norm and say $\rx$ is $\|\rx\|_{\psi_2}$-\subgaussian. For a random vector $\rvx \in \sR^d$, we say $\rvx$ is \subgaussian\ if $\langle \vx, \xB\rangle$ is \subgaussian\ for any weight vector $\vx\in \sR^d$ and the \subgaussian\ norm of the random vector $\xB$ is defined as $\|\xB\|_{\psi_2} = \sup_{\vx\in\sR^d, \|\vx\|_2 = 1} \|\langle \vx, \rx\rangle\|_{\psi_2}$. We refer to $\rx$ as $\|\rx\|_{\psi_2}$-\subgaussian.

\subsection{A Useful Identity} \label{appendix_useful_facts}
Note that $\mI_k = {\mB}^\top [\mB^*\ \mB_\perp^*] [\mB^*\ \mB_\perp^*]^\top \mB = {\mB}^\top \mB^* {\mB^*}^\top \mB + {\mB}^\top \mB_\perp^* {\mB_\perp^*}^\top \mB$
and hence
\begin{align}
	\notag s^2_{\min}({\mB}^\top \mB^*) =&\ s_{\min}({\mB}^\top \mB^* {\mB^*}^\top \mB) = \min_{\|\va\|=1} \va^\top {\mB}^\top \mB^* {\mB^*}^\top \mB \va\\
	\notag =&\  \min_{\|\va\|=1} \va^\top \left(\mI_k - {\mB}^\top \mB_\perp^* {\mB_\perp^*}^\top \mB\right) \va\\
	=&\ 1 - s_{\max} ({\mB}^\top \mB_\perp^* {\mB_\perp^*}^\top \mB) = 1 - (\dist(\mB^*, \mB))^2. \label{eqn_s_min_dist}
\end{align}

\subsection{More on Privacy Preliminaries} \label{appendix_privacy_preliminary}
\begin{lemma}[Gaussian Mechanism of RDP] \label{lemma_gaussian_mechanism}
	Let $R_\alpha$ be the \renyi\ divergence defined in Definition \ref{definition_RDP}, we have $R_\alpha(\NM(0, \sigma^2), \NM(\mu, \sigma^2)) = \alpha\mu^2/(2\sigma^2)$. Here $\NM$ stands for the standard Gaussian distribution.
\end{lemma}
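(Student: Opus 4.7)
The plan is a direct computation using the definition of Rényi divergence from Definition~\ref{definition_RDP}. First I would write the density ratio explicitly. Letting $\nu$ have density $p_0(x) = (2\pi\sigma^2)^{-1/2}\exp(-x^2/(2\sigma^2))$ and $\nu'$ have density $p_\mu(x) = (2\pi\sigma^2)^{-1/2}\exp(-(x-\mu)^2/(2\sigma^2))$, the ratio telescopes to
\begin{equation*}
\frac{\ud\nu}{\ud\nu'}(x) = \exp\!\left(\frac{-x^2 + (x-\mu)^2}{2\sigma^2}\right) = \exp\!\left(\frac{\mu^2 - 2\mu x}{2\sigma^2}\right),
\end{equation*}
so raising to the $\alpha$-th power yields $(\ud\nu/\ud\nu')^\alpha(x) = \exp(\alpha\mu^2/(2\sigma^2)) \cdot \exp(-\alpha\mu x/\sigma^2)$.

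Next I would evaluate $E_\alpha(\nu,\nu') = \int (\ud\nu/\ud\nu')^\alpha \, \ud\nu'$ by recognizing the integral as a moment generating function. Writing $X \sim \NM(\mu, \sigma^2)$ under $\nu'$ and using the Gaussian MGF identity $\mathbb{E}[e^{tX}] = \exp(t\mu + t^2\sigma^2/2)$ with $t = -\alpha\mu/\sigma^2$, I get
\begin{equation*}
E_\alpha(\nu,\nu') = \exp\!\left(\frac{\alpha\mu^2}{2\sigma^2}\right)\cdot \exp\!\left(-\frac{\alpha\mu^2}{\sigma^2} + \frac{\alpha^2\mu^2}{2\sigma^2}\right) = \exp\!\left(\frac{\alpha(\alpha-1)\mu^2}{2\sigma^2}\right).
\end{equation*}

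Finally I would apply $R_\alpha = \frac{1}{\alpha-1}\log E_\alpha$, so the factor $\alpha-1$ cancels and leaves $R_\alpha(\NM(0,\sigma^2), \NM(\mu,\sigma^2)) = \alpha\mu^2/(2\sigma^2)$, as claimed. I would also note as a sanity check that the result is symmetric in a sign flip of $\mu$ (since only $\mu^2$ appears) and is consistent with the well-known RDP bound for the Gaussian mechanism in Mironov (2017).

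There is no real obstacle here — the proof is a one-line MGF computation once the density ratio is simplified. The only small care needed is keeping track of the signs when expanding $(x-\mu)^2 - x^2 = \mu^2 - 2\mu x$ and choosing the correct sign for $t$ in the MGF substitution; I would double-check by plugging $\mu = 0$ (yielding $R_\alpha = 0$) and by verifying agreement with the $\alpha\to 1$ KL-divergence limit $\mu^2/(2\sigma^2)$.
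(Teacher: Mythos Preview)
Your computation is correct and is the standard derivation of this well-known identity. Note that the paper does not actually supply a proof of this lemma: it is stated without proof in the preliminaries as a known fact (it is the classical Gaussian RDP formula, e.g.\ Mironov, 2017), so there is no paper proof to compare against. Your direct MGF calculation is exactly the argument one would expect.
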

\begin{lemma}[Composition of RDP] \label{lemma_composition_of_rdp}
	Recall the definition of $\DM$ in Definition \ref{definition_RDP} and let $R_1$ and $R_2$ be some abstract space. Let $\MM_1:\DM\rightarrow R_1$ and $\MM_1:\DM\times R_1 \rightarrow R_2$ be $(\alpha, \epsilon_1)$-RDP and $(\alpha, \epsilon_2)$-RDP respectively, then the mechanism defined as $(X, Y)$, where $X\sim \MM_1(\SM)$ and $Y\sim \MM_2(\SM, X)$, satisfies $(\alpha, \epsilon_1 + \epsilon_2)$-RDP.
\end{lemma}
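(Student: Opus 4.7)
The plan is to compute $E_\alpha$ for the joint output $(X,Y)$ on adjacent datasets $\sD,\sD'$ (with $\mathrm{d}(\sD,\sD')\le 1$), factor it using the chain rule for densities, and then apply the two given RDP bounds in order. Let $p_1(\cdot\mid \sD)$ denote the density of $\MM_1(\sD)$ on $R_1$, and for each fixed $x\in R_1$ let $p_2(\cdot\mid \sD,x)$ denote the density of $\MM_2(\sD,x)$ on $R_2$. Then the joint density of $(X,Y)$ satisfies $p(x,y\mid \sD)=p_1(x\mid \sD)\,p_2(y\mid \sD,x)$, and similarly with $\sD'$.

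First I would write out the $\alpha$-moment of the likelihood ratio, using the factorization above:
\begin{equation*}
E_\alpha\bigl(\mu_{XY},\mu'_{XY}\bigr)=\int\!\!\int p_1(x\mid \sD)^{\alpha}\,p_1(x\mid \sD')^{1-\alpha}\,p_2(y\mid \sD,x)^{\alpha}\,p_2(y\mid \sD',x)^{1-\alpha}\,dy\,dx.
\end{equation*}
The inner integral over $y$, with $x$ fixed, is exactly $E_\alpha(\MM_2(\sD,x),\MM_2(\sD',x))$. Since $\mathrm{d}(\sD,\sD')\le 1$ and $\MM_2(\cdot,x)$ is $(\alpha,\epsilon_2)$-RDP for every fixed auxiliary input $x$, this inner integral is at most $\exp\bigl((\alpha-1)\epsilon_2\bigr)$.

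Next I pull that uniform upper bound out of the outer integral to obtain
\begin{equation*}
E_\alpha\bigl(\mu_{XY},\mu'_{XY}\bigr)\le \exp\bigl((\alpha-1)\epsilon_2\bigr)\int p_1(x\mid \sD)^{\alpha}\,p_1(x\mid \sD')^{1-\alpha}\,dx=\exp\bigl((\alpha-1)\epsilon_2\bigr)\cdot E_\alpha\bigl(\MM_1(\sD),\MM_1(\sD')\bigr),
\end{equation*}
and then use the $(\alpha,\epsilon_1)$-RDP guarantee of $\MM_1$ to bound the remaining factor by $\exp((\alpha-1)\epsilon_1)$. Taking logarithms and dividing by $\alpha-1$ yields $R_\alpha(\mu_{XY},\mu'_{XY})\le \epsilon_1+\epsilon_2$, which is the claim.

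The only delicate step is the second one: one has to be precise about what ``$\MM_2$ is $(\alpha,\epsilon_2)$-RDP'' means when $\MM_2$ takes an auxiliary input $x\in R_1$. The intended reading (standard in the RDP literature) is that for every fixed $x$, the mapping $\sD\mapsto \MM_2(\sD,x)$ is $(\alpha,\epsilon_2)$-RDP; this is precisely what makes the uniform bound on the inner $y$-integral legitimate and allows the bound to be factored out of the outer integral. Up to this observation the argument is just the chain-rule identity for Rényi divergence applied to the adjacent pair $(\sD,\sD')$, and there are no further technical obstacles.
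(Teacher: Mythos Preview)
Your argument is correct and is exactly the standard proof of adaptive RDP composition (as in Mironov, 2017): factor the joint density, bound the inner integral uniformly in $x$ using the $(\alpha,\epsilon_2)$-RDP of $\MM_2(\cdot,x)$, then bound the remaining outer integral using the $(\alpha,\epsilon_1)$-RDP of $\MM_1$. Your remark about the intended reading of ``$\MM_2$ is $(\alpha,\epsilon_2)$-RDP'' with an auxiliary input is also the right caveat.

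Note, however, that the paper does not supply its own proof of this lemma: it is listed among the privacy preliminaries in the appendix and simply stated as a known result, so there is nothing to compare against beyond observing that your proof matches the standard one the paper is implicitly citing.
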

\begin{definition}[DP] \label{definition_DP}
	Let $\Theta$ be an abstract output space.
	A randomized algorithm $\MM:\DM \rightarrow \Theta$ is $({\epsilon}, \delta)$-differential private if for all $\sD, \sD' \in \mathcal{D}$ with $\mathrm{d}(\sD, \sD') \leq 1$, we have that for all subset of the range, $S\subseteq \Theta$, the algorithm $\MM$ satisfies: $\Pr\{\MM(\sD)\in S\} \leq \exp({\epsilon})\Pr\{\MM(\sD')\in S\} + \delta$.
\end{definition}
\begin{theorem}[Conversion from RDP to DP] \label{lemma_rdp_to_dp}
	If $\MM$ is an $(\alpha, \epsilon)$-RDP mechanism, it also satisfies $(\epsilon + \frac{\log 1/\delta}{\alpha-1}, \delta)$-differential privacy for any $0<\delta<1$.
\end{theorem}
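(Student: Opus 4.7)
The plan is to establish the DP bound by the standard privacy-loss-random-variable argument: for neighboring datasets $\sD, \sD'$ with $\mathrm{d}(\sD,\sD')\leq 1$, let $\mu = \MM(\sD)$ and $\mu' = \MM(\sD')$, and define the privacy loss $Z(y) = \log\frac{\ud\mu}{\ud\mu'}(y)$. The goal is to show that for $\epsilon' = \epsilon + \frac{\log(1/\delta)}{\alpha-1}$, the set $B = \{y : Z(y) > \epsilon'\}$ has $\mu$-mass at most $\delta$. Once this is done, for any measurable $S\subseteq \Theta$ I would split $\mu(S) = \mu(S\cap B) + \mu(S\setminus B) \leq \delta + e^{\epsilon'}\mu'(S\setminus B) \leq \delta + e^{\epsilon'}\mu'(S)$, which is exactly the $(\epsilon',\delta)$-DP guarantee in Definition \ref{definition_DP}.

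The core step is the tail bound on $\mu(B)$, which I would obtain by a Chernoff/Markov-type inequality applied to the exponentiated privacy loss. Specifically, for $\alpha>1$,
\begin{align*}
\mu(B) &= \Pr_{Y\sim\mu}\bigl[e^{(\alpha-1)Z(Y)} > e^{(\alpha-1)\epsilon'}\bigr]\\
 &\leq e^{-(\alpha-1)\epsilon'}\,\E_{Y\sim\mu}\Bigl[\bigl(\tfrac{\ud\mu}{\ud\mu'}(Y)\bigr)^{\alpha-1}\Bigr]\\
 &= e^{-(\alpha-1)\epsilon'}\,E_\alpha(\mu,\mu'),
\end{align*}
where the last equality uses the change of measure $\E_{Y\sim\mu}[(\ud\mu/\ud\mu')^{\alpha-1}] = \int (\ud\mu/\ud\mu')^{\alpha}\,\ud\mu' = E_\alpha(\mu,\mu')$ from Definition \ref{definition_RDP}. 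Using the RDP hypothesis $R_\alpha(\mu,\mu')\leq \epsilon$, i.e.\ $E_\alpha(\mu,\mu') \leq e^{(\alpha-1)\epsilon}$, I get $\mu(B) \leq e^{(\alpha-1)(\epsilon-\epsilon')}$, and the choice $\epsilon' = \epsilon + \frac{\log(1/\delta)}{\alpha-1}$ makes this exactly $\delta$.

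The main obstacle is minor and mostly bookkeeping: I would need to be careful about the absolute-continuity assumption $\mu\ll\mu'$ implicit in the RDP definition, and handle the measure-zero set where the Radon-Nikodym derivative is not finite, so that $Z$ is well-defined $\mu$-almost surely. I would also need to note that the argument is vacuous for $\alpha=1$ (KL-divergence), so the result is stated for $\alpha>1$ as in the standard statement. With those small caveats, assembling the two displayed pieces gives the stated $(\epsilon+\frac{\log(1/\delta)}{\alpha-1},\delta)$-DP guarantee, which is the Mironov conversion lemma.
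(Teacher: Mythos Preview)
Your argument is correct and is precisely the standard Mironov proof of the RDP-to-DP conversion. Note, however, that the paper does not actually supply its own proof of this statement: it is listed in the privacy preliminaries as a known background result without proof, so there is nothing in the paper to compare your approach against.
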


\subsection{Proof of Theorem \ref{thm_privacy_overall}} \label{appendix_proof_of_thm_privacy_overall}
\begin{proof}
	Recall the definition of the Gaussian mechanism 
	$$\gm_{\zeta, \sigma}(\{\vx_i\}_{i=1}^{s})\ \defi\ \frac{1}{s} (\sum_{i=1}^{s} \clip(\vx_i; \zeta) + \sigma\zeta W)$$ where $W\sim\mathcal{N}(0,\mI)$.
	Lemma \ref{lemma_gaussian_mechanism} states that $\gm_{\zeta_g, \sigma_g}$ is a $(\alpha, 2\alpha /\sigma_g^2)$-RDP mechanism for $\alpha \geq 1$ (the sensitivity is $2\zeta_g/n$ while the variance of the noise is $(\sigma_g \zeta_g /n)^2$).
	Using the composition of RDP again over all the iterates $t \in [T]$, we obtain that Algorithm \ref{algorithm_server} is an $(\alpha, \epsilon_{init} + \frac{2\alpha T_g}{\sigma_g^2})$-RDP mechanism.
	
\end{proof}
\section{Details on Experiments Setup and More Results}
\label{app:details_experiements}
\paragraph{Models.}
We use LeNet-5 for the datasets CIFAR10 and CIFAR100. LeNet-5 consists of two convolution layers with (64, 64) channels and two hidden fully-connected layers. For CIFAR10, the number of hidden neurons are (384, 32) while for CIFAR100, the number of hidden neurons are (128, 32). We use ReLU for activation. No batch normalization or dropout layer is used.\\
We use MLP for experiments on EMNIST. It consists of three hidden layers with size (256, 128, 16). We use ReLU for activation. No batch normalization or dropout layer is used.

\paragraph{Hyperparameters.} 
All of our experiments are conducted in the fully participating setting, i.e. $p_c = 1$.
According to our experiments, the following hyperparameters are most important to the performance of \DPFEDREP: the clipping threshold of the Gaussian mechanism $\zeta_g$, the global step size $\eta_g$, the local step size $\eta_l$, the number of global rounds $T_g$. \\
For CIFAR10, to reproduce the utility-privacy tradeoff presented in Figure \ref{fig:trade_off}, we grid search the clipping threshold $\zeta_g$ in the set $\{0.01, 0.02, 0.04, 0.06\}$ for every combination of privacy budget and baseline. The resulting optimal clipping threshold is listed in the Table \ref{tab:threshold}. For other parameters, we set $\eta_l = 0.01$, $T_g=200$ uniformly.~\footnote{Note that by further fine-tuning the number of communication rounds $T_g$ accordingly with the clipping threshold, it is possible to achieve higher utility for all the methods, especially \DPFEDAVGFT. However, due to the computation constraint, we do not perform exhaustive hyper-parameter search over all possible number of communication rounds. The difficulty for tuning hyperparameters mainly comes from the high instability (and even diverging training process) of \DPFEDAVGFT\ over different hyperparameter choices, as commonly observed in the literature~\cite{mcmahan2017communication,li2020federated}. On the contrary, we observe that the performance of \DPFEDREP\ is stable across different hyper-parameters (such as clipping threshold and number of communication rounds), as in Figure~\ref{fig_stability}.}
\begin{table}
    \centering
    \begin{tabular}{c|c|c|c|c|c}\hline
         	$\epsilon_{dp}$ & \makecell{\texttt{Stand-} \\\texttt{alone-no-FL}  } &	\DPFEDREP &	\DPFEDAVGFT &	\PPSGD &	\PMTLFT \\ \hline
            0.25&	no clip	&0.01	&0.01	&0.01	&0.01 \\ 
            0.5	&no clip	&0.01	&0.01	&0.01	&0.04 \\ 
            1	&no clip	&0.01	&0.04	&0.01	&0.04 \\ 
            2	&no clip	&0.04	&0.06	&0.04	&0.04 \\ 
            4	&no clip	&0.04	&0.06	&0.04	&0.06 \\ \hline
    \end{tabular}
    \caption{Clipping threshold $\zeta_g$ to reproduce the results in Figure \ref{fig:trade_off}.}
    \label{tab:threshold}
\end{table}

To reproduce the utility results in Table \ref{table_utility_given_privacy}, for CIFAR10, we uniformly set $\zeta_g = 0.01$, $\eta_l = 0.01$, $\eta_g=1$, $T_g=200$; for CIFAR100, we uniformly set $\zeta_g = 0.02$, $\eta_l = 0.01$, $T_g=100$, $\eta_g = 1$.
Note that once the privacy budget $\epsilon_{dp}$ is given, we use the privacy engine from the package of Opacus to determine the noise multiplier $\sigma_g$, given $T_g$.\\
For EMNIST, we uniformly set $\zeta_g = 0.25$, $\eta_l = 0.01$, $T_g=40$, $\eta_g = 1$.

There are also some algorithm-specific parameters: For \PPSGD, we set the step size for the local correction to $\eta = 0.1$ and set ratio between the global and the local step size to $\alpha = 0.1$. For \PMTLFT, we set $\lambda$, the regularization parameter to 1.

For baselines that require local fine tuning, we perform 15 local epochs to fine tune the local head with a fixed step size of $0.01$.

\paragraph{About data augmentation.} In the Non-DP setting, the technique of data augmentation usually significantly improves the testing accuracy in CV tasks. However, in the DP setting, as reported in the previous work \cite{de2022unlocking}, directly utilizing data augmentation leads to inferior performance. In the same work, the authors proposed an alternative version of the data augmentation technique which would improve the testing accuracy on various CV tasks in the centralized DP training setting. We tried their strategy in the federated representation learning setting under consideration, which however does not improve the utility in our case.

On the other hand, since the fine tuning of the local classification head does not require DP protection (recall that in \DPFEDREP, the head is kept private), we employed the standard data augmentation in this phase (optimizing over the local classification head), which improves the testing accuracy for \DPFEDREP. We also tried the same technique for the fine tuning part of other baselines, but it actually leads to worse performance. Hence in the reported results, data augmentation is only used for the fine tuning of the local classification head in \DPFEDREP\ and is not used in any other cases.

\subsection{More Empirical Results}
To study the performance of different baselines given a larger communication budget, i.e. a larger $T_g$, we conduct additional experiments on CIFAR10 and report the results in Table \ref{table_cifar10_large_epoch}. We can observe that \DPFEDREP\ has the best performance among all the included methods, uniformly in all configurations.
\begin{table}[h]
    \small
    \centering
	\caption{Testing accuracy (\%) on CIFAR10 with various data allocation settings given larger communication budget $T_g = 400$. No data augmentation is used for training the representations. $n$ stands for the number clients and $S$ stands for the number classes per client. The $\delta$ parameter of DP is fixed to $10^{-5}$ as a common choice in the literature. The budget parameter of DP is fixed to a small value of $\epsilon_{dp} = 1$ for results in this table.}
	\scalebox{0.95}{
	\begin{tabular}{c | c | c | c| c | c}
		\hline
		              Methods               & \makecell{\texttt{Stand-} \\\texttt{alone-no-FL}  }     & \DPFEDAVGFT  & \PMTLFT      & \PPSGD       & \DPFEDREP             \\ \hline 
		 \makecell{CIFAR10,\ \ n=1000,S=2}   & 74.06 (0.45) &    63.97 (0.98) & 67.71 (0.78) & 74.63 (0.76)   & \textbf{77.80 (0.52)} \\ 
		 \makecell{CIFAR10,\ \ n=1000,S=5}   & 44.60 (1.30) &    41.12 (0.40)  &   45.75 (0.81)   &      48.29 (1.79)        &   \textbf{51.05 (0.35)} \\ \hline
	\end{tabular}
	}
    \label{table_cifar10_large_epoch}
\end{table}
In Figure \ref{fig_convergence}, we further show the testing accuracy (utility) vs the privacy cost $\epsilon_{dp}$ during the training. We observe that \DPFEDREP\ quickly converges to a high utility can consistently outperforms the included baselines.
\begin{figure}[h]
    \centering
    \begin{tabular}{c@{}c}
    \includegraphics[width=.5\textwidth]{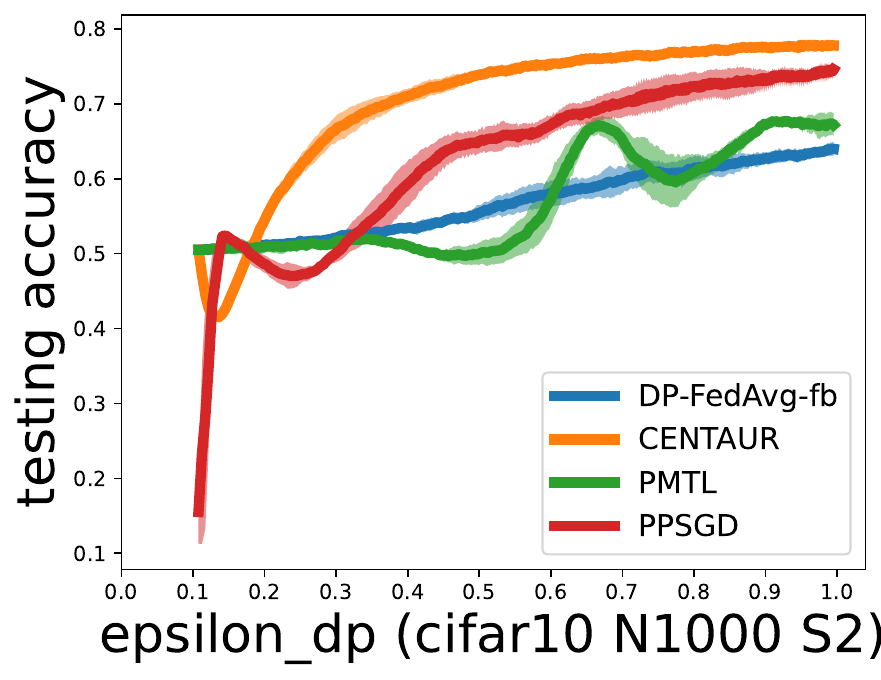}     &  \includegraphics[width=.5\textwidth]{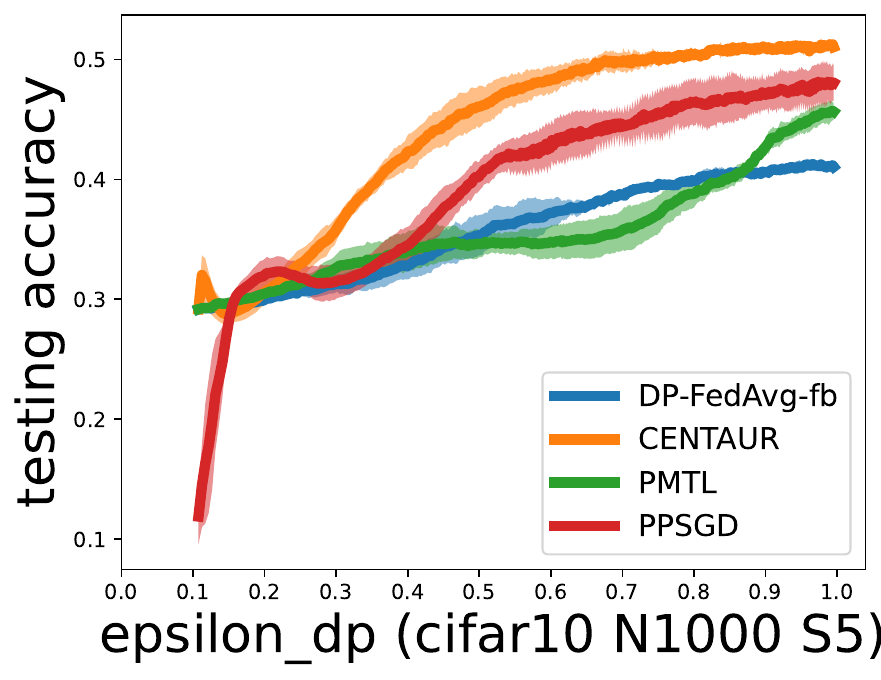}
    \end{tabular}
    \caption{Testing accuracy vs $\epsilon_{dp}$ during training.}
    \label{fig_convergence}
\end{figure}

To investigate the stability of \DPFEDREP\ across different choices of hyperparameters, we evaluate the change of test accuracy during the training process of \DPFEDREP\ under different clipping thresholds ($0.02, 0.05, 0.10$) and different number of communication rounds ($100, 300$ and $600$).  
As observed in Figure~\ref{fig_stability}, \DPFEDREP  has stable performance across different choices of hyper-parameters.

\begin{figure}[h]
    \centering
    \begin{tabular}{c@{}c}
    \includegraphics[width=.5\textwidth]{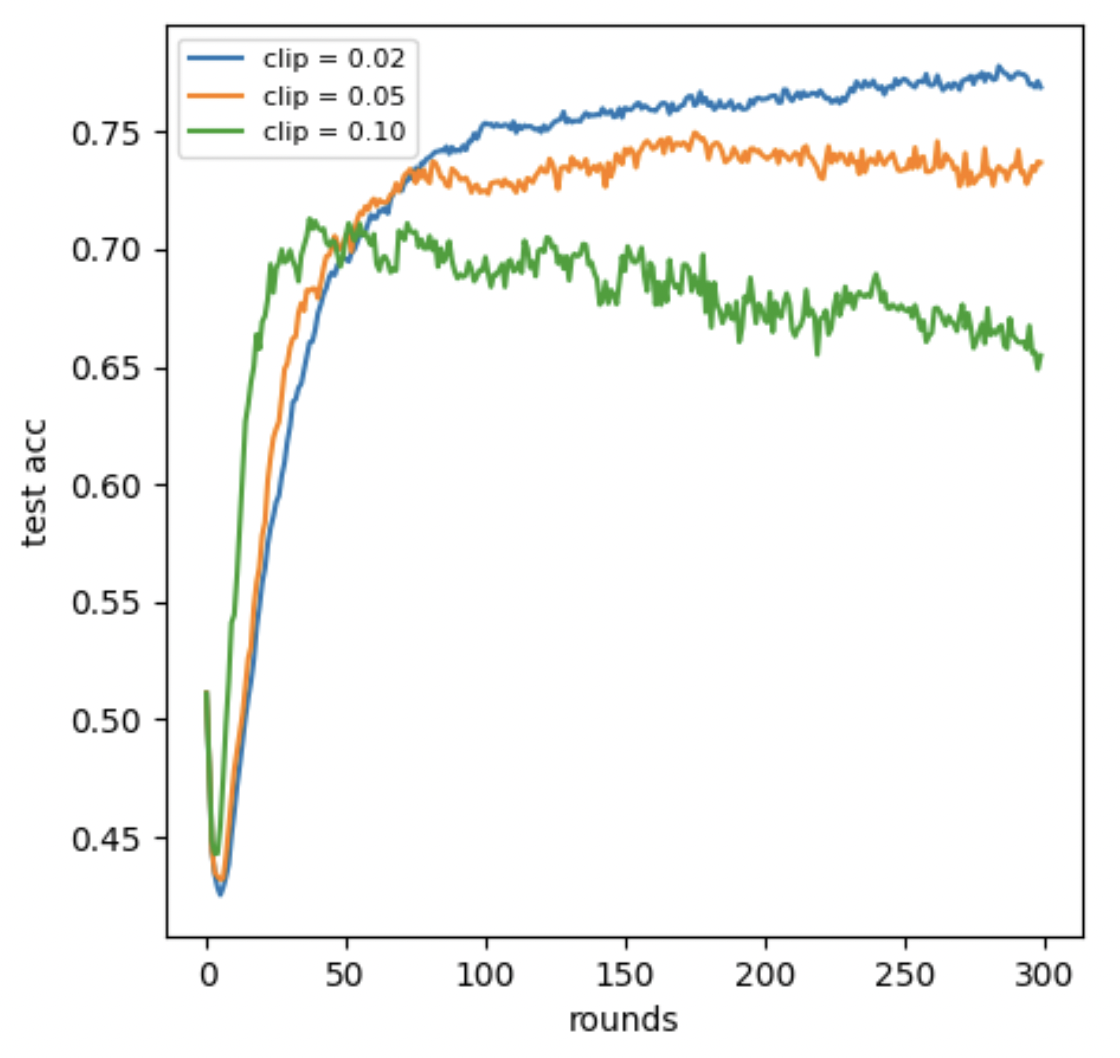}     &  \includegraphics[width=.5\textwidth]{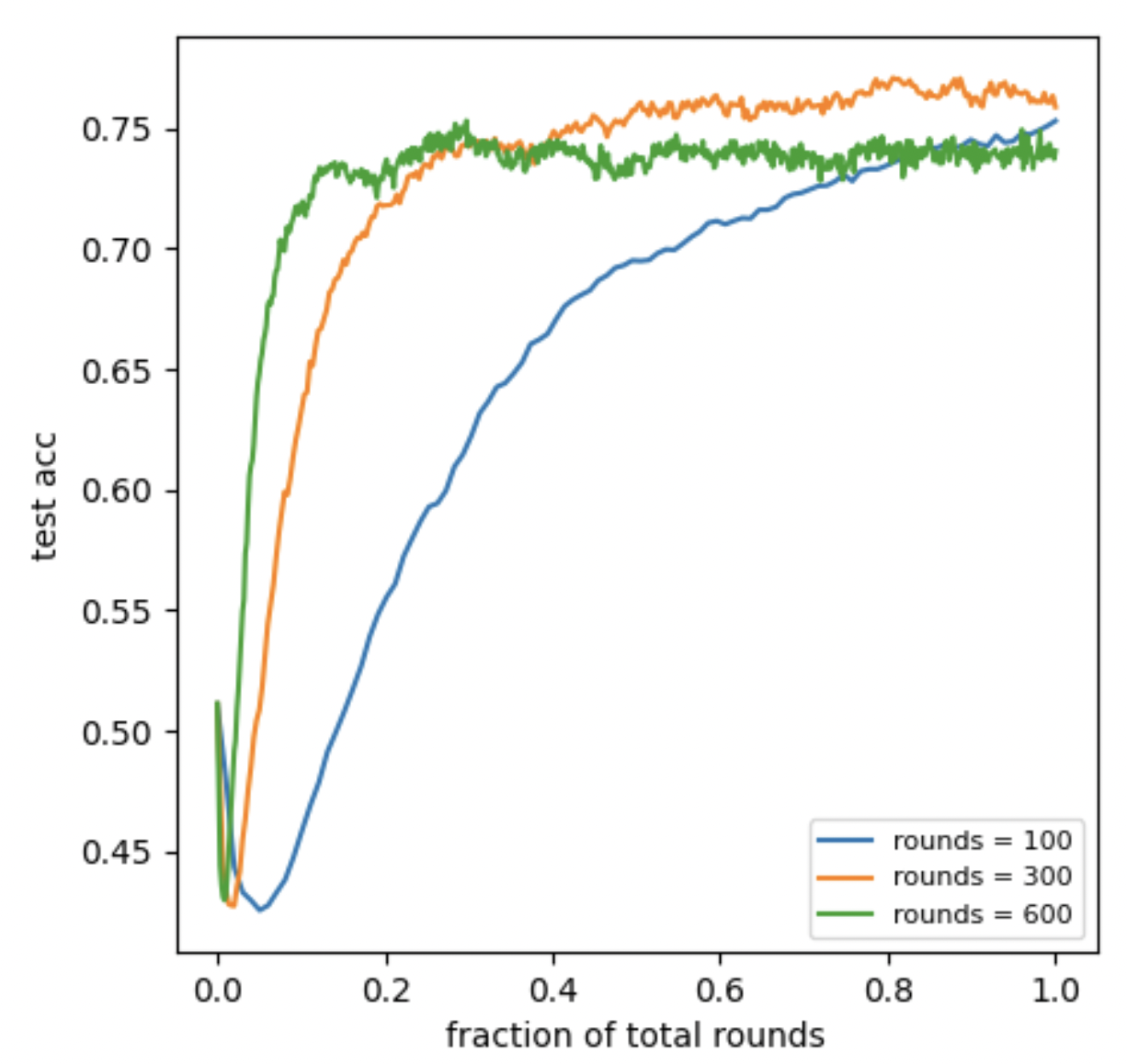}
    \end{tabular}
    \caption{Testing accuracy of \DPFEDREP\ vs number of communication rounds during a single run of training, under different choices of clipping thresholds and total number of communication rounds. All the other hyperparameters of \DPFEDAVGFT\ and \DPFEDREP\ are set according to Appendix \ref{app:details_experiements}.}
    \label{fig_stability}
\end{figure}

To further understand why \DPFEDREP\ enables higher privacy utility trade-off than \DPFEDAVGFT\ in our experiments, we evaluate how much the users agree with the direction to update the model parameters during the training process. Specifically, we compute the cosine similarity between each user’s gradient and the average of all users’ gradient (before clipping), and report the mean and standard deviation over the users. We first compute the cosine similarity for gradient over representation layers and head layers during \DPFEDAVGFT. We observe in Figure~\ref{fig_cos_sim} (a) that the users \textit{disagree} on the direction to update the model in \DPFEDAVGFT (as shown by low mean and high std), especially for the classification head. On the contrary, we observe in Figure~\ref{fig_cos_sim} (b) that the clients agree more (shown by higher mean and lower standard deviation) on the direction to update the representation layers in \DPFEDREP\ than in \DPFEDAVGFT.~\footnote{Note that the cosine similarity still drops at later phases of training, indicating that there are possible remaining data heterogeneity on representations among different users.} These observations are consistent with Remark~\ref{remark_model_drifiting_DP}, and validate the important advantage of \DPFEDREP in boosting the signal to noise ratio for learning the shared representations among users. 

\begin{figure}
    \centering
    \begin{tabular}{c@{}c}
    \includegraphics[width=.5\textwidth]{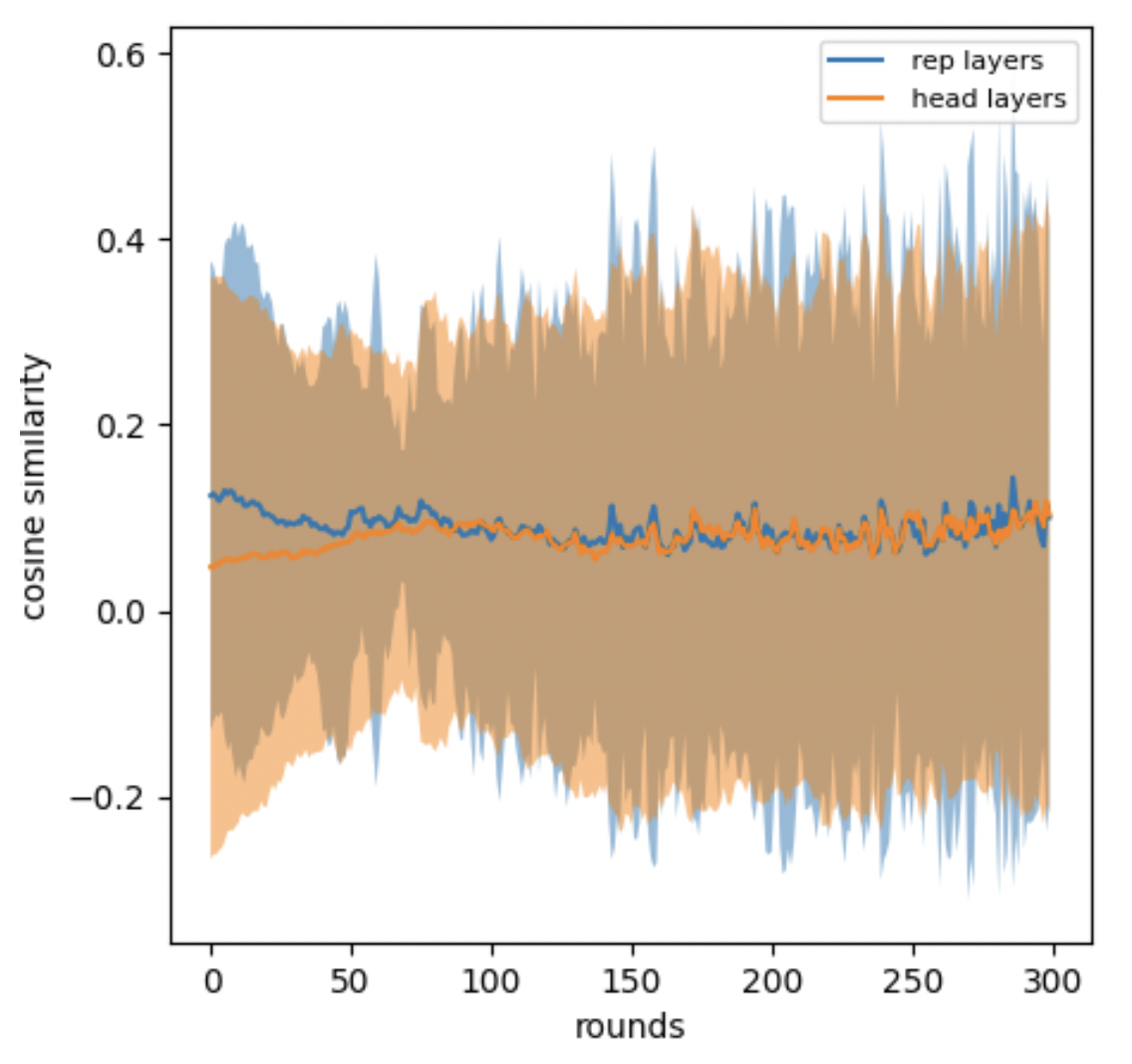}     &  \includegraphics[width=.47\textwidth]{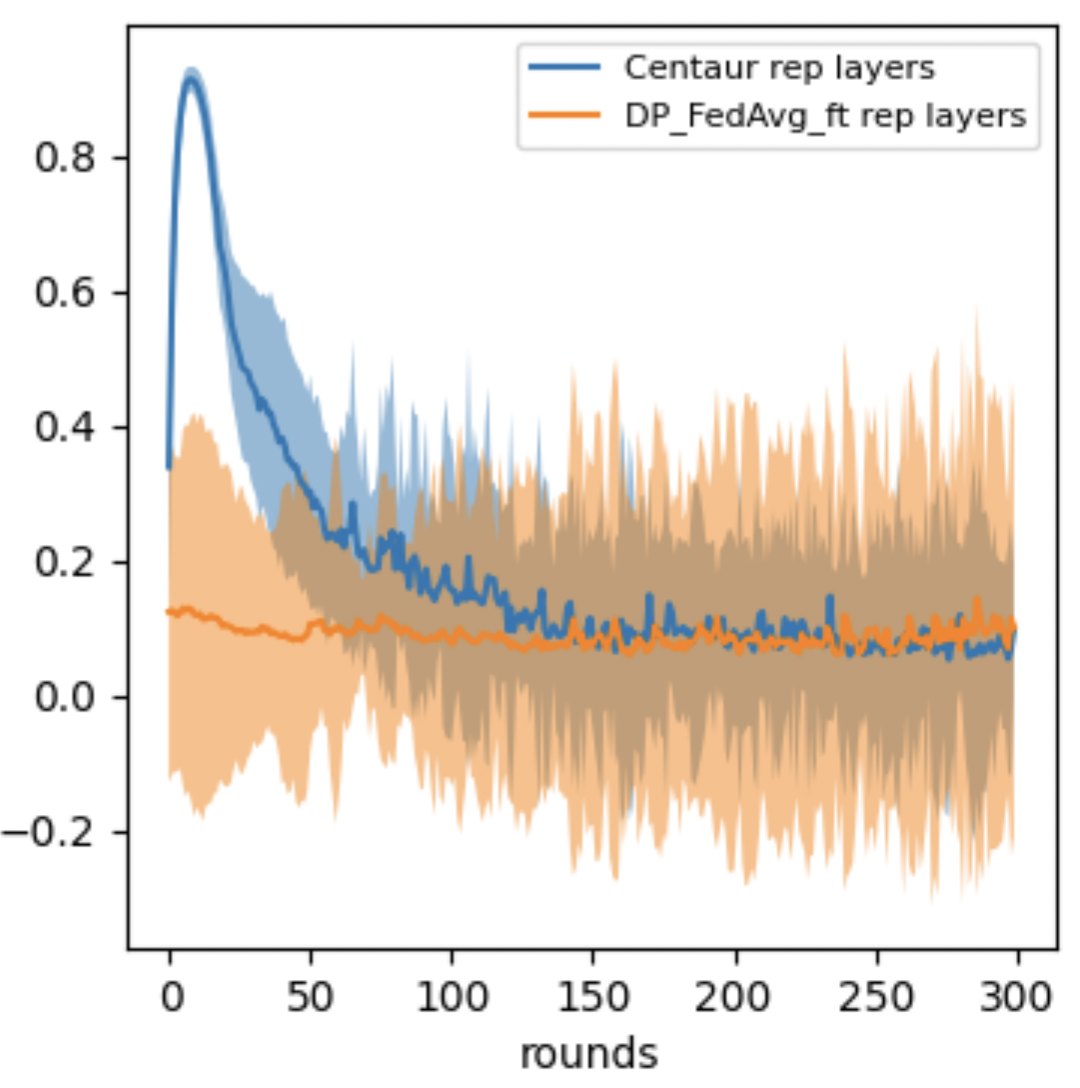}\\
    (a) Cosine similarity under \DPFEDAVGFT & (b) Cosine similarity on representation layers
    \end{tabular}
    \caption{Cosine similarity  between each user’s gradient and the average of all user’ gradient vs number of communication rounds during a single run of training. We report the mean and standard deviation over all users. The hyperparameters of \DPFEDAVGFT\ and \DPFEDREP\ are set according to Appendix \ref{app:details_experiements}.}
    \label{fig_cos_sim}
\end{figure}

\section{Utility Analysis of the \DPFEDREP\ instance for the LRL case} \label{appendix_utility}
To present the utility analysis of the \DPFEDREP\ instance for the LRL case, Problem (\ref{eqn_RLP}) is equivalently formulated as a standard matrix sensing problem.
By setting the clipping threshold $\zeta_g$ to a high probability upper bound of the norm of the local gradient $\mG_i^t$ (see Lemma \ref{lemma_zeta_g}), we show that \DPFEDREP\ can be regarded as an inexact gradient descent method.
Given that $\bar m$, the mini-batch size parameter, is sufficiently large and that the initializer $\mB^0$ is sufficiently close the ground truth $\mB^*$, we establish a high probability one-step contraction lemma that controls the utility $\dist(\mB^{t}, \mB^*)$, which directly leads to the main utility theorem \ref{thm_utility}.

\paragraph{Matrix Sensing Formulation} 
Consider a reparameterization\footnote{We consider this rescaling of $\vw_i$ so that the corresponding linear operator in \eqref{eqn_matrix_sensing_operator} is an isometric operator in expectation (see more discussion below).} of the local  classifier $\vw_i = \sqrt{n} \vv_{i}$.
Problem (\ref{eqn_RLP}) can be written as
\begin{equation} \label{eqn_linear_representation_learning}
	\min_{B^\top B = \mI_k} \frac{1}{n} \sum_{i=1}^{n} \min_{\vv_i\in\RBB^{k}}\left(\frac{1}{m} \sum_{j=1}^{m} \frac{1}{2}(\sqrt{n}\langle \vv_i, \mB^\top \vx_{ij} \rangle - y_{ij})^2\right). 
\end{equation}
Further, denote $\mV\in\RBB^{n\times k}$ the collection of local classifiers, $\mV_{i,:} = \vv_i$.
The collection of optimal local classifier heads $\mW^*$ can also be rescaled as $\mW^* = \sqrt{n}\mV^*$ and the responses $y_{ij}\in\RBB$ satisfy
\begin{equation}
	y_{ij} = \sqrt{n}\langle {\mB^*}^\top \vx_{ij}, {\mV_{i,:}^*} \rangle.
\end{equation}
Define the rank-1 matrices $\mA_{ij} = \vx_{ij} {\ve^{(i)}}^\top \in \sR^{d \times n}$ ($\ve^{(i)} \in \sR^n$) and define the operators $\AM_i:\RBB^{\dimension\times \nusers}\rightarrow \RBB^{m}$ and $\AM:\RBB^{\dimension\times \nusers}\rightarrow\RBB^{ N}$
\begin{equation} \label{eqn_matrix_sensing_operator}
	\AM_i(\mX) = \{\sqrt{n}\langle \mA_{ij}, \mX\rangle\}_{j\in[m]} \in\RBB^{m}, \AM(\mX) = \{\AM_i(\mX)\}_{i\in [\nusers]}\in\RBB^{N},
\end{equation}
where we use $N=n m$ to denote the total number of data points globally.
Note that $\frac{1}{\sqrt{N}}\AM$ is an isometric operator in expectation w.r.t. the randomness of $\{\vx_{ij}\}$, i.e. for any $\mX\in\sR^{d \times n}$
\begin{align*}
	\EBB_{\{\vx_{ij}\}}[\langle \frac{1}{\sqrt{N}}\AM(\mX), \frac{1}{\sqrt{N}}\AM(\mX) \rangle] =&\ \frac{1}{N}\sum_{i=1}^n \sum_{j=1}^m n {\ve^{(i)}}^\top\mX^\top\EBB_{\vx_{ij}}[\vx_{ij}\vx_{ij}^\top]\mX{\ve^{(i)}} \\
	=&\ \frac{1}{N}\sum_{i=1}^n \sum_{j=1}^m n \mX_{:, i}^\top\mX_{:, i}  = \frac{1}{N}\sum_{i=1}^n \sum_{j=1}^m n \|\mX_{:, i}\|^2 = \|\mX\|_F^2,
\end{align*}
where we use Assumption \ref{ass_x_ij} in the second equality.
With the notations defined above, we can rewrite Problem (\ref{eqn_RLP}) as a standard matrix sensing problem with the operator $\AM$
\begin{equation} \label{eqn_low_rank_matrix_factorization}
	\min_{B^\top B = \mI_k} \min_{\mV\in\RBB^{n\times k}} \FM(\mB, \mV; \AM) = \frac{1}{2N}\|\AM(\mB\mV^\top) - \AM(\mB^* {\mV^*}^\top)\|^2.  \tag{MSP}
\end{equation}
Since \DPFEDREP\ only uses a portion of the data points from $\sS_i$ to compute the local gradient $\mG_i^t$ (see line \ref{line_sample_s} in Algorithm \ref{algorithm_client_LRL}), it is useful to define the operators $\AM_i^{t, 1}$ and $\AM_i^{t, 2}$ that corresponds to $\sS_i^{t, 1}$ and $\sS_i^{t, 2}$ respectively, and their globally aggregated versions $\AM^{t, 1}$ and $\AM^{t, 2}$:
\begin{equation} \label{eqn_matrix_sensing_operator_12t}
	\AM_i^{t, l}(\mX) = \{\sqrt{n}\langle \vx_{ij} {\ve^{(i)}}^\top, \mX\rangle\}_{j\in\sS_i^{t, l}} \in\RBB^{\bar m}, \AM^{t, l}(\mX) = \{\AM^{t, l}_i(\mX)\}_{i\in [n]}\in\RBB^{\bar N}, l = 1, 2,
\end{equation}
where we denote $\bar N \defi\ \bar m n$.

\paragraph{Clippings are inactive with a high probability}
The following lemma shows that by properly setting the clipping thresholds $\zeta_g$, the clipping step of the Gaussian mechanism in Algorithm \ref{algorithm_client_LRL} takes no effect with a high probability.
\begin{lemma} \label{lemma_zeta_g}
	Consider the LRL setting.
	Under the assumptions \ref{ass_incoherence} and \ref{ass_x_ij}, we have with a probability at least $ 1 - \bar m n^{-k}$, $$\|\mG_i^{t}\|_F \leq \zeta_g \defi c_\zeta \mu^2 k s_k^2 \sqrt{dk\log n},$$ where $\mG_i^t$ is computed in line \ref{line_g_i^t} of Algorithm \ref{algorithm_client_LRL} and $\zeta_g$ is some universal constant.
\end{lemma}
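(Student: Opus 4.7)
}
I would begin by writing out $\mG_i^t$ in closed form. Since $y_{ij} = {\vw_i^*}^\top {\mB^*}^\top \vx_{ij}$, the per-sample residual is $\vw_i^{t+1}{}^\top \mB^t{}^\top \vx_{ij}-y_{ij} = \vr^\top \vx_{ij}$, where $\vr := \mB^t \vw_i^{t+1}-\mB^* \vw_i^*\in\sR^d$. Differentiating Eq.~\eqref{eqn_LRL_local_objective} w.r.t.\ $\mB$ then gives the rank-one-per-sample expression
\begin{equation*}
\mG_i^t = \Bigl(\frac{1}{\bar m}\sum_{j\in\sS_i^{t,2}}(\vx_{ij}^\top \vr)\,\vx_{ij}\Bigr)\,{\vw_i^{t+1}}{}^\top,
\qquad
\|\mG_i^t\|_F \;\le\; \Bigl(\max_{j\in\sS_i^{t,2}}\|\vx_{ij}\|_2\Bigr)\Bigl(\max_{j\in\sS_i^{t,2}}|\vx_{ij}^\top \vr|\Bigr)\,\|\vw_i^{t+1}\|_2,
\end{equation*}
using the triangle inequality and $\|\vu\vv^\top\|_F=\|\vu\|\|\vv\|$. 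Thus it suffices to control three scalar quantities: $\|\vw_i^{t+1}\|_2$, $\|\vr\|_2$, and the two max-norms of sub-Gaussian vectors/scalars.

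The first key step is bounding $\|\vw_i^{t+1}\|_2$. Because $\vw_i^{t+1}$ is the minimizer of a quadratic in $\vw$, it admits the closed form $\vw_i^{t+1}=\mH^{-1}\vy$, with $\mH = \tfrac{1}{\bar m}\mB^t{}^\top\bigl(\sum_{j\in\sS_i^{t,1}}\vx_{ij}\vx_{ij}^\top\bigr)\mB^t$ and $\vy = \tfrac{1}{\bar m}\sum_{j\in\sS_i^{t,1}} y_{ij}\mB^t{}^\top\vx_{ij}$. A standard sub-Gaussian covariance bound (using that $\mB^t,\mB^*$ are column-orthonormal, so $\mB^t{}^\top\vx_{ij}$ and $\mB^t{}^\top\mB^*\vw_i^*$ live in a $k$-dimensional space) together with the hypothesis $\bar m \ge c_m \kappa^2 k^2\log n$ gives, with probability $\ge 1-n^{-k}$, both $\|\mH-\mI_k\|_{op}\le 1/2$ and $\|\vy - \mB^t{}^\top\mB^*\vw_i^*\|_2 \le \tfrac{1}{4}\|\vw_i^*\|_2$. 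Combined with Assumption~\ref{ass_incoherence} this yields $\|\vw_i^{t+1}\|_2\le 2\mu\sqrt{k}\,s_k$, and by the triangle inequality $\|\vr\|_2\le \|\vw_i^{t+1}\|_2+\|\vw_i^*\|_2 \le 3\mu\sqrt{k}\,s_k$. Crucially, $\vw_i^{t+1}$ and $\vr$ are $\sS_i^{t,1}$-measurable, while the sum in $\mG_i^t$ runs over the disjoint batch $\sS_i^{t,2}$; this independence (from line~\ref{line_sample_s} of Algorithm~\ref{algorithm_client_LRL}) is what makes the subsequent conditioning argument work.

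The second key step is the two sub-Gaussian tail bounds, evaluated conditionally on $\sS_i^{t,1}$. By Assumption~\ref{ass_x_ij}, each $\|\vx_{ij}\|_2$ satisfies $\Pr(\|\vx_{ij}\|_2\ge\sqrt d+t)\le 2e^{-ct^2}$, and $\vx_{ij}^\top\vr$ is $\|\vr\|_2$-sub-Gaussian. Setting $t=c'\sqrt{k\log n}$ and taking a union bound over the $\bar m$ samples in $\sS_i^{t,2}$ yields, with probability at least $1-\bar m\,n^{-k}$,
\begin{equation*}
\max_{j\in\sS_i^{t,2}}\|\vx_{ij}\|_2 \le C\sqrt{d+k\log n},\qquad \max_{j\in\sS_i^{t,2}}|\vx_{ij}^\top \vr|\le C\sqrt{k\log n}\,\|\vr\|_2.
\end{equation*}
Plugging these and the bounds on $\|\vr\|_2,\|\vw_i^{t+1}\|_2$ into the Frobenius bound gives
$\|\mG_i^t\|_F \le C''\mu^2 k s_k^2\sqrt{k\log n\,(d+k\log n)}$, and under the standing regime $d\ge \kappa^8 k^3\log n \ge k\log n$ this simplifies to $c_\zeta \mu^2 k s_k^2 \sqrt{dk\log n}$, as claimed. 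A final union bound absorbs the $O(1)$ failure events from Step~2 into the $\bar m\,n^{-k}$ budget.

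\paragraph{Expected obstacle.} The most delicate piece is Step~2: controlling $\|\vw_i^{t+1}\|_2$ requires the $k\times k$ empirical Gram matrix $\mH$ to be well-conditioned, \emph{and} the cross term $\vy$ to concentrate around $\mB^t{}^\top\mB^*\vw_i^*$, both with failure probability small enough ($\le n^{-k}$) to survive the outer union bounds used elsewhere in the utility analysis. The rest of the argument is essentially a careful tail bound on sub-Gaussian vectors combined with the independence afforded by splitting $\sS_i$ into $\sS_i^{t,1}$ and $\sS_i^{t,2}$.
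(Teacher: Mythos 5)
Your proposal is correct and follows essentially the same path as the paper's proof: expand $\mG_i^t$ as an average of rank-one matrices, apply the triangle inequality, and factor the Frobenius norm of each summand into a product of $\|\vx_{ij}\|_2$, a residual magnitude, and $\|\vw_i^{t+1}\|_2$, each controlled by a sub-Gaussian tail bound and the incoherence assumption; the final simplification $\sqrt{d+k\log n}\approx\sqrt d$ under $d\gtrsim k\log n$ matches the paper. The one organizational difference is how $\|\vw_i^{t+1}\|_2\le 2\mu\sqrt{k}\,s_k$ is obtained: you derive it directly from the normal equations $\vw_i^{t+1}=\mH^{-1}\vy$ together with concentration of the $k\times k$ empirical Gram matrix $\mH$ and of the cross term $\vy$, whereas the paper forward-references the bound \eqref{eqn_bound_w_i} from Appendix~D, where it falls out of the $\mF^t$-decomposition of the "noisy power iteration" viewpoint on $\mV^{t+1}$ (Lemma \ref{lemma_noisy_power_iteration}) combined with Conditions~\ref{condition_1}--\ref{condition_2}. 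Both routes rest on the same concentration facts and the same $\bar m\gtrsim \kappa^2 k^2\log n$ requirement (which the lemma statement leaves implicit but the paper also uses, via the conditions on $\delta^{(1)},\delta^{(2)}$); your version is more self-contained, and your explicit appeal to the disjointness of $\sS_i^{t,1}$ and $\sS_i^{t,2}$ to justify treating $\vw_i^{t+1}$ (hence $\vr$) as independent of the fresh samples in Phase~2 makes precise a point the paper glosses over with the phrase "conditioned on the event $\|\vw_i^{t+1}\|_2\le 2\mu\sqrt{k}s_k$."
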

\begin{proof}
	The detailed expression of $\mG_i^t$ in line \ref{line_g_i^t} of Algorithm \ref{algorithm_client_LRL} can be calculated as follows:
	\begin{equation}
		\mG_i^{t} = \partial_{\mB} l([\vw_{i}^{t+1}, \mB^{t}]; \sS_i^{t, 2}) = \frac{1}{\bar m} \sum_{(\vx_{ij}, y_{ij}) \in \sS_i^{t, 2}}\left(\langle {\mB^{t}}^\top \vx_{ij}, \vw_i^{t+1}\rangle - y_{ij} \right) \vx_{ij} {\vw_i^{t+1}}^\top.
	\end{equation}
	Using the triangle inequality of the matrix norm, $\zeta$ is a high probability upper bound of $\|\mG_i^{t}\|_F$ if the inequality
	\begin{equation}
		\|\left(\langle {\mB^{t}}^\top \vx_{ij}, \vw\rangle - y_{ij} \right) \vx_{ij} {\vw_i^{t+1}}^\top\|_F \leq \zeta
	\end{equation}
	holds with a high probability.
	In the following, we show that the inequalities $|\langle {\mB^{t}}^\top \vx_{ij}, \vw\rangle|\leq \zeta_1$, $|y_{ij}|\leq \zeta_y$, $\|\vx_{ij}\|_2\leq \zeta_x$ and $\|\vw_i^{t+1}\| \leq \zeta_w$ hold jointly with probability at least $1 - 5 n^{-k}$, which together with $(\zeta_y + \zeta_1) \zeta_x\zeta_w \leq \zeta_g$ and the union bound leads to the result of the lemma.
	
	\paragraph{Choice of $\zeta_y$}
	Recall that in Assumption \ref{ass_x_ij}, we assume that $\vx_{ij}$ is a sub-Gaussian random vector with $\|\vx_{ij}\|_{\psi_2} = 1$.
	Using the definition of a sub-Gaussian random vector, we have
	\begin{equation}
		\mathbb{P}\{|y_{ij}| \geq \zeta_y \} \leq 2\exp(-\csubgaussian\zeta_y^2/\|\vw_i^*\|^2) \leq \exp(-k\log n),
	\end{equation}
	with the choice $\zeta_y = \mu\sqrt{k}s_k\cdot \sqrt{(k \log n + \log 2) / \csubgaussian} = O(\mu s_k k \sqrt{\log n})$ since $\|\vw_i^*\|_2 \leq \mu\sqrt{k}s_k$.
	Here $\csubgaussian$ is some constant and we recall that $s_k$ is a shorthand for $s_k(\mW^*/\sqrt{n})$.
	
	\paragraph{Choice of $\zeta_x$}
	Recall that $\vx_{ij}$ is a sub-Gaussian random vector with $\|\vx_{ij}\|_{\psi_2} = 1$ and therefore with probability at least $1-\delta$, 
	\begin{equation}
		\|\vx_{ij}\|_2 \leq 4\sqrt{d} + 2\sqrt{\log \frac{1}{\delta}}.
	\end{equation}
	Therefore by taking $\delta = \exp(-k\log n)$, we have that $\zeta_x = 4\sqrt{d} + 2\sqrt{\log \frac{1}{\delta}} = O(\sqrt{d})$.
	
	\paragraph{Choice of $\zeta_w$} We can show that $\zeta_w = 2\mu\sqrt{k}s_k$ is a high probability upper bound of $\|\vw_i^{t+1}$. Proving this bound requires detailed analysis of FedRep and is discussed later in \eqref{eqn_bound_w_i}.
	
	\paragraph{Choice of $\zeta_1$}
	The following event is conditioned on the event $\|\vw_i^{t+1}\|_2 \leq 2\mu\sqrt{k}s_k$.
	We will then bound the probability of both events happen simultaneously using the union bound.
	Since $\vx_{ij}$ is a sub-Gaussian random vector with $\|\vx_{ij}\|_{\psi_2} = 1$, using the definition of a sub-Gaussian random vector, we have
	\begin{equation}
		\mathbb{P}\{|\langle  \mB^{t}{\vw}_i^{t+1}, \vx_{ij} \rangle| \geq \zeta_1 \} \leq 2\exp(-\csubgaussian\zeta_1^2/\|\vw_i^{t+1}\|^2) \leq \exp(-k\log n),
	\end{equation}
	with the choice $\zeta_1 = 2\mu\sqrt{k}s_k\cdot \sqrt{(k \log n + \log 2) / \csubgaussian} = O(\mu s_k k\sqrt{\log n})$ since $\|\vw_i^{t+1}\|_2 \leq 2\mu\sqrt{k}s_k$.
	Here $\csubgaussian$ is some constant and we recall that $s_k$ is a shorthand for $s_k(\mW^*/\sqrt{n})$.
	Using the union bound, we have that w.p. at least $1 - 2\exp(-k\log n)$, the upper bound $\zeta_1$ is valid.
\end{proof}

\paragraph{The idea behind the proof}
To present the intuition of the utility analysis of \DPFEDREP, define 
\begin{equation} \label{eqn_optimal_local_head}
	V(\mB; \AM) = \argmin_{\mV\in\RBB^{n\times k}} \FM(\mB, \mV; \AM),
\end{equation}
where $\AM$ is some matrix sensing operator and $\FM$ is defined in Problem (\ref{eqn_low_rank_matrix_factorization}).
Under the event that the clipping steps in the Gaussian mechanism in Algorithm \ref{algorithm_client_LRL} takes no effect, the average gradient $\mG^t\ \defi\ \frac{1}{n} \sum_{i=1}^n \mG_i^t$ admits the following compact form 
\begin{equation}
	\mG^t = \partial_\mB \FM(\mB^{t}, V(\mB^{t}; \AM^{t, 1}); \AM^{t, 2}).
\end{equation}
Suppose that $\AM^{t, 1} \simeq  \AM^{t, 2} \simeq \AM$ (recall that all these linear operators are comprised of i.i.d. data points $\vx_{ij}$ and are hence similar when $m$ is large).
Further define the objective 
\begin{equation}
	\GM(\mB; \AM) = \min_{\mV\in\RBB^{n\times k}} \FM(\mB, \mV; \AM).
\end{equation}
We have $\mG^t \simeq \nabla \GM(\mB^{t}; \AM)$ since
\begin{equation*} \label{eqn_gradient_of_G}
	\nabla \GM(\mB^{t}; \AM) = \partial_\mB \FM(\mB^{t}, V(\mB^{t}); \AM) + \JM_V(\mB^{t})^\top \partial_\mV \FM(\mB^t, V(\mB^{t}); \AM) = \partial_\mB \FM(\mB^{t}, V(\mB^{t}; \AM); \AM),
\end{equation*}
where $\JM_V(\mB)$ denotes the Jacobian matrix of $V(\mB; \AM)$ with respect to $\mB$ and the second equality holds due to the optimality of $V(\mB^{t}; \AM)$.
Consequently, conditioned on the event that all the clipping operation are inactive, \DPFEDREP\ behaves similar to the noisy gradient descent on the objective $\GM(\mB; \AM)$ (up to the difference between $\AM^{t, 1}$, $\AM^{t, 1}$, and $\AM$). 
In the following, we show that while the objective $\GM(\mB; \AM)$ is non-convex globally, but we can show that \DPFEDREP\ converges locally within a region around the underlying ground truth.

\paragraph{One-step contraction}
To present our theory, we first establish the following properties that the operators $\AM^{t, 1}$ and $\AM^{t, 2}$ (defined in \eqref{eqn_matrix_sensing_operator_12t}) satisfy.
Recall that $\bar N = \bar m n = \sum_{i=1}^n|\SM_i^{t,1}|$.
The proofs are deferred to Appendix \ref{appendix_condition_LRL}.
\begin{lemma} \label{condition_1}
	Under Assumption \ref{ass_x_ij}, the linear operator $\AM^{t, 1}$ satisfies the following property with probability at least $1 - \exp(-c_1k\log \nusers)$: 
	\begin{equation*}
		\sup_{\mV\in\RBB^{\nusers\times k}, \|\mV\|_F = 1} |\frac{1}{\bar N}\langle \AM^{t, 1}(\mB^{t} \mV^\top), \AM^{t, 1}(\mB^{t} \mV^\top)\rangle - 1| \leq \delta^{(1)}.
	\end{equation*}
	Here, the factor $\delta^{(1)} = \sqrt{k\log \nusers} / \sqrt{\bar m}$.
\end{lemma}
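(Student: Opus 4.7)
The plan is to reduce the supremum to a single scalar quantity per client ($i\in[n]$)—the operator norm of a $k\times k$ centered sample covariance—and then apply a standard non-asymptotic sub-Gaussian covariance concentration bound followed by a union bound over the $n$ clients. First I would fix an arbitrary $\mV$ with $\|\mV\|_F=1$, write $\vv_i$ for its $i$th row (viewed as a column of $\sR^k$), and expand directly. Using $({\mB^t})^\top\mB^t = \mI_k$ and $\sum_i\|\vv_i\|^2 = \|\mV\|_F^2 = 1$,
\begin{equation*}
\frac{1}{\bar N}\|\AM^{t,1}(\mB^t\mV^\top)\|^2 - 1 \;=\; \sum_{i=1}^{n}\vv_i^\top M_i\vv_i,\qquad M_i\;\defi\;({\mB^t})^\top\!\Big(\tfrac{1}{\bar m}\!\sum_{j\in\sS_i^{t,1}}\vx_{ij}\vx_{ij}^\top\Big)\mB^t - \mI_k.
\end{equation*}

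Next I would collapse the supremum via the trivial bound
\begin{equation*}
\sup_{\|\mV\|_F=1}\Big|\sum_{i=1}^n \vv_i^\top M_i\vv_i\Big| \;\leq\; \big(\max_i\|M_i\|_{\mathrm{op}}\big)\sum_{i}\|\vv_i\|^2 \;=\; \max_i\|M_i\|_{\mathrm{op}},
\end{equation*}
so the whole lemma reduces to bounding $\max_i\|M_i\|_{\mathrm{op}}$. Because $\mB^t$ is column orthonormal and $\vx_{ij}$ is mean-zero, $1$-sub-Gaussian with covariance $\mI_d$, the projected vector $({\mB^t})^\top\vx_{ij}$ is a mean-zero, $O(1)$-sub-Gaussian random vector in $\sR^k$ with covariance $\mI_k$. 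Hence $M_i$ is precisely the centered sample covariance of $\bar m$ such i.i.d.\ $k$-dimensional vectors, and a standard covariance deviation bound (e.g.\ Theorem~4.7.1 of Vershynin, proven by an $\epsilon$-net on the sphere of $\sR^k$) gives, for each fixed $i$ and any $u>0$,
\begin{equation*}
\Pr\!\Big\{\|M_i\|_{\mathrm{op}}\geq C\big(\sqrt{k/\bar m}+u/\sqrt{\bar m}+k/\bar m+u^2/\bar m\big)\Big\}\;\leq\;2e^{-u^2}.
\end{equation*}
Taking $u=\sqrt{C'k\log n}$ for a large enough absolute constant $C'$ makes the $u/\sqrt{\bar m}$ term dominate (under the hypothesis $\bar m \gtrsim k\log n$, implied by the conditions of Theorem~\ref{thm_utility}); a union bound over $i=1,\dots,n$ then yields $\max_i\|M_i\|_{\mathrm{op}}\leq C''\sqrt{k\log n/\bar m}$ with probability at least $1-2n\cdot e^{-C'k\log n}\geq 1-e^{-c_1k\log n}$, matching the target bound $\delta^{(1)}$.

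The main subtlety—and the whole reason for this particular reduction—is that a naive $\epsilon$-net argument on the $nk$-dimensional Frobenius sphere $\{\mV:\|\mV\|_F=1\}$ would cost a covering factor of $e^{O(nk)}$, which would swamp the target failure probability $e^{-\Theta(k\log n)}$. The reduction above sidesteps this by converting the sup over $\mV$ into $n$ independent $k$-dimensional operator-norm bounds, each of which only pays an $e^{O(k)}$ covering cost that is absorbed in the Vershynin estimate. Beyond this, the only bookkeeping item is that $\mB^t$ must be independent of the fresh mini-batches $\{\vx_{ij}\}_{j\in\sS_i^{t,1}}$ drawn on line~\ref{line_sample_s} of Algorithm~\ref{algorithm_client_LRL}; this holds because $\mB^t$ is determined entirely before the $t$th round's without-replacement sampling. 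I do not foresee any deeper obstacle.
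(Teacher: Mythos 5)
Your proof is correct and follows essentially the same route as the paper: both exploit the row-wise separability of $\AM^{t,1}(\mB^t\mV^\top)$ to collapse the supremum over the $nk$-dimensional Frobenius sphere into a per-client bound on the operator norm of the centered $k\times k$ sample covariance $M_i$, then concentrate each $M_i$ via a covering argument and union over $i\in[n]$. The only cosmetic difference is that the paper carries out the $1/4$-net plus sub-exponential Bernstein estimate by hand whereas you invoke Vershynin's packaged covariance deviation bound (Theorem 4.7.1), and you state the union bound over clients explicitly where the paper leaves it tacit.
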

\begin{lemma}\label{condition_2}
	Under Assumption \ref{ass_x_ij}, the linear operator $\AM^{t, 1}$ satisfies the following property with probability at least $1 - \exp(-c_2k\log \nusers)$: For $\mV_1, \mV_2\in\RBB^{\nusers\times k}$,
	\begin{equation*}
		\sup_{ \|\mV_1\|_F = \|\mV_2\|_F = 1} |\frac{1}{\bar N}\langle \AM^{t, 1}((\mB^{t} {\mB^{t}}^\top -\mI_d) \mB^* \mV_1^\top), \AM^{t, 1}(\mB^{t} \mV_2^\top)\rangle| \leq  \delta^{(2)} \dist(\mB^{t}, \mB^*).
	\end{equation*}
	Here, the factor $\delta^{(2)} = \sqrt{k\log \nusers} / \sqrt{\bar m}$.
\end{lemma}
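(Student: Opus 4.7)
The plan is to exploit the orthogonality of the column spaces of the two arguments of $\AM^{t,1}$ to extract a hidden factor of $\dist(\mB^{t}, \mB^*)$ from the cross-term, then reduce the bilinear supremum exactly to a maximum of per-client operator norms, and close with a matrix-concentration bound on a $k$-dimensional sample cross-covariance. First I would set $\mX_1 \defi (\mB^{t}{\mB^{t}}^\top - \mI_d)\mB^*\mV_1^\top$ and $\mX_2 \defi \mB^{t}\mV_2^\top$ and observe that the columns of $\mX_1$ lie in $\mathrm{range}(\mI_d - \mB^{t}{\mB^{t}}^\top)$ while those of $\mX_2$ lie in $\mathrm{range}(\mB^{t})$, so $\mX_1^\top\mX_2 = 0$. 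Expanding the inner product in terms of the per-client empirical covariance $\hat\mSigma_i \defi \tfrac{1}{\bar m}\sum_{j\in\sS_i^{t,1}} \vx_{ij}\vx_{ij}^\top$ gives
\[
\tfrac{1}{\bar N}\langle \AM^{t,1}(\mX_1), \AM^{t,1}(\mX_2)\rangle = -\sum_{i=1}^{n}\bigl(\vv^{(1)}_i\bigr)^\top \mR_i\, \vv^{(2)}_i,\quad \mR_i \defi {\mB^*}^\top(\mI_d - \mB^{t}{\mB^{t}}^\top)\hat\mSigma_i\,\mB^{t},
\]
where $\vv^{(j)}_i \in \sR^k$ denotes the $i$-th row of $\mV_j$. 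Under Assumption~\ref{ass_x_ij}, $\E[\hat\mSigma_i] = \mI_d$ and the identity $(\mI_d - \mB^{t}{\mB^{t}}^\top)\mB^{t} = 0$ forces $\E[\mR_i] = 0$, so the expression is a centred bilinear form.

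Next I would execute a variational reduction: by alternating the two-sphere optimisation (equivalently, Lagrange multipliers followed by a rank-one concentration of mass), the supremum collapses to a single client,
\[
\sup_{\|\mV_1\|_F = \|\mV_2\|_F = 1}\Bigl|\sum_i (\vv^{(1)}_i)^\top \mR_i \vv^{(2)}_i\Bigr| = \max_{i\in[n]} \|\mR_i\|_{\mathrm{op}},
\]
so it is enough to control $\max_i \|\mR_i\|_{\mathrm{op}}$. Writing $\mR_i = \tfrac{1}{\bar m}\sum_j \vp_{ij}\vq_{ij}^\top$ with $\vp_{ij} \defi {\mB^*}^\top(\mI_d - \mB^{t}{\mB^{t}}^\top)\vx_{ij} \in \sR^k$ and $\vq_{ij} \defi {\mB^{t}}^\top\vx_{ij} \in \sR^k$, both images are sub-Gaussian with operator-norm covariances $\dist(\mB^{t},\mB^*)^2$ and $1$ respectively, and their cross-covariance vanishes. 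Applying matrix Bernstein to this asymmetric sample cross-covariance of two $k$-dimensional sub-Gaussian vectors yields, for a single $i$, $\|\mR_i\|_{\mathrm{op}} \leq C\,\dist(\mB^{t}, \mB^*)\sqrt{(k + \log(1/\delta))/\bar m}$ with probability at least $1 - \delta$. Choosing $\delta = n^{-(k+1)}$ and union-bounding over $i \in [n]$ gives $\max_i \|\mR_i\|_{\mathrm{op}} \leq C\,\dist(\mB^{t}, \mB^*)\sqrt{k\log n/\bar m}$ with probability $\geq 1 - n^{-k} \geq 1 - \exp(-c_2 k\log n)$, which is exactly the claim with $\delta^{(2)} \lesssim \sqrt{k\log n/\bar m}$.

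The main obstacle is the per-client cross-covariance bound in the last step: $\mR_i$ is a non-symmetric sample product, $\vp_{ij}$ and $\vq_{ij}$ are zero-correlated but not independent for general sub-Gaussian $\vx_{ij}$, and matrix Bernstein needs both a variance proxy and an almost-sure (or high-probability) bound on the individual summands. For Gaussian inputs the analysis is immediate because $\vp_{ij}$ and $\vq_{ij}$ become independent (orthogonal linear images of a Gaussian) and standard non-asymptotic Wishart bounds apply; for general sub-Gaussian $\vx_{ij}$ I would instead discretise $\sR^k \times \sR^k$ with an $\epsilon$-net of cardinality $9^{2k}$ (well within the $\exp(-c k\log n)$ failure budget from Step~3) and apply Hanson–Wright pointwise to a suitably truncated version of the sum. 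Crucially, the $k$- rather than $d$-dependence in the resulting rate arises because both $\vp_{ij}$ and $\vq_{ij}$ live in $k$-dimensional subspaces after the projections by $\mB^{*}$ and $\mB^{t}$; this is precisely the mechanism by which the $\sqrt{d}$ improvement over the prior SOTA eventually surfaces in the utility-privacy trade-off.
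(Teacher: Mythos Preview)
Your proposal is correct and matches the paper's argument: both exploit the row-separability of $\AM^{t,1}$ to reduce the supremum over $\mV_1,\mV_2$ to the per-client quantity $\max_i\|\mR_i\|_{\mathrm{op}}$ (the paper phrases this reduction less compactly but identically), and both then bound that $k\times k$ operator norm via an $\epsilon$-net on $\SM^{k-1}\times\SM^{k-1}$ of cardinality $9^{2k}$ combined with concentration of the scalar products $(\vx_{ij}^\top\mW\vv_1)(\vx_{ij}^\top\mB^t\vv_2)$. The paper uses Bernstein's inequality for sub-exponential random variables directly---no truncation is needed, since the product of two sub-Gaussians has $\psi_1$-norm bounded by $\|\mW\vv_1\|\cdot\|\mB^t\vv_2\|\le\dist(\mB^t,\mB^*)$---which is marginally cleaner than your matrix-Bernstein-with-truncation or Hanson--Wright alternative but delivers the same $\sqrt{k\log n/\bar m}$ rate.
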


\begin{lemma}\label{condition_3}
	Under Assumptions \ref{ass_x_ij} and \ref{ass_incoherence}, the linear operator $\AM^{t, 2}$ satisfies the following property with probability at least $1 - \exp(-c_3k\log \nusers)$: For $\va\in\RBB^{\dimension}, \vb\in\RBB^{k}$
	\begin{equation*}
		\sup_{\|\va\|=\|\vb\|=1} |\frac{1}{\bar N}\langle \AM^{t, 2}(\mB^{t} {\mV^{t+1}}^\top - \mB^* {\mV^*}^\top), \AM^{t, 2}(\va\vb^\top{\mV^{t+1}}^\top)\rangle| \leq \delta^{(3)}s_1^2 k\dist(\mB^{t}, \mB^*).
	\end{equation*}
	Here, the factor $\delta^{(3)} = {4 (\sqrt{\dimension} + \sqrt{k\log \nusers})}/\left({\sqrt{\bar m\nusers} \kappa^2}\right)$.
\end{lemma}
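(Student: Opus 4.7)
The plan is to rewrite the quantity of interest in a client-wise covariance form, then decompose into a population mean and a sampling fluctuation. Since $\AM^{t,2}(\mX)$ has entries $\sqrt{n}\,\vx_{ij}^\top \mX_{:,i}$ for $j\in \SM_i^{t,2}$, a direct computation gives
\begin{equation*}
\frac{1}{\bar N}\langle \AM^{t,2}(\mX), \AM^{t,2}(\mY)\rangle \;=\; \frac{1}{n}\sum_{i=1}^n \mX_{:,i}^\top\, \hat{\mSigma}_i\, \mY_{:,i},\qquad \hat{\mSigma}_i := \tfrac{1}{\bar m}\sum_{j\in \SM_i^{t,2}}\vx_{ij}\vx_{ij}^\top.
\end{equation*}
Substituting $\mX=\mB^t\mV^{{t+1}\top}-\mB^*\mV^{*\top}$ and $\mY=\va\vb^\top \mV^{{t+1}\top}$ reduces the target to
\begin{equation*}
T(\va,\vb)\;=\;\frac{1}{n}\sum_{i=1}^n (\vb^\top \vv_i^{t+1})\,(\mB^t\vv_i^{t+1}-\mB^*\vv_i^*)^\top \hat{\mSigma}_i\,\va .
\end{equation*}
I then write $\hat{\mSigma}_i=\mI_d+(\hat{\mSigma}_i-\mI_d)$ and split $T=T^{\mathrm{mean}}+T^{\mathrm{fluc}}$. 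Crucially, because $\SM_i^{t,1}$ and $\SM_i^{t,2}$ are disjoint subsamples, the minimizer $\vv_i^{t+1}$ from Phase~1 is independent of $\hat{\mSigma}_i$, which will be used repeatedly.

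For $T^{\mathrm{mean}}$, I use the closed form $\vv_i^{t+1}=(\mB^{t\top}\hat{\mSigma}_i^{(1)}\mB^t)^{-1}\mB^{t\top}\hat{\mSigma}_i^{(1)}\mB^*\vv_i^*$, where $\hat{\mSigma}_i^{(1)}$ is built from $\SM_i^{t,1}$. When $\hat{\mSigma}_i^{(1)}\approx\mI_d$, $\mB^t\vv_i^{t+1}\approx \mB^t\mB^{t\top}\mB^*\vv_i^*$, so the residual satisfies $\mB^t\vv_i^{t+1}-\mB^*\vv_i^*\approx -(\mI_d-\mB^t\mB^{t\top})\mB^*\vv_i^*$, whose Euclidean norm is at most $\dist(\mB^t,\mB^*)\|\vv_i^*\|$. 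Combined with $\|\vv_i^*\|\le \mu\sqrt{k}s_k$ (Assumption \ref{ass_incoherence}) and the already-established $\|\vv_i^{t+1}\|\le 2\mu\sqrt{k}s_k$ from the analysis supporting Lemma \ref{lemma_zeta_g}, the factor $(\vb^\top\vv_i^{t+1})\cdot\|\mB^t\vv_i^{t+1}-\mB^*\vv_i^*\|\cdot\|\va\|$ is bounded by $2\mu^2 k s_k^2\dist(\mB^t,\mB^*)$; the $s_1^2 k$ scaling then appears after accounting for the $\kappa^{-2}=s_k^2/s_1^2$ hidden in $\delta^{(3)}$. For $T^{\mathrm{fluc}}$, by sub-Gaussian covariance concentration $\|\hat{\mSigma}_i-\mI_d\|_{\mathrm{op}}\le c(\sqrt{d/\bar m}+t/\sqrt{\bar m})$ with probability $1-2e^{-t^2}$; pick $t=\sqrt{k\log n}$ and union-bound over $i$ to obtain the $(\sqrt d+\sqrt{k\log n})/\sqrt{\bar m}$ rate. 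The extra $1/\sqrt n$ in $\delta^{(3)}$ comes from a Bernstein-type average over clients: conditional on Phase~1, the $n$ per-client summands in $T^{\mathrm{fluc}}$ are independent and sub-exponential with centered conditional mean, so their average contracts at rate $1/\sqrt n$. Finally, the supremum over unit vectors in $\RBB^d\times \RBB^k$ is handled by passing to $1/4$-covering nets of sizes $\le 9^d$ and $\le 9^k$; the $9^d$ entropy is absorbed by the $\sqrt d$ term in $\delta^{(3)}$, while the $9^k$ entropy is absorbed into the constant $c_3$ of the stated failure probability $\exp(-c_3 k\log n)$, and a standard discretization correction (factor $2$) is charged to the constant $4$ in $\delta^{(3)}$.

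The main obstacle is the mean-part analysis, which requires careful tracking of the Phase~1 least-squares residual: even after exploiting independence between $\SM_i^{t,1}$ and $\SM_i^{t,2}$, $\vv_i^{t+1}$ only approximately equals $\mB^{t\top}\mB^*\vv_i^*$, with an approximation error whose norm is $O(\sqrt{d/\bar m}\cdot \mu\sqrt k s_k)$. One must show that this residual, propagated through $T^{\mathrm{mean}}$, contributes at most a lower-order term relative to $\delta^{(3)} s_1^2 k\dist(\mB^t,\mB^*)$. This is precisely why the theorem-level mini-batch assumption $\bar m\ge c_m\max(\kappa^2 k^2\log n, k^2 d/n)$ in Theorem \ref{thm_utility} is needed — it makes the Phase~1 approximation error subdominant to the Phase~2 fluctuation rate $(\sqrt d+\sqrt{k\log n})/\sqrt{\bar m n}$ that drives $\delta^{(3)}$. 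A secondary subtlety is that the invertibility of $\mB^{t\top}\hat{\mSigma}_i^{(1)}\mB^t$ (needed for the closed form of $\vv_i^{t+1}$) must be secured on a high-probability event, which follows from Lemma \ref{condition_1} applied to $\AM^{t,1}$ on an $\epsilon$-net of $\mV\in\RBB^{n\times k}$ with $\|\mV\|_F=1$.
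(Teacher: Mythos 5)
Your decomposition $T = T^{\mathrm{mean}} + T^{\mathrm{fluc}}$ via $\hat{\mSigma}_i = \mI_d + (\hat{\mSigma}_i - \mI_d)$ is a natural idea and correctly identifies the key structural ingredients (the block form of $\AM^{t,2}$, independence of the Phase~1 and Phase~2 subsamples, the bounds on $\|\vv^{t+1}_i\|$ and on the residual $\mB^t\vv_i^{t+1}-\mB^*\vv_i^*$, the $9^d\times 9^k$ net). However, there is a critical gap in your treatment of $T^{\mathrm{mean}}$ that the proof cannot survive.

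Your $T^{\mathrm{mean}}$ is, by construction, exactly the population quantity $\va^\top \mQ^t\mV^{t+1}\vb$ (up to your $1/n$ versus no-$1/n$ bookkeeping). You bound each client's contribution by $2\mu^2 k s_k^2\dist(\mB^t,\mB^*)$ and then try to match this with $\delta^{(3)}s_1^2 k\dist$ by ``accounting for the $\kappa^{-2}$ hidden in $\delta^{(3)}$.'' But $\delta^{(3)}s_1^2 k = \tfrac{4(\sqrt d + \sqrt{k\log n})}{\sqrt{\bar m n}}\,s_k^2 k$: besides the $\kappa^{-2}$ you absorbed, it carries an additional factor $\tfrac{\sqrt d + \sqrt{k\log n}}{\sqrt{\bar m n}}$, which your mean bound has no way to produce — it is a pure triangle-inequality estimate with no concentration to supply a $1/\sqrt{\bar m n}$. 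As $\bar m n$ grows, $\delta^{(3)}s_1^2 k\dist \to 0$ while $|T^{\mathrm{mean}}|$ stays of order $\mu^2 k s_k^2\dist$, so the claimed bound on $T^{\mathrm{mean}}$ is simply false. The resolution, which the paper uses (implicitly — the lemma statement drops the centering term, but its use in Lemma~\ref{lemma_bound_AAQ_Q} and the proof in Appendix~\ref{appendix_proof_condition_3} make clear that the controlled quantity is the \emph{centered} one), is that $T^{\mathrm{mean}} = \va^\top \mQ^t\mV^{t+1}\vb$ is exactly the term that is \emph{subtracted} when one bounds $\va^\top(\mG^t - \mQ^t\mV^{t+1})\vb$; it is not supposed to be small. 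Only $T^{\mathrm{fluc}}$ is to be bounded by $\delta^{(3)}s_1^2 k\dist$. Your invocation of the mini-batch assumption to make ``the Phase~1 approximation error subdominant'' similarly misdiagnoses the bottleneck: the issue is not the Phase~1 error inside $\vv_i^{t+1}$ but the fact that the mean itself is macroscopic.

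A secondary issue is that your $T^{\mathrm{fluc}}$ argument mixes two incompatible estimates: a uniform operator-norm bound $\|\hat{\mSigma}_i - \mI_d\|\le c(\sqrt{d/\bar m}+\sqrt{k\log n/\bar m})$ per client, and a Bernstein average over clients to ``gain $1/\sqrt n$.'' If you already take the per-client operator norm as a worst-case bound, the per-client contribution is deterministic on that event and averaging over $n$ clients yields no further gain; to obtain $1/\sqrt{\bar m n}$ you must treat the full double sum of $\bar m n$ sub-exponential summands in one Bernstein step (this is precisely what the paper does, with $Z_{ij}= n(\vx_{ij}^\top\mQ^t_{:,i})(\vx_{ij}^\top\va)(\vb^\top\mV^{t+1}_{i,:})$, $\|Z_{ij}\|_{\psi_1}\le 4k\mu^2 s_k^2\dist$, and a Bernstein bound on $\tfrac{1}{\bar m n}\sum Z_{ij}$ minus its mean). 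A correctly-executed two-stage version would require characterizing the sub-exponential norm of each centered per-client summand, not the operator norm of $\hat{\mSigma}_i - \mI_d$.
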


We now present the one-step contraction lemma of \DPFEDREP\ in the LRL setting. The proof can be found in Appendix \ref{appendix_contraction}.
\begin{lemma}[One-step contraction] \label{lemma_contraction_utility}
	Consider the instance of \DPFEDREP\ for the LRL setting with its \textsc{Client} and \textsc{Initialization} procedures defined in Algorithms \ref{algorithm_client_LRL} and \ref{algorithm_initialization} respectively.
	Suppose that the matrix $\mB^0$ returned by \textsc{Initialization} satisfies $\dist(\mB^0, \mB^*) \leq \epsilon_0 = 0.2$ and suppose that the mini-batch size parameter satisfies 
	$$\bar m \geq c_m\max\{\kappa^2 k^2 \log n, \frac{k^2d + k^3\log n}{n}\},$$
	for some universal constant $c_m$.
	Set the clipping threshold of the Gaussian mechanism $\zeta_g$ according to Lemma \ref{lemma_zeta_g} and set the global step size $\eta_g = {1}/{4s_1^2}$.
	Suppose that the level of manually injected noise is sufficiently small: For some universal constant $c_\sigma$, it satisfies
	\begin{equation} \label{eqn_sigma_g_small}
		\sigma_g \leq  \frac{c_\sigma n}{\mu^2(\sqrt{d} + \sqrt{k\log n})}\min\left(\frac{1}{  k^2   \log n}, \frac{\kappa^4}{\sqrt{d k \log n} }\right).
	\end{equation}
	We have the following one-step contraction from a single iteration of $\DPFEDREP$
	\begin{equation} \label{eqn_one_step_contraction}
		\dist(\mB^*, \mB^{t+1}) \leq \dist(\mB^*, \mB^{t}) \sqrt{1 - E_0/8\kappa^2} + 3C_\NM\frac{\eta_g\sigma_g\zeta_g}{n}\sqrt{{\dimension}},
	\end{equation}
	holds with probability at least $1 - c_p\bar m n^{-k}$, where $c_p$ is some universal constant.
	Moreover, with the same probability, we also have
	\begin{equation}
		{\dist(\mB^*, \mB^{t}) \leq \dist(\mB^*, \mB^0)}.
	\end{equation}
\end{lemma}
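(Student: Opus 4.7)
The plan is to condition on a high-probability event on which (i) the clipping operation in line 5 of Algorithm~\ref{algorithm_client_LRL} is inactive for every active client (which holds with probability at least $1 - \bar m n^{-k}$ by Lemma~\ref{lemma_zeta_g} and a union bound over $i\in[n]$), and (ii) the three RIP-style properties of Lemmas~\ref{condition_1}, \ref{condition_2}, \ref{condition_3} hold for the sampled operators $\AM^{t,1}$ and $\AM^{t,2}$. On this good event, the aggregated signed sum returned by the Gaussian mechanism admits the clean form $\mG^t = \partial_\mB \FM(\mB^t, V(\mB^t; \AM^{t,1}); \AM^{t,2}) + (\sigma_g \zeta_g / n)\, W$ with $W$ a standard $d\times k$ Gaussian matrix, so \DPFEDREP\ can be read as an inexact projected gradient step on $\GM(\cdot;\AM^{t,2})$, following the high-level argument sketched around~\eqref{eqn_gradient_of_G}.

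Next I would rewrite the update $\mB^{t+1} = \QR{\mB^t - \eta_g \mG^t}$ as $\mB^t - \eta_g \mG^t = \mB^{t+1} R$ with $R$ upper-triangular, so that by the identity~\eqref{eqn_s_min_dist},
\begin{equation*}
\dist^2(\mB^*, \mB^{t+1}) = 1 - s_{\min}^2((\mB^{t+1})^\top \mB^*) \leq 1 - \frac{s_{\min}^2((\mB^t - \eta_g \mG^t)^\top \mB^*)}{\|R\|^2}.
\end{equation*}
I would then plug in the optimal local head $V(\mB^t;\AM^{t,1})$, whose deviation from the least-squares ideal is controlled by Lemma~\ref{condition_1}, and expand $(\mB^t - \eta_g \mG^t)^\top \mB^*$ into the dominant noiseless population-type term plus three error terms: the sample-level error bounded by $\delta^{(2)}$ and $\delta^{(3)}$ times $\dist(\mB^t,\mB^*)$, and the manually injected Gaussian term. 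The lower bound $\|R\|\leq 1 + O(\eta_g \|\mG^t\|)$ is controlled because $\|\mG^t\|$ is bounded by $\zeta_g$ plus the noise norm, which in turn is small due to~\eqref{eqn_sigma_g_small}.

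The deterministic contraction comes from the noiseless term: the gradient $\partial_\mB \FM(\mB^t, V(\mB^t); \AM^{t,2})$ aligned with $\mB^*$ reduces approximately to $(\mI_d - \mB^t(\mB^t)^\top)\mB^* (\mV^*)^\top \mV^{t+1}$ (this is where $\mW^*/\sqrt n$'s condition number $\kappa$ enters), and with $\eta_g = 1/(4s_1^2)$ this yields the target rate $\sqrt{1 - E_0/(8\kappa^2)}$; all cross-terms scaled by $\delta^{(1)},\delta^{(2)},\delta^{(3)}$ are absorbed into $E_0/(8\kappa^2)$ provided $\bar m \geq c_m \max\{\kappa^2 k^2 \log n, (k^2 d + k^3 \log n)/n\}$ as assumed. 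The noise contribution is bounded via the standard operator-norm tail for Gaussian matrices, $\|W\|_2 \leq C_\NM (\sqrt d + \sqrt k)$, yielding the additive $3C_\NM \eta_g \sigma_g \zeta_g \sqrt d / n$.

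The main obstacle is the simultaneous bookkeeping of the three error sources while keeping the iterate inside the basin of attraction $\{\mB : \dist(\mB,\mB^*)\leq\epsilon_0\}$, since all three of Lemmas~\ref{condition_1}--\ref{condition_3} are only meaningful there. I would establish the monotonicity claim $\dist(\mB^*,\mB^t) \leq \dist(\mB^*,\mB^0)$ by induction: the base case is trivial, and for the inductive step the contraction~\eqref{eqn_one_step_contraction} combined with the upper bound on $\sigma_g$ in~\eqref{eqn_sigma_g_small} (chosen precisely so that $3C_\NM \eta_g \sigma_g \zeta_g \sqrt d /n \leq (1-\sqrt{1-E_0/(8\kappa^2)})\,\epsilon_0$) ensures $\dist(\mB^*,\mB^{t+1})\leq \epsilon_0$, which in turn re-validates the RIP conditions for the next step. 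This induction, together with the per-step high-probability event, delivers the lemma.
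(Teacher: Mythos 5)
Your high-level skeleton matches the paper's: you condition on the clipping-inactive event from Lemma~\ref{lemma_zeta_g} and the RIP-style events of Lemmas~\ref{condition_1}--\ref{condition_3}, you interpret the update as a perturbed gradient step, you split the error into the $\delta^{(2)},\delta^{(3)}$ sampling terms plus the injected Gaussian, and you close the argument by induction on $\dist(\mB^t,\mB^*)\leq\epsilon_0$. All of that is aligned with what the paper does.

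However, the central mechanism you propose is a genuinely different (and less clean) route, and it contains a concrete quantitative gap. The paper tracks $\|{\mB^*_\perp}^\top \mB^{t+1}\|$ directly. The crucial algebraic fact it exploits is ${\mB^*_\perp}^\top \mQ^t = {\mB^*_\perp}^\top \mB^t {\mV^{t+1}}^\top$ (because ${\mB^*_\perp}^\top \mB^* = 0$), so that the noiseless part of the update \emph{factors} as ${\mB^*_\perp}^\top \mB^t\,(\mI_k - \eta_g {\mV^{t+1}}^\top \mV^{t+1})$, and the contraction factor $\|\mI_k - \eta_g {\mV^{t+1}}^\top \mV^{t+1}\|\cdot\|(\mR^{t+1})^{-1}\|<1$ appears multiplicatively on $\dist(\mB^t,\mB^*)$. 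Your route instead lower-bounds $s_{\min}\bigl((\mB^t - \eta_g \mG^t)^\top \mB^*\bigr)$. But the $\mB^*$-aligned component does not factor through $\mI_k-\eta_g{\mV^{t+1}}^\top\mV^{t+1}$; expanding $(\mB^t - \eta_g \mG^t)^\top \mB^*$ yields an additive combination of $(\mI_k-\eta_g{\mV^{t+1}}^\top\mV^{t+1})({\mB^t})^\top\mB^*$ and $\eta_g{\mV^{t+1}}^\top\mV^*$, and the desired contraction in $\dist$ translates to a lower bound on $s_{\min}$ of this sum that is not obtained by a simple triangle inequality. You would need a separate argument here, which you have not supplied; the approximation ``reduces approximately to $(\mI_d - \mB^t(\mB^t)^\top)\mB^*({\mV^*})^\top\mV^{t+1}$'' describes the gradient itself, not the quantity $(\mB^t-\eta_g\mG^t)^\top\mB^*$ whose smallest singular value you must control.

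More importantly, the step ``$\|R\|\leq 1 + O(\eta_g\|\mG^t\|)$ is controlled because $\|\mG^t\|$ is bounded by $\zeta_g$ plus the noise norm'' is wrong in scale. The clipping threshold is $\zeta_g = c_\zeta\mu^2 k s_k^2\sqrt{dk\log n}$, so $\eta_g\zeta_g = \Theta\bigl(\mu^2 k^{1.5}\sqrt{d\log n}/\kappa^2\bigr) = \Omega(\sqrt d)$, which is far from $O(1)$; bounding the averaged gradient by the per-client clipping threshold gives a vacuous $\|R\|$ bound. The paper does \emph{not} bound $\|\mG^t\|$ by $\zeta_g$. Instead, in Lemma~\ref{lemma_bound_R_sigma_min} (point 3 of that proof), it writes $\mG^t = \mQ^t\mV^{t+1} + (\mG^t - \mQ^t\mV^{t+1})$ and uses the noisy-power-iteration identity (Lemma~\ref{lemma_noisy_power_iteration}) together with the bound $\|\mF^t\|$ from Lemma~\ref{lemma_bound_f} to show $\|\mQ^t\mV^{t+1}\|\leq 4s_1^2$, hence $\|\mG^t\|=O(s_1^2)$ and $\eta_g\|\mG^t\|=O(1)$. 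Without this structural bound the triangular factor cannot be controlled, and your induction cannot close. Note also that the paper needs a lower bound on $s_{\min}(\mR^{t+1})$ (equivalently an upper bound on $\|(\mR^{t+1})^{-1}\|$), not an upper bound on $s_{\max}(\mR^{t+1})$; the two are not interchangeable and are obtained by different means.
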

\begin{remark} \label{remark_bar_m}
	The lower bound of the mini-batch size parameter $\bar m$ is derived to satisfy the following inequalities:
	$$\max\{\frac{\delta^{(2)} \sqrt{k}}{1-\delta^{(1)}}, \frac{(\delta^{(2)})^2 k}{(1-\delta^{(1)})^2}, \delta^{(3)}k \} \leq \frac{s_k^2 (1 - \epsilon_0^2)}{36s_1^2},$$
	which is required to establish the above one-step contraction lemma.
	The upper bound of the noise multiplier $\sigma_g$ is derived to satisfy the following inequalities:
	\begin{equation} 
		\frac{\eta_g\sigma_g\zeta_g}{n} \leq  \min\left(\frac{\sqrt{1 - \epsilon_0^2}}{ 4C_\NM\kappa^2 \sqrt{k\log n}}, \frac{ 8 \kappa^2 \epsilon_0}{3C_\NM \sqrt{d} E_0}\right).
	\end{equation}
\end{remark}

\begin{proof}[Proof of Theorem \ref{thm_utility}]
	Denote $E_0 = \sqrt{1 - \epsilon_0^2}$.
	Using the recursion (\ref{eqn_one_step_contraction}), we have
	\begin{align}
		\notag \dist(\mB^*, \mB^{t}) \leq&\ (1 - E_0/8\kappa^2)^{t/2} \dist(\mB^*, \mB^0) + 3C_\NM\frac{\eta_g\sigma_g\zeta_g}{n}\sqrt{{\dimension}}/(1 - \sqrt{1 - E_0/8\kappa^2}) \\
		\leq&\ (1 - E_0/8\kappa^2)^{t/2} \dist(\mB^*, \mB^0) + 48C_\NM\kappa^2\frac{\eta_g\sigma_g\zeta_g}{n}\sqrt{{\dimension}}/E_0.
	\end{align}
	With the choice of $T$ specified in the theorem, we have that 
	\begin{equation}
		\dist(\mB^*, \mB^{t}) \leq c_d\kappa^2\frac{\eta_g\sigma_g\zeta_g}{n}\sqrt{{\dimension}},
	\end{equation}
	for some universal constant $c_d$.
	By plugging the choices of $\eta_g$, $\zeta_g$, we obtain the simplified bound $$\dist(\mB^*, \mB^{t}) \leq \tilde c_d \sigma_g \mu^2 k^{1.5} d/n,$$
	where $\tilde c_d$ hides the constant and $\log$ terms.
\end{proof}

\section{Proof of the One-step Contraction Lemma}
\subsection{Proof of Lemma \ref{lemma_contraction_utility}} \label{appendix_contraction}
\begin{proof}
	The following discussion will be conditioned on the event that all the clipping operations are inactive, whose probability is proved to be at least $1 - 5n^{-k}$ in Lemma \ref{lemma_zeta_g}.
	Using union bound to combine the following result and Lemma \ref{lemma_zeta_g} leads to the result.
	
	Recall that the average gradient $\mG^t = \frac{1}{n} \sum_{i=1}^n \mG_i^t$ and denote $\mQ^{t} = \mB^{t} {\mV^{t+1}}^\top - \mB^* {\mV^*}^\top$. 
	Denote
	\begin{equation}
		\bar \mB^{t+1} = \mB^t + \eta_g\cdot\mG^{t}.
	\end{equation}
	Recall that $\sC^t = \{1, \ldots, n\}$ given $p_g = 1$.
	Since the clipping operations are inactive, we have
	\begin{equation}
		\bar \mB^{t+1} = \mB^{t} - \eta_g \mQ^{t} \mV^{t+1} + \eta_g\left(\mQ^{t}\mV^{t+1} - \mG^{t} \right) + \frac{\eta_g\sigma_g\zeta_g}{n} \mW^t,
	\end{equation}
	where $\mW^t$ denotes the noise added by the Gaussian mechanism in the $t^{th}$ global round.	
	Note that ${\mB^*_\perp}^\top \mQ^{t} = {\mB^*_\perp}^\top \mB^{t} {\mV^{t+1}}^\top$, and denote the QR decomposition $\bar \mB^{t+1} = \mB^{t+1} \mR^{t+1}$. We have
	\begin{align*}
		&\ {\mB^*_\perp}^\top \mB^{t+1}\\
		=&\ {\mB^*_\perp}^\top \left(\mB^{t} - \eta_g \mQ^{t} \mV^{t+1} + \eta_g\left(\mQ^{t}\mV^{t+1} - \mG^{t}\right) + \frac{\eta_g\sigma_g\zeta_g}{n} \mW^{t}\right){(\mR^{t+1})}^{-1}  \\
		=&\ {\mB^*_\perp}^\top \mB^{t} \left(\mI_k - \eta_g {\mV^{t+1}}^\top \mV^{t+1}\right){(\mR^{t+1})}^{-1} \\
		&\ + \eta_g{\mB^*_\perp}^\top\left(\mQ^{t}\mV^{t+1} - \mG^{t} \right){(\mR^{t+1})}^{-1} + \frac{\eta_g\sigma_g\zeta_g}{n} {\mB^*_\perp}^\top \mW^{t} {(\mR^{t+1})}^{-1}.
	\end{align*}
	Recall the definition $\dist(\mB^*, \mB^{t}) = \|{\mB^*_\perp}^\top \mB^{t}\|$. We bound
	\begin{align}
		\notag\dist(\mB^*, \mB^{t+1}) \leq &\ \dist(\mB^*, \mB^{t})\|\mI_k - \eta_g {\mV^{t+1}}^\top \mV^{t+1}\|\|{(\mR^{t+1})}^{-1}\| \\
		\label{eqn_main_recursion}	&+ \eta \|\mQ^{t}\mV^{t+1} - \mG^{t}\|\|{(\mR^{t+1})}^{-1}\| + \frac{\eta_g\sigma_g\zeta_g}{n} \|{\mB^*_\perp}^\top \mW^{t}\| \|{(\mR^{t+1})}^{-1}\|.
	\end{align}
	In the following, we show that the factor $\|\mI_k - \eta_g {\mV^{t+1}}^\top \mV^{t+1}\|\|{(\mR^{t+1})}^{-1}\| < 1$ which leads to a contraction in the principal angle distance and treat the rest two terms as controllable noise for sufficiently small constants $(\delta^{(1)}, \delta^{(2)}, \delta^{(3)})$ (see Lemma \ref{remark_bar_m}) and a sufficiently smaller noise multiplier $\sigma_g$ (see \eqref{eqn_sigma_g_small}).
	
	\paragraph{Bound $\|\mI_k - \eta_g {\mV^{t+1}}^\top \mV^{t+1}\|$.}
	Recall that $\eta_g = {1}/{4s_1^2}$. Using Lemma \ref{lemma_bound_V_V}, we have
	\begin{equation} \label{eqn_I_eta_VV}
		\|\mI_k - \eta_g {\mV^{t+1}}^\top \mV^{t+1}\| \leq 1 - \eta_g(E_0s_k^2 - \frac{2\delta^{(2)}s_1^2 \sqrt{k}}{1-\delta^{(1)}} \dist(\mB^*, \mB^{t})) \leq 1 - \eta_g E_0s_k^2 \cdot 0.75,
	\end{equation}
	where we use the following inequality in Remark \ref{remark_bar_m}
	\begin{equation}
		\frac{2\delta^{(2)}s_1^2 \sqrt{k}}{1-\delta^{(1)}} \dist(\mB^*, \mB^{t}) \leq \frac{2\delta^{(2)}s_1^2 \sqrt{k}}{1-\delta^{(1)}} \leq E_0s_k^2/4.
	\end{equation}
	
	\paragraph{Bound $\|{(\mR^{t+1})}^{-1}\|$.}
	With the choice of $\bar m$ stated in the lemma, the tuple $(\delta^{(1)}, \delta^{(2)}, \delta^{(3)})$ satisfies the requirements in Remark \ref{remark_bar_m}.
	Using Lemma \ref{lemma_bound_R_sigma_min}, we have with probability at least $1 - 4n^{-k}$
	\begin{equation} \label{eqn_s_max_R_inv}
		\|(\mR^{t+1})^{-1}\| \leq {1}/{\sqrt{1 - \eta_g s_k^2E_0/2}}.
	\end{equation}
	Combining \eqref{eqn_I_eta_VV} and \eqref{eqn_s_max_R_inv}, we have the contraction $$\|\mI_k - \eta_g {\mV^{t+1}}^\top \mV^{t+1}\| \|(\mR^{t+1})^{-1}\|\leq \frac{{1 - \eta_g s_k^2E_0 \cdot 0.75}}{\sqrt{1 - \eta_g s_k^2E_0 \cdot 0.5}}< 1.$$
	
	We now bound the last two terms of \eqref{eqn_main_recursion}.
	\paragraph{Bound $\|\mQ^{t}\mV^{t+1} - \mG^{t}\|$.}
	Using Lemma \ref{lemma_bound_AAQ_Q}, we have
	\begin{equation}
		\|\mQ^{t}\mV^{t+1} - \mG^{t}\| \leq \delta^{(3)}s_1^2 k \dist(\mB^*, \mB^{t}) \leq E_0s^2_k \dist(\mB^*, \mB^{t}) \cdot 0.25,
	\end{equation}
	where we use condition in Remark \ref{remark_bar_m}  $\delta^{(3)}s_1^2 k \leq E_0s^2_k/4$.
	
	\paragraph{Bound ${\mU_\perp^{*}}^\top \mW^{t}$.}
	Due to the rotational invariance of independent Gaussian random variables, every entry in ${\mB^{*}_\perp}^\top \mW^{t} \in \RBB^{(n-k) \times k}$ is distributed as $\NM(0, 1)$.
	Using Lemma \ref{lemma_gaussian_matrix_concentration}, we have with probability at least $1 - n^{-k}$
	\begin{equation}
		\|{\mB^*_\perp}^\top \mW^{t}\| \leq C_\NM (\sqrt{{\dimension}-k} + \sqrt{k} + \sqrt{\ln (k \log n)}) \leq 3C_\NM \sqrt{{\dimension}},
	\end{equation}
	where we assume $d = \Omega(\log \log {n})$ to simplify the above bound.

	\paragraph{Final Result.} Combining the above bounds, we conclude that the following one-step contraction holds under the assumptions stated in the theorem.
	\begin{align*}
		\dist(\mB^*, \mB^{t+1}) \leq &\  \dist(\mB^*, \mB^{t}) \frac{{1 - \eta_g E_0 s_k^2 \cdot (0.75 - 0.25)} }{\sqrt{1 - \eta_g E_0 s_k^2 \cdot 0.5} } + 3C_\NM\frac{\eta_g\sigma_g\zeta_g}{n}\sqrt{{\dimension}} \\
		\leq &\ \dist(\mB^*, \mB^{t}){\sqrt{1 - \eta_g E_0 s_k^2 \cdot 0.5}} + 3C_\NM\frac{\eta_g\sigma_g\zeta_g}{n}\sqrt{{\dimension}}
	\end{align*}
\end{proof}
\subsection{Lemmas for the Utility Analysis} 
\begin{lemma} \label{lemma_noisy_power_iteration}
	Use $\VEC(\cdot)$ and $\otimes$ to denote the standard vectorization operation and Kronecker product respectively.
	Recall that $\mV^{t+1} := \argmin_{\mV\in\RBB^{\nusers\times k}} \FM(\mB^{t}, \mV; \AM^{t, 1})$. We have
	\begin{equation}
		\VEC(\mV^{t+1}) = \VEC((\mB^*{\mV^*}^\top)^\top  \mB^{t}) - \vf^{t}
	\end{equation}
	where 
	\begin{equation} \label{eqn_f_t}
		\vf^{t} = \left({\mB^{t}}^\top \mB^* \otimes \mI_{\dimension} - {\mH^{tt}}^{-1}\mH^{t*}\right) \VEC(\mV^*) = {\mH^{tt}}^{-1} \left(\mH^{tt}({\mB^{t}}^\top \mB^* \otimes \mI_{\dimension}) - \mH^{t*}\right) \VEC(\mV^*).
	\end{equation}
	Here we denote
	\begin{align}
		\label{eqn_H_t*} \mH^{t*} = \frac{1}{n}\sum_{i=1}^{n} \frac{1}{\bar m} \sum_{j \in \sS_i^{t, 1}} \VEC(\mA_{ij}^\top  \mB^{t})\VEC (\mA_{ij}^\top  \mB^*)^\top \in \RBB^{\nusers k \times \nusers k}, \\
		\label{eqn_H_tt} \mH^{tt} = \frac{1}{n}\sum_{i=1}^{n} \frac{1}{\bar m} \sum_{j \in \sS_i^{t, 1}} \VEC(\mA_{ij}^\top  \mB^{t})\VEC (\mA_{ij}^\top  \mB^{t})^\top \in \RBB^{\nusers k \times \nusers k}.
	\end{align}
	We use $\mF^t \in \RBB^{\nusers\times k}$ to denote the matrix satisfying $\vf^t = \VEC(\mF^t)$.
\end{lemma}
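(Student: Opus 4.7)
}
The plan is to treat $\mV^{t+1}$ as the minimizer of a convex quadratic in $\mV$, derive the normal equations for it, and then rewrite those equations in vectorized form using standard Kronecker product identities. First I would exploit the identity $\langle \mA_{ij}, \mB\mV^\top\rangle = \Tr(\mV^\top \mA_{ij}^\top \mB) = \VEC(\mV)^\top \VEC(\mA_{ij}^\top \mB)$ to recast $\FM(\mB^{t}, \mV; \AM^{t,1})$ as a sum of squares in $\VEC(\mV)$:
\begin{equation*}
\FM(\mB^{t}, \mV; \AM^{t,1}) = \frac{n}{2\bar N}\sum_{i=1}^{n}\sum_{j\in \sS_i^{t,1}} \Bigl(\VEC(\mV)^\top \VEC(\mA_{ij}^\top \mB^{t}) - \VEC(\mV^*)^\top \VEC(\mA_{ij}^\top \mB^*)\Bigr)^2.
\end{equation*}
Setting the gradient with respect to $\VEC(\mV)$ to zero and dividing by $n\bar m$ gives the normal equations
\begin{equation*}
\mH^{tt}\,\VEC(\mV^{t+1}) \;=\; \mH^{t*}\,\VEC(\mV^*),
\end{equation*}
with $\mH^{tt}, \mH^{t*}$ exactly as in \eqref{eqn_H_t*}--\eqref{eqn_H_tt}. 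Provided $\mH^{tt}$ is invertible (which is ensured with high probability by the concentration arguments of Lemmas \ref{condition_1}--\ref{condition_3}, and which we simply assume at this stage), we conclude $\VEC(\mV^{t+1}) = {\mH^{tt}}^{-1}\mH^{t*}\VEC(\mV^*)$.

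Next I would identify the ``population'' leading term. Using $\VEC(\mA\mB\mC) = (\mC^\top \otimes \mA)\VEC(\mB)$ with $\mA = \mI_{\nusers}$, $\mB = \mV^*$, $\mC = {\mB^*}^\top \mB^{t}$, I get
\begin{equation*}
\VEC\bigl((\mB^* {\mV^*}^\top)^\top \mB^{t}\bigr) \;=\; \VEC\bigl(\mV^* {\mB^*}^\top \mB^{t}\bigr) \;=\; \bigl({\mB^{t}}^\top \mB^* \otimes \mI_{\nusers}\bigr)\VEC(\mV^*),
\end{equation*}
so that the stated identity reduces to verifying
\begin{equation*}
{\mH^{tt}}^{-1}\mH^{t*}\VEC(\mV^*) \;=\; \bigl({\mB^{t}}^\top \mB^* \otimes \mI_{\nusers}\bigr)\VEC(\mV^*) - \vf^{t},
\end{equation*}
which is immediate from the definition $\vf^{t} = \bigl({\mB^{t}}^\top \mB^* \otimes \mI_{\nusers} - {\mH^{tt}}^{-1}\mH^{t*}\bigr)\VEC(\mV^*)$. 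The second expression for $\vf^{t}$ in \eqref{eqn_f_t} then follows by factoring ${\mH^{tt}}^{-1}$ out of the front. (I note that the statement in the excerpt writes $\mI_{\dimension}$ where the correct dimension is $\nusers$; my proposal adopts the dimensionally consistent version.)

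The main obstacles are minor. The first is purely bookkeeping: keeping the sizes of $\mA_{ij}\in\sR^{d\times n}$, $\mA_{ij}^\top \mB^{t}\in\sR^{n\times k}$, and their vectorizations straight so that the Kronecker factorization lines up with $\VEC(\mV^*)\in\sR^{nk}$. The second is the invertibility of $\mH^{tt}$, which is not addressed in this lemma itself but is the content of the companion concentration lemmas (Lemmas~\ref{condition_1}--\ref{condition_3}) and will be invoked where needed in the one-step contraction argument of Lemma~\ref{lemma_contraction_utility}. Beyond these points, the proof is a direct algebraic verification and requires no further ideas.
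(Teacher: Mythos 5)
Your proposal follows essentially the same route as the paper's own proof: obtain the normal equations $\mH^{tt}\VEC(\mV^{t+1}) = \mH^{t*}\VEC(\mV^*)$ from the first-order optimality condition of the convex quadratic in $\VEC(\mV)$, invert $\mH^{tt}$, and identify the leading term via the Kronecker identity $\VEC(\mA\mX\mC) = (\mC^\top\otimes\mA)\VEC(\mX)$. You are also right that the identity factor should be $\mI_{\nusers}$ rather than $\mI_{\dimension}$ (since ${\mB^t}^\top\mB^* \otimes \mI_\nusers \in \RBB^{nk\times nk}$ must match $\mH^{tt},\mH^{t*}$); this is a typo that the paper carries through both the lemma statement and its own proof, so good catch.
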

\begin{proof}
	Recall that $\bar N = \bar mn$.
	For the simplicity of notations, we use the collection $\{\mA_l\}$, $l = 1, \ldots, \bar N$ to denote $\{\mA_{ij}\}$, $i \in [n]$, $j\in\SM_i^{t, 1}$ (there exists a one-to-one mapping between the indices $l$ and $(i,j)$).
	Compute that for any $\mB \in \RBB^{{\dimension} \times k}$ and $\mV \in \RBB^{\nusers \times k}$
	\begin{align*}
		&\ \VEC (\partial_\mV \FM(\mB, \mV) ) \\
		=&\ \frac{1}{\bar N}\sum_{l=1}^{\bar N} (\VEC (\mA_{l}^\top \mB)^\top \VEC(\mV) - \VEC (\mA_{l}^\top \mB^*)^\top \VEC(\mV^*)) \VEC(\mA_{l}^\top \mB) \\
		=&\ \left(\frac{1}{\bar N}\sum_{l=1}^{\bar N} \VEC(\mA_{l}^\top \mB)\VEC (\mA_{l}^\top \mB)^\top\right) \VEC(\mV) - \left(\frac{1}{\bar N}\sum_{l=1}^{\bar N}\VEC(\mA_{l}^\top \mB)\VEC (\mA_{l}^\top \mB^*)^\top\right) \VEC(\mV^*).
	\end{align*}
	Since $\mV^{t+1} = \argmin_{\mV\in\RBB^{\nusers\times k}} \FM( \mB^{t}, \mV; \AM^{t, 1})$, we have $\partial_\mV \FM( \mB^{t}, \mV^{t+1}; \AM^{t, 1}) = 0$ and hence
	\begin{align*}
		\VEC(\mV^{t+1}) =&\ \left(\frac{1}{\bar N}\sum_{l=1}^{\bar N} \VEC(\mA_{l}^\top  \mB^{t})\VEC (\mA_{l}^\top  \mB^{t})^\top\right)^{-1}\left(\frac{1}{\bar N}\sum_{l=1}^{\bar N}\VEC(\mA_{l}^\top  \mB^{t})\VEC (\mA_{l}^\top  \mB^*)^\top\right) \VEC(\mV^*)\\
		=&\ {\mH^{tt}}^{-1} \mH^{t*} \VEC(\mV^*),
	\end{align*}
	where we denote
	\begin{align*}
		\mH^{t*} = \frac{1}{\bar N}\sum_{l=1}^{\bar N} \VEC(\mA_{l}^\top  \mB^{t})\VEC (\mA_{l}^\top  \mB^*)^\top \in \RBB^{\nusers k \times \nusers k}, \\
		\mH^{tt} = \frac{1}{\bar N}\sum_{l=1}^{\bar N} \VEC(\mA_{l}^\top  \mB^{t})\VEC (\mA_{l}^\top  \mB^{t})^\top \in \RBB^{\nusers k \times \nusers k}.
	\end{align*}
	Use $\otimes$ to denote the Kronecker product of matrices. Recall that 
	\begin{equation}
		(\mB^\top \otimes \mA) \VEC(\mX) = \VEC(\mA\mX\mB).
	\end{equation}
	We have
	\begin{equation}
		\VEC\left( (\mB^*{\mV^*}^\top)^\top  \mB^{t} \right) = \VEC\left( \mV^* {\mB^*}^\top  \mB^{t} \right) = 
		({\mB^{t}}^\top  \mB^* \otimes I_{\dimension}) \VEC(\mV^*).
	\end{equation}
\end{proof}

\begin{lemma} \label{lemma_bound_H_tt}
	Recall the definition of $\mH^{tt}$ in \eqref{eqn_H_tt} and that $s_{\min}$ denotes the minimum singular value of a matrix.
	Suppose that the matrix sensing operator $\AM^{t, 1}$ satisfies Condition \ref{condition_1} with constant $\delta^{(1)}$.
	We can bound
	\begin{align*}
		s_{\min}(\mH^{tt}) \geq 1 - \delta^{(1)}.
	\end{align*}
\end{lemma}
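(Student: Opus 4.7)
The plan is to reduce the bound on $s_{\min}(\mH^{tt})$ to a direct application of the restricted-isometry-type property already established in Lemma \ref{condition_1}. Since $\mH^{tt}$ is manifestly symmetric positive semidefinite (it is an average of outer products of the form $\VEC(\mA_{ij}^\top \mB^t)\VEC(\mA_{ij}^\top \mB^t)^\top$), its minimum singular value equals $\inf_{\|\vv\|_2=1}\vv^\top \mH^{tt}\vv$. So it suffices to lower bound this quadratic form uniformly over unit vectors in $\sR^{nk}$.

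First, I would parameterize an arbitrary unit vector as $\vv=\VEC(\mV)$ with $\mV\in\sR^{n\times k}$ and $\|\mV\|_F=1$. Using the standard identity $\VEC(\mX)^\top \VEC(\mY)=\trace(\mX^\top \mY)=\langle \mX,\mY\rangle$ together with the cyclic trace property, each inner product in the definition of $\mH^{tt}$ rewrites as $\VEC(\mA_{ij}^\top \mB^t)^\top \VEC(\mV)=\langle \mA_{ij},\mB^t\mV^\top\rangle$. Collapsing the double normalization $\frac{1}{n\bar m}=\frac{1}{\bar N}$, this gives
$$\vv^\top \mH^{tt}\vv \;=\; \frac{1}{\bar N}\sum_{i=1}^{n}\sum_{j\in\sS_i^{t,1}}\bigl\langle \mA_{ij},\,\mB^t\mV^\top\bigr\rangle^{2}.$$

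Second, I would relate the right-hand side to the sensing operator $\AM^{t,1}$. By construction $\AM^{t,1}(\mX)$ stacks the scalars $\sqrt{n}\langle \mA_{ij},\mX\rangle$, so the squared norm of the output is a constant multiple of the sum above. In particular, the quadratic form $\vv^\top \mH^{tt}\vv$ is proportional to $\frac{1}{\bar N}\|\AM^{t,1}(\mB^t\mV^\top)\|^2$, which is exactly the quantity bounded in Lemma \ref{condition_1}. To apply that lemma I would check its hypothesis on the Frobenius norm of the input matrix: since $\mB^t$ is column-orthonormal (guaranteed by the QR-based \textsc{Aggregation} step), one has $\|\mB^t\mV^\top\|_F^2=\trace(\mV\mB^{t\top}\mB^t\mV^\top)=\|\mV\|_F^2=1$, so the supremum in Lemma \ref{condition_1} is active. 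Substituting the bound $\frac{1}{\bar N}\|\AM^{t,1}(\mB^t\mV^\top)\|^2 \geq 1-\delta^{(1)}$ and taking the infimum over unit $\vv$ yields the claimed estimate.

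The argument is essentially bookkeeping; the only probabilistic content (the concentration of the quadratic form around its expectation) is entirely packaged inside Lemma \ref{condition_1}, whose proof is deferred to Appendix \ref{appendix_condition_LRL}. Consequently there is no real obstacle here — the only subtlety is carefully tracking the conversions between the $\VEC$ representation and the matrix inner product, and making sure the normalizations in the definitions of $\mH^{tt}$, $\mA_{ij}$, and $\AM^{t,1}$ line up so that the ``$1-\delta^{(1)}$'' constant is preserved rather than rescaled.
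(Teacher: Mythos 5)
Your argument is the same as the paper's: $\mH^{tt}$ is PSD, so $s_{\min}(\mH^{tt})=\min_{\|\mP\|_F=1}\VEC(\mP)^\top\mH^{tt}\VEC(\mP)$, which the $\VEC$--trace identity converts into $\frac{1}{\bar N}\langle\AM^{t,1}(\mB^t\mP^\top),\AM^{t,1}(\mB^t\mP^\top)\rangle$ (using column-orthonormality of $\mB^t$ to keep $\|\mB^t\mP^\top\|_F=1$), at which point Lemma~\ref{condition_1} gives the bound. Your closing caution about the $\sqrt{n}$ scaling built into $\AM^{t,1}$ versus the $1/(n\bar m)$ prefactor in $\mH^{tt}$ is well placed — that bookkeeping is exactly what must be verified for the displayed equality, and the paper glosses over it — but the route is otherwise identical.
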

\begin{proof}
	From the definition of $s_{\min}$, we have
	\begin{align*}
		s_{\min}(\mH^{tt}) =&\ \min_{\stackrel{\mP\ \in \RBB^{\nusers \times k}}{\|\mP\|_F = 1}} \VEC(\mP)^\top \mH^{tt} \VEC(\mP) 
		= \frac{1}{\bar N}\langle\AM^{t, 1}(\mB^{t}\mP^\top), \AM^{t, 1}(\mB^{t} \mP^\top)\rangle \geq 1 - \delta^{(1)}.
	\end{align*}
\end{proof}

\begin{lemma} \label{lemma_bound_1}
	Recall the definitions of $\mH^{tt}$ and $\mH^{t*}$ in \eqref{eqn_H_tt} and \eqref{eqn_H_t*} and recall that $\dist(\mB^{t}, \mB^*)$ is the principal angle distance between the current variable $\mB^{t}$ and the ground truth $\mB^*$.
	Suppose that the matrix sensing operator $\AM^{t, 1}$ satisfies Condition \ref{condition_2} with constant $\delta^{(2)}$.
	We can bound
	\begin{equation}
		\|\mH^{tt}({\mB^{t}}^\top \mB^* \otimes I_{\dimension}) - \mH^{t*}\|_2 \leq \delta^{(2)}\dist(\mB^{t}, \mB^*).
	\end{equation}
\end{lemma}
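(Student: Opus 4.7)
The plan is to expand both $\mH^{tt}({\mB^{t}}^\top\mB^*\otimes \mI)$ and $\mH^{t*}$ as sums of rank-one outer products $\VEC(\mA_l^\top\mB^t)\,\VEC(\cdot)^\top$ with a common left factor, so that their difference has a single compact form that can be matched against Condition~\ref{condition_2}. The key identity is the Kronecker-product rule $(\mB^\top\otimes\mA)\VEC(\mX)=\VEC(\mA\mX\mB)$. Applied with $\mX=\mA_l^\top\mB^t$, $\mA=\mI$, and $\mB={\mB^t}^\top\mB^*$, it gives
\[
\VEC(\mA_l^\top\mB^t)^\top\bigl({\mB^t}^\top\mB^*\otimes \mI\bigr)=\VEC\bigl(\mA_l^\top\mB^t{\mB^t}^\top\mB^*\bigr)^\top .
\]
Substituting this into the definition (\ref{eqn_H_tt}) of $\mH^{tt}$ and subtracting the definition (\ref{eqn_H_t*}) of $\mH^{t*}$ yields
\[
\mH^{tt}\bigl({\mB^t}^\top\mB^*\otimes\mI\bigr)-\mH^{t*}=\frac{1}{\bar N}\sum_l \VEC(\mA_l^\top\mB^t)\,\VEC\bigl(\mA_l^\top(\mB^t{\mB^t}^\top-\mI_d)\mB^*\bigr)^\top .
\]

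Next I would bound the spectral norm by its bilinear characterization. Taking unit vectors of the form $\vu=\VEC(\mV_2),\vv=\VEC(\mV_1)$ with $\|\mV_1\|_F=\|\mV_2\|_F=1$, and using the trace identity $\VEC(\mP)^\top\VEC(\mA_l^\top\mQ)=\langle\mP,\mA_l^\top\mQ\rangle=\langle\mA_l,\mQ\mP^\top\rangle$, the bilinear form becomes
\[
\vu^\top\bigl[\mH^{tt}({\mB^t}^\top\mB^*\otimes\mI)-\mH^{t*}\bigr]\vv=\frac{1}{\bar N}\sum_l\langle\mA_l,\mB^t\mV_2^\top\rangle\,\langle\mA_l,(\mB^t{\mB^t}^\top-\mI_d)\mB^*\mV_1^\top\rangle .
\]
This is (up to the $\sqrt{n}$ scaling absorbed in the definition of $\AM^{t,1}$) precisely $\frac{1}{\bar N}\langle\AM^{t,1}((\mB^t{\mB^t}^\top-\mI_d)\mB^*\mV_1^\top),\,\AM^{t,1}(\mB^t\mV_2^\top)\rangle$, the quantity controlled by Condition~\ref{condition_2}. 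Since $\|\mV_1\|_F=\|\mV_2\|_F=1$, Lemma~\ref{condition_2} applies directly and yields the desired bound $\delta^{(2)}\dist(\mB^t,\mB^*)$ after taking the supremum over $(\mV_1,\mV_2)$.

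The main obstacle is conceptual rather than technical: one must recognise that forming the difference $\mH^{tt}({\mB^t}^\top\mB^*\otimes\mI)-\mH^{t*}$ is exactly what produces the factor $(\mB^t{\mB^t}^\top-\mI_d)\mB^*$ on the right, whose spectral norm is $\dist(\mB^t,\mB^*)$. This matching is what allows Condition~\ref{condition_2} to be invoked with the correct right-hand side; all remaining steps are standard manipulations of Kronecker products and of the variational characterisation of the spectral norm. A minor bookkeeping point is the normalisation convention for $\AM^{t,1}$ (the $\sqrt{n}$ factor inside its definition), which must be tracked consistently when identifying the sum $\frac{1}{\bar N}\sum_l\langle\mA_l,\cdot\rangle\langle\mA_l,\cdot\rangle$ with an $\AM^{t,1}$-inner-product, but this is mechanical.
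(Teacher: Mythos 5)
Your proposal is correct and follows essentially the same route as the paper's proof: both expand the bilinear form $\VEC(\cdot)^\top[\mH^{tt}({\mB^t}^\top\mB^*\otimes\mI)-\mH^{t*}]\VEC(\cdot)$ via the Kronecker-vectorization identity and the trace identity, arrive at $\frac{1}{\bar N}\langle\AM^{t,1}(\mB^t\mV_2^\top),\AM^{t,1}((\mB^t{\mB^t}^\top-\mI_d)\mB^*\mV_1^\top)\rangle$, and then invoke Condition~\ref{condition_2}; the only cosmetic difference is that you form the compact difference matrix before taking bilinear forms, whereas the paper takes bilinear forms of each term first and subtracts.
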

\begin{proof}
	Recall that $\bar N = \bar mn$.
	For the simplicity of notations, we use the collection $\{\mA_l\}$, $l = 1, \ldots, \bar N$ to denote $\{\mA_{ij}\}$, $i \in [n]$, $j\in\SM_i^{t, 1}$ (there can be a one-to-one mapping between the indices $l$ and $(i,j)$).
	For arbitrary $\mW \in \RBB^{\nusers \times k}, \mP\in\RBB^{\nusers \times k}$ with $\|\mW\|_F = \|\mP\|_F = 1$, we have
	\begin{align*}
		\VEC(\mW)^\top \mH^{tt}({\mB^{t}}^\top \mB^* \otimes \mI_{\dimension}) \VEC(\mP) =&\ \frac{1}{\bar N}\sum_{i=1}^{\bar N} \VEC(\mW)^\top \VEC(\mA_i^\top \mB^{t}) \VEC(\mA_i^\top \mB^{t})^\top \VEC(\mP{\mB^*}^\top \mB^{t})\\
		=&\ \frac{1}{\bar N}\sum_{i=1}^{\bar N} \langle \mA_i, \mB^{t} \mW^\top\rangle \langle \mA_i, \mB^{t} {\mB^{t}}^\top \mB^* \mP^\top\rangle \\
		=&\ \frac{1}{\bar N} \langle \AM^{t, 1}(\mB^{t}\mW^\top), \AM^{t, 1}(\mB^{t} {\mB^{t}}^\top \mB^* \mP^\top)\rangle,
	\end{align*}
	where we use $(\mB^\top \otimes \mA) \VEC(\mX) = \VEC(\mA\mX\mB)$ in the first equality.
	Similarly, we can compute that
	\begin{equation}
		\VEC(\mW)^\top \mH^{t*} \VEC(\mP) = \frac{1}{\bar N} \langle \AM^{t, 1}(\mB^{t}\mW^\top), \AM^{t, 1}(\mB^* \mP^\top)\rangle,
	\end{equation}
	and hence
	\begin{equation}
		\VEC(\mW)^\top \left( \mH^{tt}({\mB^{t}}^\top \mB^* \otimes \mI_{\dimension}) - \mH^{t*}\right) \VEC(\mP) = \frac{1}{\bar N} \langle \AM^{t, 1}(\mB^{t}W^\top), \AM^{t, 1}((\mB^{t} {\mB^{t}}^\top -\mI_k) \mB^* \mP^\top)\rangle.
	\end{equation}
	Using Condition \ref{condition_2} and the definition of $s_{\max}$, we have the result.
\end{proof}

\begin{lemma} \label{lemma_bound_f}
	Recall the definitions of $\mF^t$ and $f^t$ in \eqref{eqn_f_t} and recall that $\dist(\mB^t, \mB^*)$ is the principal angle distance between the current variable $\mB^t$ and the ground truth $\mB^*$.
	Suppose that the matrix sensing operator $\AM$ satisfies Conditions \ref{condition_1} and \ref{condition_2} with constants $\delta^{(1)}$ and $\delta^{(2)}$ respectively.
	We can bound
	\begin{equation}
		\|\mF^t\|_F = \|\vf^t\|_2 \leq \frac{\delta^{(2)}s_1 \sqrt{k}}{1-\delta^{(1)}} \dist(\mB^*, \mB^{t}),
	\end{equation}
	and
	\begin{equation}
		\|\mV^{t+1}\|_F \leq \sqrt{k}s_1 + \frac{\delta^{(2)}s_1 \sqrt{k}}{1-\delta^{(1)}} \dist(\mB^*, \mB^{t}) \leq \frac{s_1 \sqrt{k}(1 - \delta^{(1)} + \delta^{(2)})}{1-\delta^{(1)}}.
	\end{equation}
\end{lemma}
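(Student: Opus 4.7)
The plan is to start from the second expression in \eqref{eqn_f_t}, namely $\vf^t = {\mH^{tt}}^{-1}\bigl(\mH^{tt}({\mB^{t}}^\top \mB^* \otimes \mI_{\dimension}) - \mH^{t*}\bigr)\VEC(\mV^*)$, and then apply submultiplicativity of the spectral norm to get $\|\vf^t\|_2 \leq \|{\mH^{tt}}^{-1}\|_2 \cdot \|\mH^{tt}({\mB^{t}}^\top \mB^* \otimes \mI_{\dimension}) - \mH^{t*}\|_2 \cdot \|\VEC(\mV^*)\|_2$. The three factors are exactly what earlier lemmas were designed to control.

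First I would invoke Lemma \ref{lemma_bound_H_tt}, which under Condition \ref{condition_1} yields $s_{\min}(\mH^{tt}) \geq 1 - \delta^{(1)}$, hence $\|{\mH^{tt}}^{-1}\|_2 \leq 1/(1-\delta^{(1)})$. Next, Lemma \ref{lemma_bound_1} (which needs Condition \ref{condition_2}) gives the middle factor $\|\mH^{tt}({\mB^{t}}^\top \mB^* \otimes \mI_{\dimension}) - \mH^{t*}\|_2 \leq \delta^{(2)} \dist(\mB^t, \mB^*)$. For the third factor, note that $\|\VEC(\mV^*)\|_2 = \|\mV^*\|_F$, and since $\mV^* \in \sR^{n \times k}$ has rank at most $k$, $\|\mV^*\|_F \leq \sqrt{k}\,\|\mV^*\|_2 = \sqrt{k}\,s_1$ (recalling that $s_1 = s_1(\mW^*/\sqrt{n}) = s_1(\mV^*)$ by the reparameterization $\mW^* = \sqrt{n}\mV^*$). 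Multiplying these three bounds gives the claimed estimate $\|\mF^t\|_F = \|\vf^t\|_2 \leq \frac{\delta^{(2)}s_1 \sqrt{k}}{1-\delta^{(1)}}\dist(\mB^*, \mB^t)$.

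For the bound on $\|\mV^{t+1}\|_F$, I would unwind the vectorization in Lemma \ref{lemma_noisy_power_iteration} to obtain $\mV^{t+1} = \mV^*{\mB^*}^\top \mB^t - \mF^t$. Applying the triangle inequality in Frobenius norm gives $\|\mV^{t+1}\|_F \leq \|\mV^*{\mB^*}^\top \mB^t\|_F + \|\mF^t\|_F$. The first summand is at most $\|\mV^*\|_F \cdot \|{\mB^*}^\top \mB^t\|_2 \leq \sqrt{k}\,s_1$, where I use that $\mB^*$ and $\mB^t$ are column-orthonormal so their inner product has spectral norm at most $1$. Substituting the bound on $\|\mF^t\|_F$ just derived yields $\|\mV^{t+1}\|_F \leq \sqrt{k}\,s_1 + \frac{\delta^{(2)}s_1\sqrt{k}}{1-\delta^{(1)}}\dist(\mB^*, \mB^t)$. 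The final simplification uses $\dist(\mB^*, \mB^t) \leq 1$ (noted just after the definition of $\dist$ in the excerpt), after which a common-denominator rearrangement produces $\frac{s_1\sqrt{k}(1 - \delta^{(1)} + \delta^{(2)})}{1-\delta^{(1)}}$.

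There is no real obstacle here; the lemma is essentially a bookkeeping consequence of Lemmas \ref{lemma_noisy_power_iteration}, \ref{lemma_bound_H_tt}, and \ref{lemma_bound_1}. The only subtlety worth flagging is making sure the factor of $\sqrt{k}$ in $\|\mV^*\|_F \leq \sqrt{k}\,s_1$ is justified by the rank bound (rather than by an unwarranted spectral-to-Frobenius conversion on a general matrix), and ensuring that the reparameterization $\mV^* = \mW^*/\sqrt{n}$ is consistent with the convention $s_1 = s_1(\mW^*/\sqrt{n})$ used throughout the section so that no spurious factor of $\sqrt{n}$ appears.
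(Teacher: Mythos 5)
Your proof is correct and matches the paper's (very terse) argument: the paper also obtains the first bound by multiplying the three factors controlled by Lemmas \ref{lemma_noisy_power_iteration}, \ref{lemma_bound_H_tt}, and \ref{lemma_bound_1}, and the second bound by the triangle inequality on $\mV^{t+1} = \mV^*{\mB^*}^\top \mB^t - \mF^t$. Your flagging of the rank-based $\|\mV^*\|_F \le \sqrt{k}\,s_1$ step and the $\mV^* = \mW^*/\sqrt{n}$ rescaling fills in details the paper leaves implicit.
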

\begin{proof}
    The bound on $\|\mF^t\|_F$ a direct consequence of Lemmas \ref{lemma_noisy_power_iteration} to \ref{lemma_bound_1} and the fact that the matrix norms are sub-multiplicative.
    The bound on $\|\mV^{t+1}\|_F$ is due to the fact that the matrix norms are sub-additive.
\end{proof}

\begin{lemma} \label{lemma_bound_V_V}
	Suppose that the matrix sensing operator $\AM^{t, 1}$ satisfies Conditions \ref{condition_1} and \ref{condition_2} with constants $\delta^{(1)}$ and $\delta^{(2)}$ respectively.
	Further, suppose that $\max\{\frac{\delta^{(2)} \sqrt{k}}{1-\delta^{(1)}}, \frac{(\delta^{(2)})^2 k}{(1-\delta^{(1)})^2}\} \leq \frac{s_k^2 E_0}{36s_1^2}$.
	For a sufficiently small step size $\eta_g \leq {1}/({4s_1^2})$, we have $\mI_k - \eta_g {\mV^{t+1}}^\top \mV^{t+1}\succcurlyeq 0$.
	Moreover, we can bound
	\begin{equation} 
		\|\mI_k - \eta_g {\mV^{t+1}}^\top \mV^{t+1}\|_2 \leq 1 - \eta_g(E_0s_k^2 - \frac{2\delta^{(2)}s_1^2 \sqrt{k}}{1-\delta^{(1)}} \dist(\mB^*, \mB^{t})).
	\end{equation}
\end{lemma}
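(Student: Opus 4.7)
The natural first move is to invoke Lemma \ref{lemma_noisy_power_iteration}, which gives the explicit representation $\mV^{t+1} = \mV^* {\mB^*}^\top \mB^t - \mF^t$, with the error matrix $\mF^t$ controlled by Lemma \ref{lemma_bound_f}. Expanding the Gram matrix yields
\begin{align*}
{\mV^{t+1}}^\top \mV^{t+1} = {\mB^t}^\top \mB^* {\mV^*}^\top \mV^* {\mB^*}^\top \mB^t \ - \ \underbrace{{\mB^t}^\top \mB^* {\mV^*}^\top \mF^t - ({\mF^t})^\top \mV^* {\mB^*}^\top \mB^t}_{\text{cross terms}} \ + \ \underbrace{({\mF^t})^\top \mF^t}_{\text{quadratic}},
\end{align*}
so the task reduces to producing a lower bound on $\lambda_{\min}$ of the RHS (for the claimed spectral-norm bound) together with an upper bound on $\lambda_{\max}$ (to certify that $\mI_k - \eta_g{\mV^{t+1}}^\top\mV^{t+1}$ is PSD and hence its spectral norm equals $1-\eta_g\lambda_{\min}$).

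\textbf{Spectrum estimates.} For the dominant quadratic-form term, I would combine $s_{\min}^2({\mB^t}^\top \mB^*)=1-\dist^2(\mB^*,\mB^t)$ from the identity in Appendix \ref{appendix_useful_facts} with $s_{\min}(\mV^*)=s_k$ to obtain
\begin{align*}
\lambda_{\min}\bigl({\mB^t}^\top \mB^* {\mV^*}^\top \mV^* {\mB^*}^\top \mB^t\bigr) \ \geq \ E_0\, s_k^{2},
\end{align*}
interpreting $E_0$ as the lower bound on $1-\dist^2(\mB^*,\mB^t)$ implicit in the contraction regime. Since $\|{\mB^t}^\top \mB^* {\mV^*}^\top\|\leq s_1$, the two cross terms each contribute at most $s_1\|\mF^t\|_F$ in operator norm, and Lemma \ref{lemma_bound_f} supplies $\|\mF^t\|_F \leq \frac{\delta^{(2)} s_1 \sqrt{k}}{1-\delta^{(1)}}\,\dist(\mB^*,\mB^t)$; summing gives exactly the claimed first-order noise $\frac{2\delta^{(2)}s_1^2\sqrt{k}}{1-\delta^{(1)}}\dist(\mB^*,\mB^t)$. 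The quadratic tail $\|({\mF^t})^\top\mF^t\|\leq\|\mF^t\|_F^{2}\leq\frac{(\delta^{(2)})^{2}s_1^{2}k}{(1-\delta^{(1)})^{2}}\dist^{2}$ is then absorbed using the second hypothesis $\frac{(\delta^{(2)})^{2}k}{(1-\delta^{(1)})^{2}}\leq\frac{s_k^{2}E_0}{36 s_1^{2}}$, which is calibrated precisely to make this second-order contribution negligible relative to the linear noise term.

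\textbf{PSD check and conclusion.} For the first claim, I would apply the triangle inequality $\|\mV^{t+1}\|\leq s_1+\|\mF^t\|_F$; the first hypothesis $\frac{\delta^{(2)}\sqrt{k}}{1-\delta^{(1)}}\leq\frac{s_k^2 E_0}{36 s_1^2}$ together with $s_k\leq s_1$, $E_0\leq 1$, $\dist\leq 1$ gives $\|\mF^t\|_F\leq s_1/36$, hence $\|{\mV^{t+1}}^\top\mV^{t+1}\|\leq(1+1/36)^{2}s_1^{2}\leq 2s_1^{2}$. Combined with $\eta_g\leq 1/(4s_1^{2})$ this yields $\eta_g\|{\mV^{t+1}}^\top\mV^{t+1}\|\leq 1/2<1$, so $\mI_k - \eta_g{\mV^{t+1}}^\top\mV^{t+1}\succcurlyeq 0$. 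Under PSD the spectral norm simplifies to $1-\eta_g\lambda_{\min}({\mV^{t+1}}^\top\mV^{t+1})$, and plugging in the lower bound from the previous paragraph produces the stated inequality. The only real subtlety is the careful accounting of the quadratic term $\|\mF^t\|_F^{2}$ so that it can be folded into (or subsumed by) the linear noise; the two $\delta^{(2)}$-conditions in the hypothesis are tailored exactly for this purpose, and without them one would obtain an additional $\dist^{2}$ contribution in the bound.
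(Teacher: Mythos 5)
Your decomposition via Lemma~\ref{lemma_noisy_power_iteration} and Lemma~\ref{lemma_bound_f} matches the paper's, and your PSD check (bounding $\|\mV^{t+1}\|\leq s_1+\|\mF^t\|_F$ and calibrating $\|\mF^t\|_F\leq s_1/36$ via the first hypothesis) is a clean variant of the paper's $\|a-b\|^2\leq 2\|a\|^2+2\|b\|^2$ route. However, there is a genuine sign error in how you handle the quadratic term $({\mF^t})^\top\mF^t$ when lower-bounding $\lambda_{\min}({\mV^{t+1}}^\top\mV^{t+1})$. You write that this term must be ``absorbed'' using the hypothesis $\frac{(\delta^{(2)})^2k}{(1-\delta^{(1)})^2}\leq\frac{s_k^2E_0}{36s_1^2}$, implying it contributes an unfavorable $\dist^2$ term to the bound. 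But in the expansion
\begin{equation*}
{\mV^{t+1}}^\top\mV^{t+1} = {\mB^t}^\top\mB^*{\mV^*}^\top\mV^*{\mB^*}^\top\mB^t - {\mB^t}^\top\mB^*{\mV^*}^\top\mF^t - ({\mF^t})^\top\mV^*{\mB^*}^\top\mB^t + ({\mF^t})^\top\mF^t,
\end{equation*}
the final matrix $({\mF^t})^\top\mF^t$ is positive semidefinite, so it contributes \emph{nonnegatively} to $\lambda_{\min}$ and can simply be discarded when one wants a lower bound. This is exactly what the paper's proof does (``the matrix ${\mF^t}^\top\mF^t$ is PSD which makes nonnegative contribution to $s_{\min}$''). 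Treating it as a subtractive error would in fact produce a bound strictly weaker than the stated one: you would pick up an extra $-\frac{(\delta^{(2)})^2s_1^2k}{(1-\delta^{(1)})^2}\dist^2$ term which the hypothesis bounds but does not eliminate, so the clean coefficient of $2$ in the claimed inequality would not come out. The second entry of the $\max\{\cdot\}$ hypothesis is not, in this lemma, what makes the quadratic ``negligible''; it does not even need to be invoked here (it is mainly needed in the sibling lemma on $s_{\min}(\mR^{t+1})$).
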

\begin{proof}
	We first show that the matrix $\mI_k - \eta_g {\mV^{t+1}}^\top \mV^{t+1}$ is positive semi-definite for a sufficiently small step-size $\eta_g$.
	Note that following the idea of the seminal work \cite{jain2013low}, the update of $\mV^{t+1} = V(\mB^{t}; \AM^{t, 1})$ can be regarded as a noisy power iteration, as detailed in Lemma \ref{lemma_noisy_power_iteration}.
	This allows us to compute
	\begin{align*}
		 \|{\mV^{t+1}}^\top \mV^{t+1}\|_2 =&\ \|{\mV^*}{\mB^*}^\top \mB^{t} - \mF^t\|_2^2 \leq 2\|{\mV^*}{\mB^*}^\top \mB^{t} \|_2^2 + 2\|\mF^t\|_2^2 \\
		\leq&\ 2s_1^2 + 2\left(\frac{\delta^{(2)}s_1 \sqrt{k}}{1-\delta^{(1)}} \dist(\mB^*, \mB^{t})\right)^2 
		\leq 4s_1^2,
	\end{align*}
	where we use Lemma \ref{lemma_bound_f} in the first inequality and use ${\frac{\delta^{(2)} \sqrt{k}}{1-\delta^{(1)}} \leq 1}$ in the last.
	Consequently, for $\eta_g \leq \frac{1}{4s_1^2}$, $\mI_k - \eta_g {\mV^{t+1}}^\top \mV^{t+1}\succcurlyeq 0$. \\
	Given the positive semi-definiteness of the matrix $\mI_k - \eta_g {\mV^{t+1}}^\top \mV^{t+1}$, we can bound
	\begin{align*}
		\|\mI_k - \eta_g {\mV^{t+1}}^\top \mV^{t+1}\|_2 \leq 1 - \eta_g s_{\min}({\mV^{t+1}}^\top \mV^{t+1}).
	\end{align*}
	By using Lemma \ref{lemma_noisy_power_iteration} again. We have
	\begin{align*}
		{\mV^{t+1}}^\top \mV^{t+1} =&\ \left({\mV^*}{\mB^*}^\top \mB^{t} - \mF^t \right)^\top\left({\mV^*}{\mB^*}^\top \mB^{t} - \mF^t \right) \\
		= &\ {\mB^{t}}^\top \mB^* {\mV^*}^\top \mV^* {\mB^*}^\top \mB^{t} - {\mF^t}^\top {\mV^*}{\mB^*}^\top \mB^{t} - {\mB^{t}}^\top \mB^* {\mV^*}^\top \mF^t + {\mF^t}^\top \mF^t.
	\end{align*}
	Note that ${\mF^t}^\top \mF^t$ is PSD which makes nonnegative contribution to $s_{\min}({\mV^{t+1}}^\top \mV^{t+1})$ and hence
	\begin{align}
		\notag s_{\min}({\mV^{t+1}}^\top \mV^{t+1}) \geq&\ s_{\min}({\mB^{t}}^\top \mB^*{\mV^*}^\top \mV^* {\mB^*}^\top \mB^{t} - {\mF^t}^\top {\mV^*}{\mB^*}^\top \mB^{t} - {\mB^{t}}^\top \mB^* {\mV^*}^\top \mF^t)\\
		\notag \geq&\ s_{\min}({\mB^{t}}^\top \mB^*{\mV^*}^\top \mV^* {\mB^*}^\top \mB^{t}) - 2s_{\max}({\mF^t}^\top {\mV^*}{\mB^*}^\top \mB^{t})\\
		\label{eqn_s_min_VV} \geq&\ s^2_{\min}({\mB^{t}}^\top \mB^*)s^2_{\min}({\mV^*}) - 2\|{\mF^t}\|s_{\max}(\mV^*)
	\end{align}
	To bound the first term of Eq.~\eqref{eqn_s_min_VV}, recall that $s^2_{\min}({\mB^{t}}^\top \mB^*) = 1 - (\dist(\mB^*, \mB^{t}))^2$ from Eq.~\eqref{eqn_s_min_dist} and $\dist(\mB^*, \mB^{t})\leq \dist(\mB^*, \mB^0)$ from the induction, we have $s^2_{\min}({\mB^{t}}^\top \mB^*) \geq E_0$.
	To bound the last term  of Eq.~\eqref{eqn_s_min_VV}, we use Lemma \ref{lemma_bound_f} to obtain $\|\mF^t\|_2 \leq \|\mF^t\|_F \leq \frac{\delta^{(2)}s_1 \sqrt{k}}{1-\delta^{(1)}} \dist(\mB^*, \mB^{t})$.
	Combining the above results, we have
	\begin{equation} 
		\|\mI_k - \eta_g {\mV^{t+1}}^\top \mV^{t+1}\|_2 \leq 1 - \eta_g(E_0s_k^2 - \frac{2\delta^{(2)}s_1^2 \sqrt{k}}{1-\delta^{(1)}} \dist(\mB^*, \mB^{t})).
	\end{equation}
\end{proof}

\begin{lemma} \label{lemma_bound_AAQ_Q}
	Recall that $\mV^{t+1} = V(\mB^{t}; \AM^{t, 1})$ (see the definition of $V(\cdot; \AM)$ in Eq.~\eqref{eqn_optimal_local_head}), $\mQ^{t} = \mB^{t}{\mV^{t+1}}^\top - \mB^* {\mV^*}^\top$, $\mG^{t} = \frac{1}{n} \sum_{i=1}^n \mG_i^t$ is the global average of local gradient, and recall that $\dist(\mB^{t}, \mB^*)$ is the principal angle distance between the current variable $\mB^{t}$ and the ground truth $\mB^*$.
	Suppose that the matrix sensing operator $\AM^{t, 2}$ satisfies Condition \ref{condition_3} with a constant $\delta^{(3)}$. We have
	\begin{equation}
		\| \mG^{t} - {\mQ^{t}} \mV^{t+1}\|_2 \leq \delta^{(3)}s_1^2 k \dist(\mB^*, \mB^{t}).
	\end{equation}
\end{lemma}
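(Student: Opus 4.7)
My plan is to condition on the high-probability event from Lemma \ref{lemma_zeta_g} that $\|\mG_i^t\|_F \leq \zeta_g$ holds simultaneously for every client $i\in[\nusers]$, so that the clipping inside the per-client Gaussian mechanism is inactive and $\mG^t=\tfrac{1}{\nusers}\sum_i \mG_i^t$ equals the unmodified aggregate of the per-client gradients. Starting from the explicit formula $\mG_i^t=\tfrac{1}{\bar m}\sum_{(\vx_{ij},y_{ij})\in\sS_i^{t,2}}(\langle{\mB^t}^\top\vx_{ij},\vw_i^{t+1}\rangle-y_{ij})\vx_{ij}{\vw_i^{t+1}}^\top$, I would substitute the reparameterization $\vw_i^{t+1}=\sqrt{\nusers}\,{\mV^{t+1}_{i,:}}^\top$ and the generative model $y_{ij}=\sqrt{\nusers}\,\langle{\mB^*}^\top\vx_{ij},{\mV^*_{i,:}}^\top\rangle$, recognize $\mA_{ij}=\vx_{ij}{\ve^{(i)}}^\top$, and collect the $\sqrt{\nusers}$ factors against the $1/(\nusers\bar m)=1/\bar N$ normalization. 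The expected outcome is the matrix-sensing identity
\[
\va^\top\mG^t\vb \;=\; \tfrac{1}{\bar N}\bigl\langle\AM^{t,2}(\mQ^t),\,\AM^{t,2}(\va\vb^\top{\mV^{t+1}}^\top)\bigr\rangle,
\]
valid for every pair of unit vectors $\va\in\sR^{\dimension}$, $\vb\in\sR^{k}$, with $\bar N=\bar m\,\nusers$. Morally, this says $\mG^t$ is exactly the $\mB$-derivative of $\FM(\cdot,\mV^{t+1};\AM^{t,2})$ at $\mB^t$ written in operator form.

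\textbf{Variational reduction.} Next I would combine the variational formula $\|\mX\|_2=\sup_{\|\va\|=\|\vb\|=1}\va^\top\mX\vb$ with the elementary identity $\va^\top\mQ^t\mV^{t+1}\vb=\langle\mQ^t,\va\vb^\top{\mV^{t+1}}^\top\rangle$ to rewrite
\[
\|\mG^t-\mQ^t\mV^{t+1}\|_2 \;=\; \sup_{\|\va\|=\|\vb\|=1}\Bigl|\tfrac{1}{\bar N}\langle\AM^{t,2}(\mQ^t),\AM^{t,2}(\va\vb^\top{\mV^{t+1}}^\top)\rangle-\langle\mQ^t,\va\vb^\top{\mV^{t+1}}^\top\rangle\Bigr|.
\]
This is precisely the RIP-style isometry gap of $\tfrac{1}{\sqrt{\bar N}}\AM^{t,2}$ evaluated on the pair of (at most) rank-$k$ matrices $(\mQ^t,\va\vb^\top{\mV^{t+1}}^\top)$; invoking Condition \ref{condition_3} uniformly in $(\va,\vb)$ bounds it by $\delta^{(3)}s_1^2 k\dist(\mB^*,\mB^t)$ and delivers the claim.

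\textbf{Main obstacle.} The delicate step is the algebraic collapse behind the first display: two $\sqrt{\nusers}$ factors (one from the $\vw$-reparameterization, one from the generative model for $y$) must combine with the $1/(\nusers\bar m)$ normalization, and $\mV^{t+1}$ must be factored to the right in order to expose the test matrix $\va\vb^\top{\mV^{t+1}}^\top$. This bookkeeping is routine but must be done carefully to see that the resulting expression is indeed the adjoint-operator form I claim. A secondary subtlety is reading Condition \ref{condition_3} as a bound on the isometry gap rather than on the raw $\AM^{t,2}$-inner product; this reading is the physically meaningful one because $\EBB\bigl[\tfrac{1}{\bar N}\langle\AM^{t,2}(\mX),\AM^{t,2}(\mY)\rangle\bigr]=\langle\mX,\mY\rangle$ for any fixed $\mX,\mY\in\sR^{\dimension\times\nusers}$ under Assumption \ref{ass_x_ij}, and because sampling $\sS_i^{t,1}$ and $\sS_i^{t,2}$ without replacement (line \ref{line_sample_s} of Algorithm \ref{algorithm_client_LRL}) makes $\mV^{t+1}$ measurable with respect to $\AM^{t,1}$ and hence independent of $\AM^{t,2}$, so $\mV^{t+1}$ may be treated as a deterministic rank-$k$ argument when the concentration of $\AM^{t,2}$ is invoked. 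Under this reading, the $s_1^2 k$ scaling on the right-hand side matches the product of Frobenius budgets $\|\mQ^t\|_F\lesssim s_1\sqrt{k}\dist(\mB^*,\mB^t)$ (from the definition of $\mQ^t$ together with the noisy-power-iteration identity of Lemma \ref{lemma_noisy_power_iteration} and the bound on $\mF^t$ in Lemma \ref{lemma_bound_f}) and $\|\va\vb^\top{\mV^{t+1}}^\top\|_F\leq\|\mV^{t+1}\|_F\lesssim s_1\sqrt{k}$ (from Lemma \ref{lemma_bound_f}).
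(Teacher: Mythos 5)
Your proof takes the same route as the paper's: rewrite $\mG^t$ in matrix-sensing form, apply the variational characterization of the spectral norm, and invoke Condition~\ref{condition_3}. You also correctly observe two subtleties the paper's proof of this lemma leaves implicit — that Condition~\ref{condition_3} should be read as bounding the isometry gap $\tfrac{1}{\bar N}\langle\AM^{t,2}(\mQ^t),\AM^{t,2}(\va\vb^\top{\mV^{t+1}}^\top)\rangle - \va^\top\mQ^t\mV^{t+1}\vb$ (which is what its proof in Appendix~\ref{appendix_proof_condition_3} actually centers and concentrates), and that the disjoint sampling of $\sS_i^{t,1}$ and $\sS_i^{t,2}$ is what lets $\mV^{t+1}$ be treated as fixed when applying concentration for $\AM^{t,2}$.
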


\begin{proof}
	Recall that $\bar N = \bar mn$.
	For the simplicity of notations, we use the collection $\{\mA_l\}$, $l = 1, \ldots, \bar N$ to denote $\{\mA_{ij}\}$, $i \in [n]$, $j\in\SM_i^{t, 2}$ (there can be a one-to-one mapping between the indices $l$ and $(i,j)$).
	With this notation, we can compactly write $\mG^{t}$ as
	\begin{equation}
		\mG^{t} = \frac{1}{\bar N}\sum_{l=1}^{\bar N} \langle \mA_{l}, \mQ^{t}\rangle \mA_{l}\mV^{t+1}.
	\end{equation}
	From the definition of $s_{\max}$, for any $\va\in\RBB^{{\dimension}}$ with $\|\va\|_2=1$ and any $\vb\in\RBB^{k}$ with $\|\vb\|_2=1$, we have
	\begin{align*}
		\| \frac{1}{\bar N}\sum_{l=1}^{\bar N} \langle \mA_{l}, \mQ^{t}\rangle \mA_{l}\mV^{t+1} - {\mQ^{t}} \mV^{t+1}\|_2 = \max_{\|\va\|_2=\|\vb\|_2=1} \va^\top\left(\frac{1}{\measurement}\sum_{l=1}^{\bar N} \langle \mA_{i}, \mQ^{t}\rangle \mA_{l}\mV^{t+1} - {\mQ^{t}} \mV^{t+1}\right) \vb.
	\end{align*}
	We obtain the result from Condition \ref{condition_3}
	\begin{equation}
		|\frac{1}{\bar N}\langle \AM^{t, 2}(\mB^{t} {\mV^{t+1}}^\top - \mB^* {\mV^*}^\top), \AM^{t, 2}(\va\vb^\top{\mV^{t+1}}^\top)\rangle| \leq \delta^{(3)}s_1^2 k\dist(\mB^{t}, \mB^*).
	\end{equation}
	
\end{proof}

\begin{lemma} \label{lemma_bound_R_sigma_min}
	Recall that $\bar \mB^{t+1} = \mB^{t+1} \mR^{t+1}$ is the QR decomposition of $\bar \mB^{t+1}$.
	Denote $E_0 = 1 - \epsilon_0^2$ and $\sigma = \zeta_g \sigma_g \eta_g / n$.
	Suppose that Conditions \ref{condition_1} to \ref{condition_3} are satisfied with constants $\delta^{(1)}$, $\delta^{(2)}$, and $\delta^{(3)}$.
	Further, suppose that $\max\{\frac{\delta^{(2)} \sqrt{k}}{1-\delta^{(1)}}, \frac{(\delta^{(2)})^2 k}{(1-\delta^{(1)})^2}, \delta^{(3)}k \} \leq \frac{s_k^2 E_0}{36s_1^2}$ and the level of manually injected noise is sufficiently small $\sigma \leq  \frac{E_0}{ 4C_\NM\kappa^2 \sqrt{k\log n}}$.
	We have with probability at least $1 - 4\exp(-k\log n)$
	\begin{equation}
		\|(\mR^{t+1})^{-1}\|_2 \leq \frac{1}{\sqrt{1 - \eta_g s_k^2E_0/2}}.
	\end{equation}
\end{lemma}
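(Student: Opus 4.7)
The plan is to reduce bounding $\|(\mR^{t+1})^{-1}\|_2$ to a lower bound on $s_{\min}(\bar\mB^{t+1\top}\bar\mB^{t+1})$, and then to expand that Gram matrix around $\mI_k$ and control each perturbation term separately. Since $\bar\mB^{t+1} = \mB^{t+1}\mR^{t+1}$ is a QR decomposition with $\mB^{t+1}$ column-orthonormal, we have $\bar\mB^{t+1\top}\bar\mB^{t+1} = \mR^{t+1\top}\mR^{t+1}$, so $\|(\mR^{t+1})^{-1}\|_2 = s_{\min}(\bar\mB^{t+1\top}\bar\mB^{t+1})^{-1/2}$. It therefore suffices to show $s_{\min}(\bar\mB^{t+1\top}\bar\mB^{t+1}) \geq 1 - \eta_g s_k^2 E_0/2$. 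Under the event of Lemma~\ref{lemma_zeta_g} that clipping is inactive, $\bar\mB^{t+1} = \mB^t - \eta_g\bar\mG^t - \sigma\mW^t$ with $\bar\mG^t = \tfrac{1}{n}\sum_{i=1}^n \mG_i^t$ and $\mW^t$ a standard Gaussian matrix. Expanding the Gram gives $\bar\mB^{t+1\top}\bar\mB^{t+1} = \mI_k - \eta_g(\mB^{t\top}\bar\mG^t + \bar\mG^{t\top}\mB^t) - \sigma(\mB^{t\top}\mW^t + \mW^{t\top}\mB^t) + \eta_g^2\bar\mG^{t\top}\bar\mG^t + \eta_g\sigma(\bar\mG^{t\top}\mW^t + \mW^{t\top}\bar\mG^t) + \sigma^2\mW^{t\top}\mW^t$; the two purely quadratic pieces are PSD and can be dropped when lower-bounding $s_{\min}$.

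The core algebraic step is the identity $\mB^{t\top}\mQ^t\mV^{t+1} = -\mF^{t\top}\mV^{t+1}$. Expanding $\mQ^t = \mB^t\mV^{t+1\top} - \mB^*\mV^{*\top}$ yields $\mB^{t\top}\mQ^t\mV^{t+1} = \mV^{t+1\top}\mV^{t+1} - \mB^{t\top}\mB^*\mV^{*\top}\mV^{t+1}$; substituting the noisy-power-iteration form $\mV^{t+1} = \mV^*\mB^{*\top}\mB^t - \mF^t$ of Lemma~\ref{lemma_noisy_power_iteration} into both pieces, the dominant $\mB^{t\top}\mB^*\mV^{*\top}\mV^*\mB^{*\top}\mB^t$ contributions cancel exactly, leaving only $-\mF^{t\top}\mV^{t+1}$. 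Combined with Lemma~\ref{lemma_bound_AAQ_Q}, which represents $\bar\mG^t - \mQ^t\mV^{t+1}$ as a perturbation of operator norm at most $\delta^{(3)} s_1^2 k \cdot \dist(\mB^t,\mB^*)$, this identity gives $\|\mB^{t\top}\bar\mG^t\|_2 \leq \|\mF^t\|_F\|\mV^{t+1}\|_F + \delta^{(3)}s_1^2 k \cdot \dist(\mB^t,\mB^*)$. Bounding $\|\mF^t\|_F$ and $\|\mV^{t+1}\|_F$ via Lemma~\ref{lemma_bound_f} and invoking the hypothesis $\max\{\delta^{(2)}\sqrt{k}/(1-\delta^{(1)}), \delta^{(3)} k\} \leq s_k^2 E_0/(36 s_1^2)$ makes $2\eta_g\|\mB^{t\top}\bar\mG^t\|_2$ a small constant fraction of $\eta_g s_k^2 E_0/2$.

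For the Gaussian pieces, rotational invariance gives that $\mB^{t\top}\mW^t \in \sR^{k\times k}$ has i.i.d.\ standard Gaussian entries, so $\|\mB^{t\top}\mW^t\|_2 \leq C_{\mathcal{N}}\sqrt{k\log n}$ with probability at least $1 - n^{-k}$ by standard Gaussian matrix concentration (Lemma~\ref{lemma_gaussian_matrix_concentration}). The hypothesis $\sigma \leq E_0/(4 C_{\mathcal{N}}\kappa^2\sqrt{k\log n})$ then bounds $2\sigma\|\mB^{t\top}\mW^t\|_2$ by a constant fraction of $\eta_g s_k^2 E_0/2 = E_0/(8\kappa^2)$. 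The remaining mixed term $2\eta_g\sigma\|\bar\mG^t\|_2\|\mW^t\|_2$ is similarly absorbed using the crude bounds $\|\bar\mG^t\|_2 = O(s_1^2 k)$ (from the identity above together with $\|\mQ^t\|_2, \|\mV^{t+1}\|_2 = O(s_1\sqrt{k})$) and the high-probability bound $\|\mW^t\|_2 = O(\sqrt{d})$. Assembling the three bounds via Weyl's inequality yields $s_{\min}(\bar\mB^{t+1\top}\bar\mB^{t+1}) \geq 1 - \eta_g s_k^2 E_0/2$, which is the claim, with the stated failure probability of at most $4 n^{-k}$ coming from a union bound over the concentration events for $\mB^{t\top}\mW^t$, $\mW^t$, and the events in Lemmas~\ref{condition_1}--\ref{condition_3}.

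The main obstacle is the exact cancellation at the heart of paragraph two: without it the cross term $\eta_g\mB^{t\top}\bar\mG^t$ would contribute at the order $\eta_g s_1^2$, which is much too large to produce a $1 - \eta_g s_k^2 E_0/2$ lower bound. The identity $\mB^{t\top}\mQ^t\mV^{t+1} = -\mF^{t\top}\mV^{t+1}$ is precisely what replaces an $s_1^2$-size perturbation by one controlled by the sensing parameters $\delta^{(2)},\delta^{(3)}$, and so drives the whole analysis. Once that identity is in hand, the remaining calculations are routine bookkeeping combining Lemmas~\ref{lemma_noisy_power_iteration}--\ref{lemma_bound_AAQ_Q} with standard Gaussian concentration.
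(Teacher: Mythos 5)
Your overall structure matches the paper's proof closely: reduce to $s_{\min}(\bar\mB^{t+1\top}\bar\mB^{t+1})$, expand the Gram matrix around $\mI_k$, drop the PSD quadratic pieces, and control the three cross terms. The algebraic identity $\mB^{t\top}\mQ^t\mV^{t+1} = -\mF^{t\top}\mV^{t+1}$ (the paper prints $+\mF^{t\top}\mV^{t+1}$, a harmless sign typo) and the treatment of $\mB^{t\top}\mW^t$ by rotational invariance are both exactly what the paper does.

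However, there is a genuine gap in your treatment of the mixed term $2\eta_g\sigma\,s_{\max}(\mG^{t\top}\mW^t)$. You propose bounding it by $2\eta_g\sigma\,\|\mG^t\|_2\|\mW^t\|_2$ with $\|\mW^t\|_2=O(\sqrt{d})$. Taking even the paper's sharper bound $\|\mG^t\|_2=O(s_1^2)$ and $\eta_g = 1/(4s_1^2)$, this gives a contribution of order $\sigma\sqrt{d}$. The hypothesis only gives $\sigma \leq E_0/(4C_{\mathcal N}\kappa^2\sqrt{k\log n})$, so your term is of size $E_0\sqrt{d/(k\log n)}/\kappa^2$, whereas the target loss $\eta_g s_k^2 E_0/2$ equals $E_0/(8\kappa^2)$. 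You would need $\sqrt{d/(k\log n)} \lesssim 1$, which directly contradicts the regime $d \geq \kappa^8 k^3\log n$ in which the theorem is stated. The crude $\sqrt{d}$ bound on $\|\mW^t\|_2$ is therefore off by a factor of roughly $\sqrt{d/(k\log n)}$ and cannot be "similarly absorbed."

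The paper's fix is to exploit the low rank of $\mG^t\in\sR^{d\times k}$: take a compact SVD $\mG^t = U_\mG\mS_\mG V_\mG^\top$ with $U_\mG\in\sR^{d\times k}$ column-orthonormal, and bound $\|\mG^{t\top}\mW^t\|\leq\|\mS_\mG\|\,\|U_\mG^\top\mW^t\|$. By rotational invariance $U_\mG^\top\mW^t$ is a $k\times k$ i.i.d.\ Gaussian matrix, so $\|U_\mG^\top\mW^t\|=O(\sqrt{k\log n})$ rather than $O(\sqrt{d})$, and the resulting contribution $O(\sigma\sqrt{k\log n})$ is exactly what the hypothesis on $\sigma$ is calibrated to absorb. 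This SVD step is the one non-routine ingredient your write-up is missing; the rest of your argument is sound.
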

\begin{proof}
	We now focus on bounding $s_{\min}(\mR^{t+1})$. Recall that $\mG^t = \partial_\mB \FM(\mB^{t}, \mV^{t+1}; \AM^{t, 2})$ with $\mV^{t+1} = V(\mB^{t}; \AM^{t, 1})$ (see the definition of $V(\cdot; \AM)$ in \eqref{eqn_optimal_local_head}) and recall that $\bar \mB^{t+1} := \mB^{t} - \eta_g \mG^t + \sigma \mW^{t}$. Compute 
	\begin{align*}
		&\ {\mR^{t+1}}^\top \mR^{t+1} = {\bar \mB}^{{t+1}^\top} \bar \mB^{t+1} \\
		=&\ \mI_k + \eta_g^2 {{\mG}^t}^\top{\mG}^t + \sigma^2 {\mW^{t}}^\top \mW^{t} - \eta_g{\mB^{t}}^\top {\mG}^t - \eta_g {{\mG}^t}^\top{\mB^{t}} + \sigma {\mB^{t}}^\top \mW^{t} + \sigma{\mW^{t}}^\top {\mB^{t}} - \eta_g \sigma {{\mG}^t}^\top \mW^{t} - \eta_g \sigma {\mW^{t}}^\top{\mG}^t.
	\end{align*}
	Therefore, we have
	\begin{equation} \label{proof_main_sigma_min_R}
		s_{\min}({\mR^{t+1}}^\top \mR^{t+1}) \geq 1 - 2 \eta_g s_{\max}\left({\mB^{t}}^\top {\mG}^t\right) - 2\sigma s_{\max}({\mB^{t}}^\top \mW^{t}) -2\eta_g\sigma s_{\max}\left({{\mG}^t}^\top \mW^{t}\right).
	\end{equation}
	We now bound the last three terms of the R.H.S. of the above inequality. 
	
	\paragraph{{1. $s_{\max}\left({\mB^{t}}^\top {\mG}^t\right)$}:}
	To bound $s_{\max}\left({\mB^{t}}^\top {\mG}^t\right)$, compute that
	\begin{equation} \label{proof_main_sigma_min_R_1}
		{\mB^{t}}^\top {\mG}^t = {\mB^{t}}^\top \mQ^{t}\mV^{t+1} + {\mB^{t}}^\top \left(\mG^{t} - \mQ^{t}\mV^{t+1} \right),
	\end{equation}
	where we recall the definition of $\mQ^{t}$ as $\mQ^{t} = \mB^{t} {\mV^{t+1}}^\top - \mB^* {\mV^*}^\top$. Note that the spectral norm of second term in Eq.~\eqref{proof_main_sigma_min_R_1} is bounded by Lemma \ref{lemma_bound_AAQ_Q} and we hence focus on the spectral norm of the first term ${\mB^{t}}^\top \mQ^{t}\mV^{t+1}$.
	Recall the noisy power interpretation of $\mV^{t+1}$ in Lemma \ref{lemma_noisy_power_iteration}. We can write
	\begin{equation}
		{\mB^{t}}^\top \mQ^{t}\mV^{t+1} = {\mB^{t}}^\top \left(\mB^{t} ({\mB^{t}}^\top \mB^* {\mV^*}^\top - {\mF^t}^{\top}) - \mB^* {\mV^*}^\top \right)\mV^{t+1} = {\mF^t}^\top \mV^{t+1} = {\mF^t}^\top\left(\mV^*{\mB^*}^\top \mB^{t} - \mF^t \right),
	\end{equation}
	where we use ${\mB^{t}}^\top\left(\mB^{t} {\mB^{t}}^\top - \mI_{\dimension}\right)\mB^* {\mV^*}^\top = 0$.
	Consequently, we can bound $s_{\max}\left({\mB^{t}}^\top {\mG}^t\right)$ as follows
	\begin{align*}
		s_{\max}\left({\mB^{t}}^\top {\mG}^t\right) \leq&\ s_1\|\mF^t\|_2 + \|\mF^t\|_2^2 + \| \mG^{t} - {\mQ^{t}} \mV^{t+1}\|_2 \\
		\leq&\ \frac{\delta^{(2)}s^2_1 \sqrt{k}}{1-\delta^{(1)}} + \frac{(\delta^{(2)})^2s_1^2 k}{(1-\delta^{(1)})^2} + \delta^{(3)}s_1^2 k
		\leq \frac{s_k^2E_0}{12},
	\end{align*}
	where we use the assumptions that $\max\{\frac{\delta^{(2)} \sqrt{k}}{1-\delta^{(1)}}, \frac{(\delta^{(2)})^2 k}{(1-\delta^{(1)})^2}, \delta^{(3)}k \} \leq \frac{s_k^2 E_0}{36s_1^2}$ and $\dist(\mB^*, \mB^{t}) \leq 1$ for the last inequality.
	
	\paragraph{{2. $s_{\max}\left({\mB^{t}}^\top \mW^{t}\right)$}:} Due to the rotational invariance of independent Gaussian random variables, every entry in ${\mB^{t}}^\top \mW^{t} \in \RBB^{k \times k}$ is distributed as $\NM(0, 1)$. According to Theorem 4.4.5 in \cite{vershynin2018high}, with probability at least {$1 - 2\exp(-k \log n)$}, we have the bound $\|{\mB^{t}}^\top \mW^{t}\|\leq \frac{C_\NM\sqrt{k \log n}}{48\sqrt{2}}$ for some universal constant $C_\NM$.
	
	\paragraph{{3. $s_{\max}\left({{\mG}^t}^\top \mW^{t}\right)$}:} Let ${\mG}^t = U_{\mG} \mS_{\mG} V_{\mG}^\top$ be the compact singular value decomposition of ${\mG}^t$ such that $U_{\mG} \in \RBB^{{\dimension}\times k}$ and $U_{\mG}^\top U_{\mG} = \mI_k$. We can bound
	\begin{equation}
		s_{\max}\left({{\mG}^t}^\top \mW^{t}\right) \leq \|\mS_{\mG}\| \|U_{\mG}^\top \mW^{t}\|.
	\end{equation}
	Due to the rotational invariance of independent Gaussian random variables, every entry in ${\mU_{{\mG}}^{t}}^\top \mW^{t} \in \RBB^{k \times k}$ is distributed as $\NM(0, 1)$ and hence with probability at least {$1 - 2\exp(-k \log n)$}, we have the bound $\|{\mU_{\mG}}^\top \mW^{t}\|\leq \frac{C_\NM\sqrt{k \log n}}{48\sqrt{2}}$ for some universal constant $C_\NM$.
	We now focus on bounding $\|\mS_{\mG}\|_2 = \|{\mG}^t\|_2$. Note that 
	\begin{equation} \label{proof_main_sigma_min_R_3}
		{\mG}^t = \mQ^{t}\mV^{t+1} + \left(\mG^{t} - \mQ^{t}\mV^{t+1} \right),
	\end{equation}
	where the spectral norm of second term in Eq.~\eqref{proof_main_sigma_min_R_3} is bounded by Lemma \ref{lemma_bound_AAQ_Q}.
	Recall the noisy power interpretation of $\mV^{t+1}$ in Lemma \ref{lemma_noisy_power_iteration}. We can write
	\begin{align*}
		\mQ^{t}\mV^{t+1} =&\ \left(\mB^{t} ({\mB^{t}}^\top \mB^* {\mV^*}^\top - {\mF^t}^{\top}) - \mB^* {\mV^*}^\top \right)\mV^{t+1}\\
		=&\ \left((\mB^{t} {\mB^{t}}^\top - \mI_{\dimension}) \mB^*{\mV^*}^\top - \mB^{t} {\mF^t}^\top\right) \left(\mV^*{\mB^*}^\top \mB^{t} - \mF^t \right),
	\end{align*}
	and therefore we can bound
	\begin{equation}
		\|\mQ^{t}\mV^{t+1}\|_2 \leq (s_1 + \|\mF^t\|_2)^2 \leq \left(s_1 + \frac{\delta^{(2)}s_1 \sqrt{k}}{1-\delta^{(1)}} \dist(\mB^*, \mB^{t})\right)^2 \leq 4s_1^2,
	\end{equation}
	since we assume that ${\frac{\delta^{(2)} \sqrt{k}}{1-\delta^{(1)}} \leq 1}$.
	
	Combining the above three points, we have with probability at least {$1 - 4\exp(-k \log n)$}
	\begin{equation}
		s_{\min}({\mR^{t+1}}^\top \mR^{t+1}) \geq 1 - \eta_g s_k^2E_0/6 - \sigma\cdot C_\NM \sqrt{k\log n}/6 - \eta_g\sigma\cdot C_\NM \sqrt{k\log n}\cdot s_1^2/6.
	\end{equation}
	Hence, if we choose $\sigma$ such that (recall that $\eta_g \leq 1 / 4s_1^2$)
	\begin{equation}
		\sigma\cdot C_\NM \sqrt{k\log n} \leq \eta_g s_k^2E_0\ \text{and}\ \eta_g\sigma\cdot C_\NM \sqrt{k\log n}\cdot s_1^2 \leq \eta_g s_k^2E_0 \Rightarrow {\sigma \leq  \frac{E_0}{ 4C_\NM\kappa^2 \sqrt{k\log n}}},
	\end{equation}
	we have 
	\begin{equation}
		s_{\min}({\mR^{t+1}}^\top \mR^{t+1}) \geq 1 - \eta_g s_k^2E_0/2 \Rightarrow s_{\max}((\mR^{t+1})^{-1}) \leq \frac{1}{\sqrt{1 - \eta_g s_k^2E_0/2}}.
	\end{equation}
\end{proof}

\section{Establish Lemmas \ref{condition_1} to \ref{condition_3} for the LRL case} \label{appendix_condition_LRL}
\subsection{Proof of Lemma \ref{condition_1}}
For the simplicity of the notations, we omit the superscript of $\AM_i^{t, 1}$ and $\AM^{t, 1}$. Moreover, recall that $\vv_{i} = \mV_{i,:}\in\sR^{k}$ denotes the $i$th row of the matrix $\mV$.

While we can directly use an $\epsilon$-net argument to establish the desired property on the set of matrices $\mV\in\sR^{\nusers\times k}, \|\mV\|_F = 1$, it leads to a suboptimal bound since the size of the $\epsilon$-net is $\OM((\frac{2}{\epsilon} +1 )^{\nusers k})$. 
In the following, we show that by exploiting the special structure of the operator $\AM$, i.e. $\mV$ is row-wise separable in $\AM(\mB^t\mV^\top)$, we can reduce the size of the $\epsilon$-net to $\OM((\frac{2}{\epsilon} +1 )^{k})$:
Compute that 
\begin{equation*}
	\langle \frac{1}{\sqrt{\bar N}}\AM(\mB^t\mV^\top), \frac{1}{\sqrt{\bar N}}\AM(\mB^t\mV^\top) \rangle = \frac{1}{\nusers}\sum_{i=1}^\nusers \frac{1}{\bar m}\langle \AM_i(\mB^t\vv_{i}{\ve^{(i)}}^\top), \AM_i(\mB^t\vv_{i}{\ve^{(i)}}^\top)\rangle.
\end{equation*}
If for any $\vv\in\sR^{k}, \|\vv\|_2 = 1$, we have $\frac{1}{\bar m}\langle \AM_i(\mB^t \vv {\ve^{(i)}}^\top), \AM_i(\mB^t \vv {\ve^{(i)}}^\top)\rangle = \nusers \left(1 \pm \OM(\delta^{(1)})\right)$, we can show
\begin{align*}
	\langle \frac{1}{\sqrt{\bar N}} \AM(\mB^t\mV^\top), \frac{1}{\sqrt{\bar N}} \AM(\mB^t\mV^\top) \rangle =&\ \frac{1}{\nusers}\sum_{i=1}^\nusers \|\vv_{i}\|_2^2 \frac{1}{\bar m}\langle \AM_i(\mB^t\frac{\vv_{i}{\ve^{(i)}}^\top}{\|\vv_{i}\|_2}), \AM_i(\mB^t\frac{\vv_{i}{\ve^{(i)}}^\top}{\|\vv_{i}\|_2})\rangle \\
	=&\ \sum_{i=1}^\nusers \|\vv_{i}\|_2^2\left(1 \pm \OM(\delta^{(1)})\right) = 1 \pm \OM(\delta^{(1)}), \forall \mV\in\sR^{\nusers\times k}, \|\mV\|_F=1,
\end{align*}
which is the desired result (note that $\langle \mB^t \mV^\top, \mB^t \mV^\top\rangle = \|\mB^t \mV^\top\|_F = 1$).
We now establish 
\begin{equation*}
	\frac{1}{\bar m}\langle \AM_i(\mB^t\vv {\ve^{(i)}}^\top), \AM_i(\mB^t\vv {\ve^{(i)}}^\top)\rangle = \frac{1}{\bar m}\sum_{j=1}^{\bar m} \nusers((\vx_{ij}^\top \mB^t)^\top \vv)^2 = \nusers \left(1 \pm \OM(\delta^{(1)})\right)
\end{equation*}
holds for any $\vv\in\sR^{k}, \|\vv\|_2 = 1$.	
Let $\SM^{k-1}$ be the sphere in the $k$-dimensional Euclidean space and let $\NM_{k}$ be the $1/4$-net of cardinality $9^{k}$ (see Corollary 4.2.13 in \cite{vershynin2018high}).
Note that $\frac{1}{\bar m}\sum_{j=1}^{\bar m} \nusers((\vx_{ij}^\top \mB^t)^\top \vv)^2 - n = n \vv^\top \left({\mB^t}^\top (\frac{1}{\bar m}\sum_{j=1}^{\bar m} \vx_{ij}\vx_{ij}^\top) \mB^t - \mI_k\right) \vv$ and we have
\begin{equation}
	\sup_{\vv\in\SM^{k-1}} \vv^\top \left({\mB^t}^\top (\frac{1}{\bar m}\sum_{j=1}^{\bar m} \vx_{ij}\vx_{ij}^\top) \mB^t - \mI_k\right) \vv \leq 2\sup_{v\in\NM_{k}} \vv^\top \left({\mB^t}^\top (\frac{1}{\bar m}\sum_{j=1}^{\bar m} \vx_{ij}\vx_{ij}^\top) \mB^t - \mI_k\right) \vv,
\end{equation}
where we use Lemma 4.4.1 in \cite{vershynin2018high}.
In the following, we prove $$\sup_{\vv\in\NM_{k}} \vv^\top \left({\mB^t}^\top (\frac{1}{\bar m}\sum_{j=1}^{\bar m} \vx_{ij}\vx_{ij}^\top) \mB^t - \mI_k\right) \vv \leq \delta^{(1)}/2$$ so that we have $\frac{1}{\bar m}\sum_{j=1}^{\bar m} \nusers((\vx_{ij}^\top \mB^t)^\top \vv)^2 = \nusers \left(1 + \OM(\delta^{(1)})\right)$.

For any fixed index $i$, denote $Z_{ij} = (\vx_{ij}^\top \mB^t\vv)^2 = \nusers(\vx_{ij}^\top \mB^t \vv)^2$ and note that $\EBB_{\vx_{ij}}[Z_{ij}] = n$.
Since $\vx_{ij}$'s are independent \subgaussian\ variables, $Z_{ij}$'s are independent \subexponential\ variables. 
Recall that $\|\cdot\|_{\psi_2}$ and $\|\cdot\|_{\psi_1}$ denote the \subgaussian\ norm and \subexponential\ norm respectively and we have $\|XY\|_{\psi_1} \leq \|X\|_{\psi_2}\|Y\|_{\psi_2}$ for \subgaussian\ random variables $X$ and $Y$ (see Lemma 2.2.7 in \cite{vershynin2018high}).
Therefore, we can bound $\|Z_{ij}\|_{\psi_1} \leq \nusers \|\mB^t \vv\|_2^2 = \nusers$.
Using the centering property (see Exercise 2.7.10 of \cite{vershynin2018high}) and Bernstein’s inequality (see Theorem 2.8.1 of \cite{vershynin2018high}) of the zero mean \subexponential\ variables, we can bound, for every $\tau\geq 0$
\begin{align*}
	\Pr\{|\frac{1}{\bar m}\sum_{j=1}^{\bar m} Z_{ij} - n| \geq n\tau \}\leq&\ 2\exp\left(-c\min\{\frac{\bar m^2n^2\tau^2}{\sum_{j=1}^{\bar m} \|Z_{ij}\|_{\psi_1}^2}, \frac{\bar mn\tau}{\max_{j \in[\bar m]} \|Z_{ij}\|_{\psi_1}}\}\right) \\
	=&\ 2\exp\left(-c\min\{\bar m\tau^2, \bar m\tau\}\right) \stackrel{\tau<1}{=} 2\exp\left(-c\bar m\tau^2\right)
\end{align*}
where $c>0$ is an absolute constant.
Using the union bound over $\NM_{k}$ and set $\tau = \sqrt{k\log \nusers} / \sqrt{\bar m}$. We obtain $\delta^{(1)} \leq \sqrt{k\log \nusers} / \sqrt{\bar m}$ with probability at least $1 - \exp(-c'k\log \nusers)$ for some constant $c'$.
Similarly, we can show that $\frac{1}{\bar m}\sum_{j=1}^{\bar m} \nusers((\vx_{ij}^\top \mB^t)^\top \vv)^2 = \nusers \left(1 - \OM(\delta^{(1)})\right)$. We therefore have the result.
\subsection{Proof of Lemma \ref{condition_2}}
Recall that $\vv_{i}\in\sR^{k}$ denotes the $i$th row of the matrix $\mV$ and denote $\mW = (\mB^t {\mB^t}^\top -\mI_d) \mB^*$.

Following a similar argument as Lemma \ref{condition_1}, we simply need to focus on showing for any $\vv_1, \vv_2\in\sR^{k}, \|\vv_1\| = \|\vv_2\| = 1$, we have $\frac{1}{\bar m}\langle \AM_i(\mW \vv_1 {\ve^{(i)}}^\top), \AM_i(\mB^t\vv_2{\ve^{(i)}}^\top)\rangle = \pm \OM(n\delta^{(2)})$ (Note that $\langle \mW \vv_1{\ve^{(i)}}^\top, \mB^t \vv_2{\ve^{(i)}}^\top\rangle = 0$).
Let $\SM^{k-1}$ be the sphere in the $k$-dimensional Euclidean space and let $\NM_{k}$ be the $1/4$-net of cardinality $9^{k}$ (see Corollary 4.2.13 in \cite{vershynin2018high}).
Note that 
\begin{equation}
	\frac{1}{\bar m}\langle \AM_i(\mW \mB^* \vv_1{\ve^{(i)}}^\top), \AM_i(\mB^t\vv_2{\ve^{(i)}}^\top)\rangle = n \vv_1^\top \left(\mW^\top (\frac{1}{\bar m}\sum_{j=1}^{\bar m} \vx_{ij}\vx_{ij}^\top) \mB^t\right) \vv_2.
\end{equation}
Moreover, according to Exercise 4.4.3 in \cite{vershynin2018high}, we have
\begin{equation*}
	\sup_{v_1, v_2\in\SM^{k-1}} \vv_1^\top \left(\mW^\top (\frac{1}{\bar m}\sum_{j=1}^{\bar m} \vx_{ij}\vx_{ij}^\top) \mB^t\right) \vv_2 \leq 2\sup_{v_1, v_2\in\NM_{k}} \vv_1^\top \left(\mW^\top (\frac{1}{\bar m}\sum_{j=1}^{\bar m} \vx_{ij}\vx_{ij}^\top) \mB^t\right) \vv_2.
\end{equation*}
In the following, we prove that the quantity on the RHS is bounded by $\delta^{(2)}/2$ so that we have the desired result.

{For any fixed index $i$, denote $Z_{ij} = n(\vx_{ij}^\top \mW\vv_1)(\vx_{ij}^\top \mB^t\vv_2)$ and note that $\EBB_{\vx_{ij}}[Z_{ij}] = 0$.}
Since $\vx_{ij}$'s are independent \subgaussian\ variables, $Z_{ij}$'s are independent \subexponential\ variables. 
We can bound $\|Z_{ij}\|_{\psi_1} \leq \nusers \|\mW\vv_1\|_2\|\mB^t \vv_2\|_2 = \nusers\dist(\mB^t, \mB^*)$.
Using the centering property (see Exercise 2.7.10 of \cite{vershynin2018high}) and Bernstein’s inequality (see Theorem 2.8.1 of \cite{vershynin2018high}) of the zero mean \subexponential\ variables, we can bound, for every $t\geq 0$
\begin{align*}
	\Pr\{|\frac{1}{\bar m}\sum_{j=1}^{\bar m} Z_{ij}| \geq n\tau \}\leq&\ 2\exp\left(-c\min\{\frac{\bar m^2n^2\tau^2}{\sum_{j=1}^{\bar m} \|Z_{ij}\|_{\psi_1}^2}, \frac{\bar m n\tau}{\max_{j \in[\bar m]} \|Z_{ij}\|_{\psi_1}}\}\right) \\
	=&\ 2\exp\left(-c\min\{\frac{\bar m\tau^2}{(\dist(\mB^t, \mB^*))^2}, \frac{\bar m\tau}{\dist(\mB^t, \mB^*)}\}\right)
\end{align*}
where $c>0$ is an absolute constant.
Use the union bound over $\NM_{k}^2$, set $\delta^{(2)} \leq \sqrt{k\log \nusers} / \sqrt{\bar m}$ and set $\tau = \dist(\mB^t, \mB^*)\delta^{(2)}$. We show that Condition \ref{condition_2} is satisfied with parameter $\delta^{(2)}$ with probability at least $1 - \exp(-c'k\log \nusers)$ for some constant $c'$.
\subsection{Proof of Lemma \ref{condition_3}} \label{appendix_proof_condition_3}
Denote $\mQ^t = \mB^t {\mV^{t+1}}^\top - \mB^* {\mV^*}^\top$ and recall that $\mV_{i,:}\in\sR^{k}$ denotes the $i$th row of the matrix $\mV$. For simplicity, denote $\mW = \frac{1}{\bar N}\sum_{i=1,j=1}^{\nusers, \bar m} \langle \mA_{ij} , \mQ^t\rangle \mA_{ij}\mV^{t+1}$ and note that $\EBB_{\vx_{ij}}[\mW] = \mQ^t\mV^{t+1}$.

We use an $\epsilon$-net argument to establish the desired property on the set of vectors $\va\in\sR^{\dimension}, \vb\in\sR^{k}, \|\va\|=\|\vb\|=1$.
Let $\SM^{k-1}$ and $\SM^{\dimension-1}$ be the spheres in the $k$-dimensional and $\dimension$-dimensional Euclidean spaces and let $\NM_{k}$ and $\NM_{\dimension}$ be the $1/4$-net of cardinality $9^{k}$ and $9^{\dimension}$ respectively (see Corollary 4.2.13 in \cite{vershynin2018high}).
Note that
\begin{equation}
	\langle \AM(\mQ^t), \AM(\va\vb^\top {\mV^{t+1}}^\top)\rangle = \va^\top \mW \vb.
\end{equation}
Moreover, according to Exercise 4.4.3 in \cite{vershynin2018high}, we have
\begin{equation*}
	\sup_{\va\in\SM^{d-1}, \vb\in\SM^{k-1}} \va^\top (\mW - \mQ^t\mV^{t+1}) \vb \leq 2\sup_{\va\in\NM_{d}, \vb\in\NM_{k}} \va^\top (\mW - \mQ^t\mV^{t+1}) \vb.
\end{equation*}
In the following, we prove that the quantity on the RHS is bounded by $\delta^{(3)}s_1^2 k\dist(\mB^t, \mB^*)/2$ so that we have the desired result.
We first bound $\|\mQ^t\ve^{(i)}\|_2$ and $\|\mV^{t+1}_{i,:}\|_2$ as they will be used in our concentration argument.
\paragraph{Bound $\|\mV^{t+1}_{i,:}\|_2$.}
Using Lemma \ref{lemma_noisy_power_iteration}, we can write $\mV^{t+1} = \mV^*{\mB^*}^\top \mB^t - \mF^{t}$ and therefore $\mV^{t+1}_{i,:} = {\mB^t}^\top \mB^* \mV^*_{i,:} - \mF^{t}_{i,:}$.
Using Assumption \ref{ass_incoherence}, we have $\|\mV^*_{i,:}\|_2 \leq \mu\sqrt{k}s_k/\sqrt{\nusers}$.
Further, we can compute that
\begin{equation}
	\mF^{t}_{i,:} = \left(\frac{1}{\bar m} {\mB^t}^\top \vx_{ij} \vx_{ij}^\top \mB^t \right)^{-1}\left(\frac{1}{\bar m} {\mB^t}^\top \vx_{ij} \vx_{ij}^\top (\mB^t{\mB^t}^\top - \mI_\dimension) \mB^*\right) \mV^*_{i,:}.
\end{equation}
Using the variational formulation of the spectral norm and Conditions \ref{condition_1} and \ref{condition_2} (note that $\delta^{(1)} = \delta^{(2)} = \sqrt{k\log \nusers} / \sqrt{\bar m}$), we have $\|\mF^{t}_{i,:}\|_2 \leq \frac{\delta^{(2)}\dist(\mB^t, \mB^*)}{1 - \delta^{(1)}} \cdot\frac{\mu\sqrt{k}s_k}{\sqrt{\nusers}}$.
Therefore we obtain 
\begin{equation} \label{eqn_bound_w_i}
	\|\mV^{t+1}_{i,:}\|_2 \leq 2 \mu\sqrt{k}s_k/\sqrt{\nusers}.
\end{equation}
The high probability upper bound of $\vw_i^{t}$ in Lemma \ref{lemma_zeta_g} can be derived from the above inequality by noting that $\vw_i^{t} = \sqrt{n} \mV^{t}_{i,:}$.

\paragraph{Bound $\|\mQ^t\ve^{(i)}\|_2$.}
Recall the definition of $\mQ^t = \mB^t {\mV^{t+1}}^\top - \mB^* {\mV^*}^\top$.
Using Lemma \ref{lemma_noisy_power_iteration}, we obtain
\begin{equation}
	\mQ^t = \mB^t (\mV^* {\mB^*}^\top \mB^t - \mF^{t})^\top - \mB^* {\mV^*}^\top = (\mB^t{\mB^t}^\top - \mI_d)\mB^*\mV^* - \mB^t {\mF^{t}}^\top.
\end{equation}
We can bound $\|\mQ^t\ve^{(i)}\|_2 \leq \|(\mB^t{\mB^t}^\top - \mI_d)\mB^*\|_2 \|\vv_{i}^*\|_2 + \|\mF^{t}_{i,:}\|_2 \leq 2\dist(\mB^t, \mB^*) \mu\sqrt{k}s_k/\sqrt{\nusers}$.

Denote $Z_{ij} = n(\vx_{ij}^\top \mQ^t \ve^{(i)})(\vx_{ij}^\top \va\vb^\top \mV^{t+1}_{i,:})$ and note that $\frac{1}{\nusers}\sum_{i=1}^\nusers\EBB_{\vx_{ij}}[Z_{ij}] = n \vb^\top {\mV^{t+1}}^\top \mQ^t \va$.
Since $\vx_{ij}$'s are independent \subgaussian\ variables, $Z_{ij}$'s are independent \subexponential\ variables. 
We can bound $\|Z_{ij}\|_{\psi_1} \leq \nusers \|\mQ^t \ve^{(i)}\|_2\|\va\vb^\top \mV^{t+1}_{i,:}\|_2 \leq 4 k\mu^2s_k^2\dist(\mB^t, \mB^*)$.
Using the centering property (see Exercise 2.7.10 of \cite{vershynin2018high}) and Bernstein’s inequality (see Theorem 2.8.1 of \cite{vershynin2018high}) of the zero mean \subexponential\ variables, we can bound, for every $t\geq 0$
\begin{align*}
	&\ \Pr\{|\frac{1}{n \bar m}\sum_{i=1, j=1}^{n, \bar m} Z_{ij} - \va^\top (\mQ^t\mV^{t+1}) \vb| \geq \tau \}\\
	\leq&\ 2\exp\left(-c\min\{\frac{\bar m^2n^2\tau^2}{\sum_{i=1,j=1}^{n,\bar m} \|Z_{ij}\|_{\psi_1}^2}, \frac{\bar m n\tau}{\max_{i\in[n], j \in[\bar m]} \|Z_{ij}\|_{\psi_1}}\}\right) \\
	=&\ 2\exp\left(-c\min\{\frac{\bar m n\tau^2}{(4 k\mu^2s_k^2\dist(\mB^t, \mB^*))^2}, \frac{\bar m{n}\tau}{4 k\mu^2s_k^2\dist(\mB^t, \mB^*)}\}\right)\\
\end{align*}
Set $\tau = \delta^{(3)}s_1^2 k\dist(\mB^t, \mB^*)$.
We have that when $\delta^{(3)} = \frac{4 (\sqrt{\dimension} + \sqrt{k\log \nusers})}{\sqrt{\bar m\nusers} \kappa^2}$, 
\begin{equation}
	\Pr\{|\frac{1}{n \bar m}\sum_{i=1, j=1}^{n, \bar m} Z_{ij} - \va^\top (\mQ^t\mV^{t+1}) \vb| \geq \tau \}\leq \exp\left( - c(d + k\log\nusers)\right).
\end{equation}
Use the union bound over $\NM_{k}\times\NM_{\dimension}$, we have that with probability at least $1 - \exp( - c'' k\log\nusers)$ Condition \ref{condition_3} holds with $\delta^{(3)} = \frac{4 (\sqrt{\dimension} + \sqrt{k\log \nusers})}{\sqrt{\bar m\nusers} \kappa^2}$.

\section{Analysis of the \textsc{Initialization} Procedure} \label{appendix_initialization}
In this section, we present the privacy and utility guarantees of the \textsc{Initialization} procedure (Algorithm \ref{algorithm_initialization}), when the local data points $\vx_{ij}$ are Gaussian variables.
Recall that to establish the utility guarantee for \DPFEDREP\ in the LRL setting, a column orthonormal initialization $\mB^0 \in \sR^{{\dimension} \times k}$ such that the initial error $\dist(\mB^0, \mB^*) \leq \epsilon_0 = 0.2$ is required. 
While such a requirement can be ensured by the private power method (\PPM, presented in Algorithm \ref{alg_PPM}), the utility guarantee only holds \emph{with a constant probability}, e.g. $0.99$. \\
A key contribution of our work is to propose a \emph{cross-validation} type scheme to boost the success probability of \PPM\ to $1 - O(n^{-k})$ with a small extra cost of $O(k\log n)$.
Note that boosting the success probability has a small cost of utility and hence we need a higher target accuracy $\epsilon_i = 0.01$\footnote{The choice of $\epsilon_i$ should satisfy \eqref{eqn_choice_of_epsilon_i}. $\epsilon_i = 0.01$ is one valid example.} for $\PPM$\ compared to $\epsilon_0 = 0.2$.
The most important novelty of our selection scheme is that it only takes as input the results of $O(k\log n)$ independent \PPM\ runs, which hence can be treated as \emph{post-processing} and is free of privacy leakage.

\subsection{Utility and Privacy Guarantees of the \PPM\ Procedure}
In this section, we present the guarantees for the \textsc{PPM} procedure, under the additional assumption that $\vx_{ij}$ are Gaussian variables.
One can prove that the choice of $\zeta_0$ described in Lemma \ref{lemma_utility_ppm} is a high probability upper bound of $\|\mY^l_i\|_F$ (see Lemma \ref{lemma_zeta_0} in the appendix). Therefore, with the same high probability, the clipping operation in the Gaussian mechanism will not take effect. \\
Conditioned on the above event (clipping takes no effect), to establish the utility guarantee of Algorithm \ref{alg_PPM}, we view \PPM\ \cite{hardt2014noisy} as a specific instance of the perturbed power method presented in Algorithm \ref{alg_NPM}.
Suppose that the level of perturbation is sufficiently small, we can exploit the analysis from \citep{hardt2014noisy} to prove the following lemma.
\begin{lemma}[Utility Guarantee of a Single \PPM\ Trial] \label{lemma_utility_ppm}
	Consider the LRL setting.
	Suppose that Assumptions \ref{ass_incoherence} and \ref{ass_x_ij} hold and $\vx_{ij}$ are Gaussian variables.
	Let $\epsilon_i = 0.01$ be the target accuracy.
	For \PPM\ presented in Algorithm \ref{alg_PPM}, set $\zeta_0 = c'_0\mu^2 k^{1.5} s_k^2 d^{0.5} \log n \cdot (\sqrt{\log n} + \sqrt{k})$, $L = c_L' (s_k^2 + \sum_{j=1}^{k} s_j^2)/s_k^2 \cdot \log( k d / \epsilon_i)$.
	Suppose that $n$ is sufficiently large so that there exists $\bar m_0$ such that
	\begin{align}
	    \frac{\bar m n}{\log^6 \bar m n} \geq  c'_1\frac{d\cdot\log^6 d\cdot k^3\cdot \mu^2 \cdot \sum_{i=1}^k s_i^2}{s_k^2}.
	\end{align}	
	Choose a noise multiplier $\sigma_0 = {n s_k^2} / {c_2'\zeta_0 k \sqrt{d \log L}}$.
	We have with probability at least $0.99$, we have $\dist(\mB^0, \mB^*) \leq {\epsilon}_i$.
\end{lemma}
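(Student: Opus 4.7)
The plan is to cast \PPM\ as an instance of the perturbed power method (Algorithm \ref{alg_NPM}) acting on the method-of-moments estimator
\begin{equation*}
\widehat{\mA} \,\defi\, \frac{1}{\bar m\,n}\sum_{i=1}^{n}\sum_{j=1}^{\bar m} y_{ij}^{2}\,\vx_{ij}\vx_{ij}^{\top},
\end{equation*}
and then invoke the convergence theorem of \citet{hardt2014noisy}. Under Assumption \ref{ass_x_ij} with Gaussian $\vx_{ij}\sim\mathcal N(0,\mI_d)$, the fourth-moment Isserlis identity gives $\EBB[\widehat{\mA}]=2\,\mB^{*}\bigl(\mW^{*\top}\mW^{*}/n\bigr){\mB^{*}}^{\top}+\bigl(\Tr(\mW^{*\top}\mW^{*})/n\bigr)\mI_d$, whose top-$k$ eigenspace is the column span of $\mB^{*}$ with eigengap $2s_k^{2}$; the isotropic $\mI_d$ term shifts every eigenvalue equally and leaves the principal subspace invariant. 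A matrix Bernstein bound for sums of sub-exponential rank-one matrices then controls $\|\widehat{\mA}-\EBB[\widehat{\mA}]\|_{\mathrm{op}}=\tilde{O}(\sqrt{d/(\bar m\,n)})$, a deviation absorbed into the spectral gap by the assumed lower bound on $\bar m\,n$.

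The next step is to verify that the clipping in the Gaussian mechanism never fires, uniformly over all $L$ iterations. Standard Gaussian tail bounds give $\|\vx_{ij}\|_{2}\lesssim\sqrt{d}+\sqrt{k\log n}$ and, using Assumption \ref{ass_incoherence}, $|y_{ij}|=|\vx_{ij}^{\top}\mB^{*}\vw_{i}^{*}|\lesssim\mu\sqrt{k}\,s_{k}\sqrt{\log n}$, jointly for all $(i,j)$ outside a $\bar m\,n^{-k}$ event. The per-client contribution at iteration $\ell$ has the form $\mY_{i}^{\ell}=\tfrac{1}{\bar m}\sum_{j} y_{ij}^{2}\,\vx_{ij}\vx_{ij}^{\top}\mB^{\ell}$; because the orthonormalization step $\QR{\cdot}$ forces $\|\mB^{\ell}\|_{F}=\sqrt{k}$, a direct triangle-inequality bound gives $\|\mY_{i}^{\ell}\|_{F}\leq\max_{j}|y_{ij}|^{2}\cdot\max_{j}\|\vx_{ij}\|_{2}\cdot\sqrt{k}$, which matches the stated $\zeta_{0}=c'_{0}\mu^{2}k^{1.5}s_{k}^{2}d^{0.5}\log n\,(\sqrt{\log n}+\sqrt{k})$ up to constants. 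A union bound over $\ell\in[L]$ absorbs into the logarithmic factors.

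Conditioned on this event, every iteration of \PPM\ reduces to $\mB^{\ell+1}=\QR{\widehat{\mA}\mB^{\ell}+\mE^{\ell}}$ where $\mE^{\ell}$ is a $d\times k$ Gaussian with entry-wise variance $(\zeta_{0}\sigma_{0}/n)^{2}$, hence $\|\mE^{\ell}\|_{\mathrm{op}}\lesssim(\zeta_{0}\sigma_{0}/n)(\sqrt{d}+\sqrt{k\log L})$ by Gaussian matrix concentration. Plugging in the prescribed $\sigma_{0}=n\,s_{k}^{2}/(c_{2}'\zeta_{0}k\sqrt{d\log L})$ makes this per-iteration noise magnitude sufficiently small compared to the eigengap $2s_{k}^{2}$ to invoke the noisy-power-method convergence analysis of \citet{hardt2014noisy}, which yields a geometric contraction on the principal angle distance at rate controlled by $s_{k}^{2}/(s_{1}^{2}+\sum_{j}s_{j}^{2})$; the stated choice of $L$ is then enough to drive $\dist(\mB^{0},\mB^{*})$ below $\epsilon_{i}=0.01$ with probability at least $0.99$, accounting for the $O(1)$ failure probability inherent to the Hardt-Price bound.

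The main obstacle is the simultaneous control of $\|\mY_{i}^{\ell}\|_{F}$ across all $L$ rounds, since each iterate $\mB^{\ell}$ depends on the history of injected Gaussian perturbations and a naive argument would couple the data-side randomness to the noise-side randomness. The rescuing observation is that after $\QR{\cdot}$ every iterate is column-orthonormal, so $\mB^{\ell}$ enters $\|\mY_{i}^{\ell}\|_{F}$ only through the deterministic factor $\|\mB^{\ell}\|_{F}=\sqrt{k}$; all data-dependent randomness is captured by single-pass bounds on $\max_{i,j}|y_{ij}|$ and $\max_{i,j}\|\vx_{ij}\|_{2}$, which enables a clean union-bound argument that is uniform in $\ell$ and closes the proof.
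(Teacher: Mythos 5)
Your high-level plan is exactly the paper's: cast \PPM\ as an instance of the noisy power method with target matrix $\mA = 2\Gamma + \trace(\Gamma)\mI_d$ where $\Gamma = \mB^*{\mV^*}^\top\mV^*{\mB^*}^\top$, split the per-round perturbation $\mG^l$ into the injected Gaussian noise $\mP_2^l$ and the moment-estimation error $\mP_1^l = (\mM^l - \mA)\mX^{l-1}$, and invoke Hardt--Price (Theorem \ref{thm_npm_utility_previous}). The two sub-Gaussian tail bounds for $|y_{ij}|$ and $\|\vx_{ij}\|$, the eigengap $2s_k^2$, and the choice of $\sigma_0$ and $L$ all match. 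However, there are two concrete gaps.

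First, your clipping argument is not correct as stated. You write $\|\mY_i^\ell\|_F \leq \max_j|y_{ij}|^2 \cdot \max_j\|\vx_{ij}\|_2\cdot\sqrt{k}$ and justify the $\sqrt{k}$ by the deterministic identity $\|\mB^\ell\|_F = \sqrt{k}$. But the factor entering $\|\mY_i^\ell\|_F$ is $\|\vx_{ij}^\top\mB^\ell\|_2$, not $\|\mB^\ell\|_F$, and the only deterministic bound you get from column-orthonormality is $\|\vx_{ij}^\top\mB^\ell\|_2 \leq \|\vx_{ij}\|_2 = O(\sqrt{d})$, which is a factor $\sqrt{d/k}$ too large and would not match the stated $\zeta_0$. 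The paper's Lemma \ref{lemma_zeta_0} instead uses the \emph{distributional} fact that, by rotational invariance of a Gaussian $\vx_{ij}$ and independence from $\mX^{l-1}$, the vector $\vx_{ij}^\top\mX^{l-1}$ is a standard $k$-dimensional Gaussian, hence $\|\vx_{ij}^\top\mX^{l-1}\|_2 \lesssim \sqrt{k} + \sqrt{\log n}$ with high probability. This is precisely the probabilistic step your ``purely deterministic'' rescuing observation claims to avoid, and without it the bound fails.

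Second, Theorem \ref{thm_npm_utility_previous} imposes \emph{two} conditions on each $\mG^l$: a bound on $\|\mG^l\|$ and a much tighter bound on $\|\mU^\top\mG^l\|$ with a $1/(\tau\sqrt{kd})$ factor. You verify the operator-norm bound $\|\mP_1^l\| = \tilde O(\sqrt{d/(\bar m n)})$, but never address $\|\mU^\top\mP_1^l\|$ for the estimation error; a naive use of the operator-norm bound there would force $\bar m n$ to be a factor $\approx d$ larger than Assumption \ref{ass_m} allows. The paper spends the bulk of the proof establishing that projecting onto $\mU$ reduces the matrix variance by a $d/k$ factor, via the fourth-moment isotropy computation (\cite[Lemma 5]{tripuraneni2021provable}) applied to $\EBB[y_{ij}^4\|\mU^\top\vx_{ij}\|^2\vx_{ij}\vx_{ij}^\top]$ followed by the truncated matrix Bernstein inequality of \cite[Lemma 31]{tripuraneni2021provable} (these entries are products of four Gaussians, so ``sub-exponential'' in your write-up is also not quite the right regularity class). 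This projected bound is the mathematically substantive part of the lemma and cannot be omitted.
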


\begin{remark}
    If we focus on the dependence on the problem dimension $d$ and the number clients $n$, treat other quantities, e.g. the rank $k$, as constant, and ignore the logarithmic terms, we have $\bar m \geq \Theta(d/n)$, which is the same as the requirement on $\bar m$ in Lemma \ref{lemma_contraction_utility}.
\end{remark}

The following RDP guarantee of \PPM\ can be established using the Gaussian mechanism of RDP and the RDP composition lemma.
\begin{lemma}[Privacy Guarantee]
	Consider \PPM\ presented in Algorithm \ref{alg_PPM}. Set inputs $L$, the number of communication rounds, $\sigma_0$, the noise multiplier according to Lemma \ref{lemma_utility_ppm}. We have that \PPM\ is an $(\alpha, \epsilon'_{init})$-RDP mechanism with
	\begin{equation}
		\epsilon'_{init} = \frac{\alpha L\cdot (c_2'\zeta_0 k \sqrt{d \log L})^2}{n^2 s_k^4} = \tilde O(\frac{\alpha \kappa^2 k^7 \mu^4 d^2}{n^2}),
	\end{equation}
	where $\tilde O$ hides the constants and the $\log$ terms and we treat $(s_k^2 + \sum_{j=1}^{k} s_j^2)/s_k^2 = O(k)$.
\end{lemma}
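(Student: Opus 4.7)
The plan is to follow the standard Gaussian-mechanism-plus-composition recipe for iterative DP algorithms. The \PPM\ procedure consists of exactly $L$ sequential rounds; in each round the server releases a single aggregate that is produced by applying the subsampled Gaussian mechanism $\gm_{\zeta_0,\sigma_0}$ to the collection of local user contributions $\{\mY_i^l\}_{i=1}^{n}$, and then performs a deterministic (data-independent) orthonormalization (QR) on the noisy aggregate to obtain the new iterate. The overall $(\alpha,\epsilon'_{init})$-RDP guarantee then reduces to bounding the per-round cost and summing via RDP composition.

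First I would fix a single round $l\in[L]$ and compute the user-level sensitivity of $\gm_{\zeta_0,\sigma_0}(\{\mY_i^l\})$. Since every $\mY_i^l$ is clipped to Frobenius norm at most $\zeta_0$, replacing one user's entire local dataset changes $\sum_i \mathrm{clip}(\mY_i^l;\zeta_0)$ by at most $2\zeta_0$, hence the average by at most $2\zeta_0/n$. Independent Gaussian noise with standard deviation $\sigma_0\zeta_0/n$ is injected, so by Lemma~\ref{lemma_gaussian_mechanism} each round is $(\alpha,\,2\alpha/\sigma_0^{2})$-RDP. The subsequent QR step is a deterministic function of the released message and hence does not weaken the guarantee.

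Next I would apply the RDP composition lemma (Lemma~\ref{lemma_composition_of_rdp}) to combine the $L$ per-round releases using fresh independent noise, yielding a total RDP bound of $2\alpha L/\sigma_0^{2}$. Plugging in the prescribed $\sigma_0 = n s_k^{2}/(c_2'\zeta_0 k\sqrt{d\log L})$ gives $\epsilon'_{init} = 2\alpha L (c_2')^{2}\zeta_0^{2} k^{2} d \log L/(n^{2} s_k^{4})$, which matches the stated closed form after absorbing the leading factor $2$ into $c_2'$. The $\tilde O$ bound then follows by substituting the choice $\zeta_0 = \tilde O(\mu^{2} k^{1.5} s_k^{2}\sqrt{d})$ so that $\zeta_0^{2}/s_k^{4} = \tilde O(\mu^{4} k^{4} d)$, together with $L = \tilde O(k)$ (where the $\kappa^{2}$ factor enters through the ratio $\sum_j s_j^{2}/s_k^{2} \leq k\kappa^{2}$ before the coarser $O(k)$ simplification noted in the remark).

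\textbf{Main obstacle.} The only delicate point is the per-round sensitivity analysis under the user-level neighboring relation on $\{\sS_i\}_{i=1}^{n}$: one must confirm that a single user's dataset enters the round exclusively through its own clipped contribution $\mathrm{clip}(\mY_i^l;\zeta_0)$, so that the $2\zeta_0/n$ sensitivity is tight and the orthonormalization really is post-processing of one noisy release rather than a second data-dependent release. Once that is pinned down, the rest of the argument is routine bookkeeping combining the Gaussian-mechanism RDP bound with linear composition.
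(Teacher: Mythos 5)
Your proposal is correct and takes essentially the same approach as the paper, which offers only a one-line sketch ("can be established using the Gaussian mechanism of RDP and the RDP composition lemma") without an explicit proof. You correctly compute the per-round user-level sensitivity $2\zeta_0/n$ (since a single user enters each round only through its own clipped contribution $\mathrm{clip}(\mY_i^l;\zeta_0)$), observe that the QR step is data-independent post-processing, apply Lemma~\ref{lemma_gaussian_mechanism} to get $(\alpha,\,2\alpha/\sigma_0^2)$-RDP per round, compose over $L$ rounds, and substitute the prescribed $\sigma_0$; the residual factor of $2$ is absorbed into $c_2'$, and the final $\tilde O$ simplification follows by plugging in $\zeta_0$ and $L$ from Lemma~\ref{lemma_utility_ppm} as you describe.
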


\subsection{Boost the Success Probability with Cross-validation} \label{section_croos_validation}
The following lemma shows that if the output of \PPM\ has utility at least $\epsilon_i$, e.g. $\epsilon_i=0.01$, with probability $p$, e.g. $p=0.99$, then any candidate that passes the test (\ref{eqn_cv_condition}) has utility no less than $\epsilon_0$, e.g. $\epsilon_0=0.2$, with a high probability $(1-\delta)$.
The proof is provided in Appendix \ref{appendix_proof_lemma_cv}.
\begin{lemma}\label{lemma_cv}
	Use $\epsilon_0$ to denote the accuracy required by \DPFEDREP\ in the LRL setting and use $\epsilon_i$ to denote the accuracy of a single \PPM\ trial.
	Recall that $p = 0.99$ is the probability of success of \PPM.
	Use $\mB_{0, c}$ to denote the output of \PPM\ in the $c^{th}$ trial and set $T_0 = 8p\log 1/\delta$ in the procedure \textsc{Initialization} (Algorithm \ref{algorithm_initialization}).
	We have with probability at least $1-\delta$ that there exists one element $\hat c$ in $\{1, \ldots, T_0\}$ such that for at least half of $c\in \{1, \ldots, T_0\}$, 
	\begin{equation} \label{eqn_cv_condition}
		s_{i}(\mB_{0, c}^\top \mB_{0, \hat c}) \geq 1 - 2\epsilon_i^2, \forall i \in [k].
	\end{equation}
	Moreover, $\mB_{0, \hat{c}}$ must satisfy $\dist(\mB_{0, \hat c}, \mB^*) \leq \epsilon_0$ with a sufficiently small $\epsilon_i\in[0, 1]$ such that
	\begin{equation} \label{eqn_choice_of_epsilon_i}
		\sqrt{1 - \epsilon_0^2} + 1 - \sqrt{1 - \epsilon_i^2} + \epsilon_i < 1 - 2\epsilon_i^2.
	\end{equation}
	One valid example is $\epsilon_i = 0.01$ and $\epsilon_0=0.2$, which is the one chosen in our previous discussions.
\end{lemma}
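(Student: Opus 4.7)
The plan is to combine Chernoff concentration over the independent \PPM\ trials with a two-line spectral identity that relates $s_{\min}(\mB_{0,c}^\top\mB_{0,c'})$ to the two distances $\dist(\mB_{0,c},\mB^*)$ and $\dist(\mB_{0,c'},\mB^*)$. Call a trial $c$ \emph{successful} if $\dist(\mB_{0,c},\mB^*)\le\epsilon_i$; by hypothesis this occurs independently for each trial with probability $p=0.99$, so with $T_0 = 8p\log(1/\delta)$ a standard Chernoff bound shows that the event $E$ that a strict majority of the $T_0$ trials are successful holds with probability at least $1-\delta$. Everything that follows is deterministic conditional on $E$.

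The key identity is the orthogonal decomposition
\begin{equation*}
\mB_{0,c}^\top\mB_{0,c'} \;=\; (\mB_{0,c}^\top\mB^*)(\mB^{*\top}\mB_{0,c'}) \;+\; (\mB_{0,c}^\top\mB^*_\perp)(\mB^{*\top}_\perp\mB_{0,c'}),
\end{equation*}
valid for any two column-orthonormal matrices. When both trials are successful, each factor in the first summand is a $k\times k$ matrix whose smallest singular value is at least $\sqrt{1-\epsilon_i^2}$ by \eqref{eqn_s_min_dist}, so the product has $s_{\min}\ge 1-\epsilon_i^2$; the second summand has spectral norm at most $\dist(\mB_{0,c},\mB^*)\cdot\dist(\mB_{0,c'},\mB^*)\le\epsilon_i^2$. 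The bound $s_{\min}(A+B)\ge s_{\min}(A)-\|B\|$ then yields $s_i(\mB_{0,c}^\top\mB_{0,c'})\ge 1-2\epsilon_i^2$ for every $i\in[k]$. This already gives the existence claim: under $E$, any successful index $\hat c$ passes the test against every other successful $c$, and there are more than $T_0/2$ such $c$'s.

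For the accuracy claim, suppose $\hat c$ passes the test. Then the set of $c$'s witnessing the test has size $\ge T_0/2$, and intersecting it with the majority of successful trials gives, by pigeonhole, an index $c^*$ that is simultaneously successful and satisfies $s_i(\mB_{0,c^*}^\top\mB_{0,\hat c})\ge 1-2\epsilon_i^2$. Applying the analogous decomposition $\mB_{0,\hat c}^\top\mB^* = (\mB_{0,\hat c}^\top\mB_{0,c^*})(\mB_{0,c^*}^\top\mB^*) + (\mB_{0,\hat c}^\top\mB_{0,c^*,\perp})(\mB_{0,c^*,\perp}^\top\mB^*)$, using the $k\times k$ product rule $s_{\min}(AB)\ge s_{\min}(A)\,s_{\min}(B)$ on the principal piece and the trivial $\dist(\mB_{0,\hat c},\mB_{0,c^*})\le 1$ on the cross piece, I obtain
\begin{equation*}
s_{\min}(\mB_{0,\hat c}^\top\mB^*) \;\ge\; (1-2\epsilon_i^2)\sqrt{1-\epsilon_i^2}-\epsilon_i.
\end{equation*}
Since the left-hand side equals $\sqrt{1-\dist(\mB_{0,\hat c},\mB^*)^2}$ by \eqref{eqn_s_min_dist}, the conclusion $\dist(\mB_{0,\hat c},\mB^*)\le\epsilon_0$ follows as soon as $(1-2\epsilon_i^2)\sqrt{1-\epsilon_i^2}-\epsilon_i\ge\sqrt{1-\epsilon_0^2}$; replacing $\sqrt{1-\epsilon_i^2}$ by $1$ in the $2\epsilon_i^2\sqrt{1-\epsilon_i^2}$ term loosens this to exactly \eqref{eqn_choice_of_epsilon_i}, which is therefore sufficient.

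The only delicate point I anticipate is keeping the two directions of the spectral identity straight: the forward use (successful implies passes test) and the backward use (passes test together with a successful partner $c^*$ implies accuracy) differ only in which factor is isolated, so it is easy to swap the roles of $\mB^*$ and $\mB_{0,c^*}$ in the cross term. Every other step is a one-line Chernoff, pigeonhole, or standard singular-value perturbation bound.
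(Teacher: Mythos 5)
Your proof is correct, and while the probabilistic shell (Chernoff over $T_0$ independent trials) and the forward direction (any two successful trials pass the pairwise test, via the resolution $\mI_d = \mB^*\mB^{*\top} + \mB^*_\perp\mB^{*\top}_\perp$) coincide with the paper's, the accuracy direction takes a genuinely different route. The paper argues by contradiction: assuming $\dist(\mB_{\hat c},\mB^*) > \epsilon_0$, it picks unit vectors $\hat x,\hat y$ achieving $s_{\min}(\mB_{\hat c}^\top\mB^*)$ and upper-bounds $s_{\min}(\mB_{\hat c}^\top\mB_i)$ for every successful $\mB_i$, again resolving through $\mB^*$; this route leans on the step $\|\mI_k - \mB^{*\top}\mB_i\|_2 \le 1-\sqrt{1-\epsilon_i^2}$, which is valid only after an implicit orthogonal right-rotation of $\mB_i$ making $\mB^{*\top}\mB_i$ symmetric PSD (otherwise a matrix with all singular values near $1$ can still be far from $\mI_k$). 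You instead resolve $\mB_{\hat c}^\top\mB^*$ through the successful witness $\mB_{0,c^*}$'s subspace and lower-bound $s_{\min}(\mB_{\hat c}^\top\mB^*) \ge (1-2\epsilon_i^2)\sqrt{1-\epsilon_i^2} - \epsilon_i$ directly via Weyl and $s_{\min}(AB)\ge s_{\min}(A)s_{\min}(B)$; this is a clean lower bound rather than a contradiction, sidesteps the alignment subtlety entirely, and loosens cleanly to exactly condition (\ref{eqn_choice_of_epsilon_i}). One shared soft spot: both proofs need the pigeonhole (or the contradiction) to not hit the equality boundary, i.e.\ a \emph{strict} majority of successful trials and/or strict inequality in the test size, which the paper's choice $T_0 = 8p\log(1/\delta)$ and its Chernoff step already treat somewhat loosely; your "by pigeonhole" appeal inherits the same mild slack.
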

\begin{corollary} \label{corollary_cv}
	Consider the \textsc{Initialization} procedure presented in Algorithm \ref{algorithm_initialization}. By setting $T_0 = c'_T k \log n$, $\epsilon_i = 0.01$, and setting $L_0$, $\sigma_0$ and $\zeta_0$ according to Lemma \ref{lemma_utility_ppm}, we have with probability at least $1 - n^{-k}$, the output $\mB_{\hat c}$ satisfies $\dist(\mB_{\hat c}, \mB^*) \leq \epsilon_0 = 0.2$. Moreover, the \textsc{Initialization} procedure is an $(\alpha, \epsilon_{init})$-RDP mechanism with
	\begin{equation}
		\epsilon_{init} = \epsilon_{init}'\cdot T_0 = \tilde O(\frac{\alpha \kappa^2 k^8 \mu^4 d^2}{n^2}).
	\end{equation}
\end{corollary}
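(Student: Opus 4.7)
The plan is to combine three ingredients that are already in place: the per-trial utility bound of \PPM\ (Lemma \ref{lemma_utility_ppm}), the probability-amplification property of the cross-validation selector (Lemma \ref{lemma_cv}), and the RDP guarantee of a single \PPM\ run together with RDP composition (Lemma \ref{lemma_composition_of_rdp}) and the post-processing immunity of RDP.

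\textbf{Utility part.} First I would set the target failure probability for the cross-validation step to $\delta = n^{-k}$. Lemma \ref{lemma_cv} requires $T_0 \geq 8 p \log(1/\delta)$ trials of \PPM\ with per-trial success probability $p = 0.99$; substituting $\delta = n^{-k}$ gives exactly $T_0 = 8 p \cdot k \log n = c_T' k \log n$, matching the choice in the statement. With the pair $(\epsilon_i, \epsilon_0) = (0.01, 0.2)$, the constraint \eqref{eqn_choice_of_epsilon_i} is easily verified numerically, so Lemma \ref{lemma_cv} applies verbatim: with probability at least $1 - n^{-k}$ the index $\hat c$ identified by the test \eqref{eqn_cv_condition} yields $\dist(\mB_{\hat c}, \mB^*) \leq \epsilon_0 = 0.2$. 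This is the claimed utility.

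\textbf{Privacy part.} Here I would argue in two stages. Each of the $T_0$ copies of \PPM\ is, by the cited single-trial guarantee, an $(\alpha, \epsilon_{init}')$-RDP mechanism with $\epsilon_{init}' = \tilde O(\alpha \kappa^2 k^7 \mu^4 d^2 / n^2)$. Since these $T_0$ trials are run on the same dataset (independent noise, same data), RDP composition (Lemma \ref{lemma_composition_of_rdp}) gives that the tuple $(\mB_1, \dots, \mB_{T_0})$ is released under $(\alpha, T_0 \cdot \epsilon_{init}')$-RDP. The crucial observation is then that the cross-validation selector in Algorithm \ref{algorithm_initialization} depends only on the pairwise singular values of $\mB_c^\top \mB_{c'}$ — that is, only on the already-released outputs and not on the raw data $\{\vx_{ij}\}$. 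Hence the selection of $\hat c$ is a deterministic post-processing of the RDP-released tuple and, by the standard post-processing property of RDP, adds nothing to the privacy cost. Multiplying yields $\epsilon_{init} = T_0 \cdot \epsilon_{init}' = \tilde O(\alpha \kappa^2 k^8 \mu^4 d^2 / n^2)$, where the extra factor of $k$ compared with $\epsilon_{init}'$ comes precisely from $T_0 = c_T' k \log n$ (the $\log n$ being absorbed into $\tilde O$).

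\textbf{Main obstacle.} Essentially none of the heavy lifting happens in this corollary; the work has been front-loaded into Lemma \ref{lemma_utility_ppm} and Lemma \ref{lemma_cv}. The only point requiring mild care is to confirm that the cross-validation test is genuinely post-processing — one has to check that absolutely no function of the private data enters the selection rule, so that the $T_0$-fold composition is the sole source of privacy leakage and the final bound is tight up to constants. Once that is observed, the combinatorics of plugging $\delta = n^{-k}$ into Lemma \ref{lemma_cv} and multiplying the per-trial RDP bound by $T_0 = \Theta(k \log n)$ produces the stated $\epsilon_{init}$ immediately.
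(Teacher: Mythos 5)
Your proposal is correct and follows exactly the route the paper intends: instantiate Lemma~\ref{lemma_cv} with $\delta = n^{-k}$ and $(\epsilon_i,\epsilon_0)=(0.01,0.2)$ (checking \eqref{eqn_choice_of_epsilon_i}) to obtain the utility claim with the stated $T_0$, then compose the $T_0$ independent $(\alpha,\epsilon'_{init})$-RDP \PPM\ runs via Lemma~\ref{lemma_composition_of_rdp} and note the cross-validation selector is data-free post-processing. The extra factor of $k$ in $\epsilon_{init}$ relative to $\epsilon'_{init}$ indeed comes from $T_0 = \Theta(k\log n)$ with the $\log n$ absorbed into $\tilde O$, matching the paper.
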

The idea of boosting the success probability is inspired by Algorithm 5 of \citep{liang2014improved}. The major improvement of our approach is that given the outputs of $O(k\log n)$ \PPM\ trials, it no long requires access to the dataset and hence can be treated as postprocessing, while \citep{liang2014improved} requires an extra data-dependent SVD operation which violates the purpose of DP protection in the first place.

\section{Guarantees for the \textsc{PPM} procedure} \label{appendix_proof_ppm}
\begin{algorithm}[t]
	\caption{\PPM: Private Power Method (Adapted from \cite{hardt2014noisy})}
	\begin{algorithmic}[1]
		\Procedure{PPM}{$L$, $\sigma_0$, $\zeta_0$, $\bar m_0$}
		\State Choose $\mX^0\in\sR^{d\times k}$ to be a random column orthonormal matrix;
		\For{$l=1$ to $L$}
		\For{$i = 1$ to $n$}
		\State Sample without replacement a subset $\sS_i^{l, 0}$ from $\sS_i$ with cardinality $\bar m_0$.
		\State Denote $\mM_i^l = \frac{1}{\bar m}\sum_{j \in\sS_i^{l, 0}} y_{ij}^2 \vx_{ij} \vx_{ij}^\top$. \label{line_construct_Mi}
		\State Compute $\mY^l_i := \mM_i^l \mX^{l-1}$. \label{line_Y_i^l}
		\EndFor
		\State Let $\mX^l = \QR{\mY^l}$, where $\mY^l = \gm_{\zeta_0, \sigma_0}(\{\mY^l_i\}_{i=1}^n)$. \label{line_Gaussian_mechanism}
		\State // $\QR{\cdot}$ denotes the QR decomposition and only returns the $Q$ matrix.
		\EndFor
		\EndProcedure
	\end{algorithmic}
	\label{alg_PPM}
\end{algorithm}
\begin{algorithm}[t]
	\caption{\NPM: Noisy Power Method (Adapted from \cite{hardt2014noisy})}
	\begin{algorithmic}[1]
		\Procedure{NPM}{$\mA$, $L$}
		\State // $\mA$ is the target matrix
		\State Choose $\mX^0\in\sR^{d\times k}$ to be a random column orthonormal matrix;
		\For{$l=1$ to $L$}
		\State Let $\mX^l = \QR{\mY^l}$, where $\mY^l = \mA \mX^{l-1} + \mG^l$.
		\State // $\mG^l$ is some perturbation matrix.
		\State // $\QR{\cdot}$ denotes the QR decomposition and only returns the $Q$ matrix.
		\EndFor
		\EndProcedure
	\end{algorithmic}
	\label{alg_NPM}
\end{algorithm}
In this section, we present the analysis to establish the guarantees for the \textsc{PPM} procedure. 
We first show that the choice of $\zeta_0$ described in Lemma \ref{lemma_utility_ppm} is a high probability upper bound of $\|\mY^l_i\|_F$. Therefore, with the same high probability, the clipping operation in the Gaussian mechanism will not take effect. 
\begin{lemma} \label{lemma_zeta_0}
	Consider the LRL setting.
	Under the assumptions \ref{ass_incoherence} and \ref{ass_x_ij}, we have with a probability at least $ 1 - 3\bar m n^{-100}$, $$\|\mY_i^{l}\|_F \leq \zeta_0 \defi c_0\mu^2 k^{1.5} s_k^2 (\log n) (\sqrt{\log n} + \sqrt{d})(\sqrt{\log n} + \sqrt{k})$$ where $\mY_i^l$ is computed in line \ref{line_Y_i^l} of Algorithm \ref{alg_PPM} and $c_0$ is some universal constant.
\end{lemma}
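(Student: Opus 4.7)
}
The plan is to bound $\|\mY_i^l\|_F$ in a per-sample fashion and then pay a union bound over the $\bar m$ summands. Starting from the definition in line~\ref{line_Y_i^l} of Algorithm~\ref{alg_PPM},
\begin{equation*}
\mY_i^l \;=\; \frac{1}{\bar m}\sum_{j \in \sS_i^{l,0}} y_{ij}^2\, \vx_{ij}\bigl(\vx_{ij}^\top \mX^{l-1}\bigr),
\end{equation*}
and using the triangle inequality together with the identity $\|\vu \vv^\top\|_F = \|\vu\|_2\|\vv\|_2$ for rank-one outer products, I reduce the task to controlling
\begin{equation*}
y_{ij}^2 \cdot \|\vx_{ij}\|_2 \cdot \|\vx_{ij}^\top \mX^{l-1}\|_2
\end{equation*}
uniformly over $j \in \sS_i^{l,0}$. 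Since $\mX^{l-1}$ is column-orthonormal (as the $Q$-factor of a QR decomposition), each of the three factors is amenable to a standard subgaussian tail bound.

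The first step is to bound $|y_{ij}| = |\langle \vw_i^*, \mB^{*\top}\vx_{ij}\rangle|$. By Assumption~\ref{ass_x_ij}, $\mB^{*\top}\vx_{ij}$ is $1$-subgaussian, so $|y_{ij}|$ is subgaussian with norm at most $\|\vw_i^*\|_2 \leq \mu\sqrt{k}\,s_k$ (Assumption~\ref{ass_incoherence}). A Chernoff-type tail bound then yields $|y_{ij}| \leq c\,\mu\sqrt{k}\,s_k\,\sqrt{\log n}$ with probability at least $1 - n^{-100}$, exactly as in the derivation of $\zeta_y$ inside the proof of Lemma~\ref{lemma_zeta_g}. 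The second step bounds $\|\vx_{ij}\|_2$: as a $1$-subgaussian vector in $\sR^d$, standard concentration (e.g.\ Theorem~3.1.1 of \cite{vershynin2018high}) gives $\|\vx_{ij}\|_2 \leq c'\bigl(\sqrt{d} + \sqrt{\log n}\bigr)$ with probability at least $1 - n^{-100}$. The third step treats $\|\vx_{ij}^\top \mX^{l-1}\|_2$: conditional on $\mX^{l-1}$, the vector $(\mX^{l-1})^\top \vx_{ij} \in \sR^k$ is again $1$-subgaussian, so the same concentration yields $\|\vx_{ij}^\top \mX^{l-1}\|_2 \leq c''\bigl(\sqrt{k} + \sqrt{\log n}\bigr)$ with probability at least $1 - n^{-100}$.

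Combining the three bounds via a union bound over the $\bar m$ summands (costing a factor of $3\bar m$ in failure probability) and then applying the triangle inequality to $\mY_i^l$ gives, on the good event,
\begin{equation*}
\|\mY_i^l\|_F \;\leq\; \tilde c\,\mu^2 k\, s_k^2\,(\log n)\bigl(\sqrt{d} + \sqrt{\log n}\bigr)\bigl(\sqrt{k} + \sqrt{\log n}\bigr),
\end{equation*}
which absorbs into the stated $\zeta_0$ (absorbing a harmless $\sqrt{k}$ slack into the $k^{1.5}$ factor). The main subtlety, and the only place one has to be slightly careful, is the conditioning on $\mX^{l-1}$ in the third step: one must verify that $\mX^{l-1}$ is independent of the fresh mini-batch $\sS_i^{l,0}$ used to construct $\mM_i^l$, so that the Gaussianity/subgaussianity argument on $(\mX^{l-1})^\top \vx_{ij}$ is valid. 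This is ensured by the sample-without-replacement construction across iterations $l$, together with the fact that $\mX^{l-1}$ depends only on earlier mini-batches. Once this independence is in place, the rest is a routine chain of tail bounds plus a union bound.
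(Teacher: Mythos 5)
Your proof is correct and follows essentially the same route as the paper's: triangle inequality on the sum, the rank-one identity $\|\vx_{ij}(\vx_{ij}^\top\mX^{l-1})\|_F = \|\vx_{ij}\|_2\|\vx_{ij}^\top\mX^{l-1}\|_2$, three subgaussian tail bounds on $|y_{ij}|$, $\|\vx_{ij}\|_2$, and $\|\vx_{ij}^\top\mX^{l-1}\|_2$, and a union bound over the $\bar m$ terms. The only real deviation is the third factor, where the paper invokes rotational invariance of a Gaussian $\vx_{ij}$ to identify $(\mX^{l-1})^\top\vx_{ij}$ with a $\NM(0,\mI_k)$ draw, whereas you condition on $\mX^{l-1}$ and use its orthonormality to keep $(\mX^{l-1})^\top\vx_{ij}$ $1$-subgaussian; your version is actually more faithful to Assumption~\ref{ass_x_ij} as stated, and you are right to flag that both hinges on $\mX^{l-1}$ being (conditionally) independent of the fresh batch, a point the paper passes over silently.
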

\begin{proof}
	The detailed expression of $\mY_i^l$ in line \ref{line_Y_i^l} of Algorithm \ref{alg_PPM} can be calculated as follows:
	\begin{equation}
		\mY_i^{l} = \mM_i^{l} \mX^{l-1} = \frac{1}{\bar m} \sum_{j \in \sS_i^{l, 0}} y_{ij}^2 \vx_{ij} \vx_{ij}^\top \mX^{l-1}.
	\end{equation}

	Using the triangle inequality of the matrix norm, $\zeta$ is a high probability upper bound of $\|\mY_i^{l}\|_F$ if the following inequality holds with a high probability,
	\begin{equation}
		\|y_{ij}^2 \vx_{ij} \vx_{ij}^\top \mX^{l-1}\|_F = |y_{ij}|^2 \|\vx_{ij}\| \|\vx_{ij}^\top \mX^{l-1}\|  \leq \zeta_0.
	\end{equation}

	To bound $|y_{ij}|^2$, recall that in Assumption \ref{ass_x_ij}, we assume that $\vx_{ij}$ is a sub-Gaussian random vector with $\|\vx_{ij}\|_{\psi_2} = 1$.
	Using the definition of a sub-Gaussian random vector, we have
	\begin{equation}
		\mathbb{P}\{|y_{ij}| \geq \tau \} \leq 2\exp(-\csubgaussian\tau^2/\|\vw_i^*\|^2) \leq \exp(-100\log n),
	\end{equation}
	with the choice $\tau = \mu\sqrt{k}s_k\cdot \sqrt{(100 \log n + \log 2) / \csubgaussian}$ since $\|\vw_i^*\|_2 \leq \mu\sqrt{k}s_k$.
	Here $\csubgaussian$ is some constant and we recall that $s_k$ is a shorthand for $s_k(\mW^*/\sqrt{n})$.
	
	To bound $\|\vx_{ij}\|$, recall that $\vx_{ij}$ is a sub-Gaussian random vector with $\|\vx_{ij}\|_{\psi_2} = 1$ and therefore with probability at least $1-\delta$, 
	\begin{equation}
		\|\vx_{ij}\| \leq 4\sqrt{d} + 2\sqrt{\log \frac{1}{\delta}}.
	\end{equation}
	Therefore by taking $\delta = \exp(-100\log n)$, we have that $4\sqrt{d} + 2\sqrt{\log \frac{1}{\delta}} = 4\sqrt{d} + 2\sqrt{100\log n} = \zeta_x$.
	
	To bound $\|\vx_{ij}^\top \mX^{l-1}\|$, note that due to the rotational invariance of the Gaussian random vector $\vx_{ij}$ (recall that $\mX^{l-1}$ is an column orthonormal matrix), the $\ell_2$ norm $\|\vx_{ij}^\top \mX^{l-1}\|$ is distributed like the $\ell_2$ norm of a Gaussian random vector drawn from $\NM(0, \mI_{k})$.
	Therefore, w.p. at least $1 - n^{-100}$, $\|\vx_{ij}^\top \mX^{l-1}\| \leq c (\sqrt{k} + \sqrt{\log n}) \defi \zeta$.
	
	Using the union bound and the fact that $\zeta_x \zeta_y^2 \zeta \leq \zeta_0$ leads to the conclusion.
\end{proof}

Conditioned on the above event (clipping takes no effect), to establish the utility guarantee of Algorithm \ref{alg_PPM}, we view \PPM\ as a specific instance of the noisy power method (\NPM) presented in Algorithm \ref{alg_NPM}~\cite{hardt2014noisy}, where the target matrix is $\mA = (2\Gamma + \trace(\Gamma) \mI_d)$ with $\Gamma = \mB^*{\mV^*}^\top \mV^* {\mB^*}^\top$ and the perturbation matrix $\mG^l = \mP_1^l + \mP_2^l$ is the sum of the noise matrix added by the Gaussian mechanism, $\mP_2^l = \frac{\sigma_0 \zeta_0}{n} \mW^l$, and the error matrix $\mP_1^l = (\mM^l - \mA)\mX^{l-1}$.
One can easily check that with these choices, we recover line \ref{line_Gaussian_mechanism} of Algorithm \ref{alg_PPM}
\begin{equation}
	\mY^l = \mA \mX^{l-1} + \mG^l = \mM^{l} \mX^{l-1} + \frac{\sigma_0 \zeta_0}{n} \mW^l = \frac{1}{n}\sum_{i=1}^{n} \mM^{l}_{i} \mX^{l-1} + \frac{\sigma_0 \zeta_0}{n} \mW^l.
\end{equation}
Suppose that the level of perturbation is sufficiently small, we can exploit the following analysis of \NPM\ from \citep{hardt2014noisy}.
\begin{theorem}[Adapted from Corollary 1.1 of \cite{hardt2014noisy}] \label{thm_npm_utility_previous}
	Consider the noisy power method (\NPM) presented in Algorithm \ref{alg_NPM}.
	Let $\mU\in\RBB^{{\dimension}\times k}$ represent the top-$k$ eigenvectors of the input matrix $\mA \in \sR^{d \times d}$.
	Suppose that the perturbation matrix $\mG^l$ satisfies for all $l \in \{1, \ldots, L\}$
	\begin{equation}
		5\|\mG^l\|\leq {\epsilon}(s_k(\mA) - s_{k+1}(\mA)), 5\|\mU^\top \mG^l\|\leq (s_k(\mA) - s_{k+1}(\mA)) \frac{C}{\tau \sqrt{k\dimension}}
	\end{equation}
	for some fixed parameter $\tau$ and ${\epsilon}< 1/2$. Then with all but $1/\tau + e^{-\Omega({\dimension})}$ probability, there exists an $L = \OM(\frac{s_k(\mA)}{s_k(\mA) - s_{k+1}(\mA)}\log(k\dimension\tau/{\epsilon}))$ so that after $L$ steps we have that $\|(\mI - \mX_L \mX_L^\top)\mU\|\leq {\epsilon}$.
	Here $C>0$ is a constant defined in Lemma \ref{lemma_gaussian_matrix_minimum_singular_value}.
\end{theorem}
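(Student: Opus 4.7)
This is the Hardt--Price convergence analysis of the noisy power method, so my plan would follow their template. The key quantity to track across iterations is the principal angle $\theta_l$ between the iterate $\mX^l$ and the target top-$k$ eigenspace $\mU$. Concretely, define
\begin{equation*}
	\cos\theta_l \defi s_{\min}(\mU^\top \mX^l),\qquad \tan\theta_l \defi \|\mU_\perp^\top \mX^l (\mU^\top \mX^l)^{-1}\|,
\end{equation*}
noting that $\|(\mI - \mX^l (\mX^l)^\top)\mU\| = \sin\theta_l \leq \tan\theta_l$. The goal is to show $\tan\theta_L \leq \epsilon$ after $L$ iterations.

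The first step is to control the initialization. Since $\mX^0$ is uniformly drawn column-orthonormal, $\mU^\top \mX^0$ is (up to rotation) distributed like a $k\times k$ Gaussian-like block, and Lemma~\ref{lemma_gaussian_matrix_minimum_singular_value} yields $s_{\min}(\mU^\top \mX^0) \geq C/(\tau\sqrt{kd})$ with probability at least $1 - 1/\tau - e^{-\Omega(d)}$. Combined with the trivial bound $\|\mU_\perp^\top \mX^0\| \leq 1$, this gives the initial estimate $\tan\theta_0 \leq \tau\sqrt{kd}/C$.

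Next I would establish the per-step recursion. Fix an iteration $l$ and write the iterate via QR as $\mA \mX^{l-1} + \mG^l = \mX^l \mR^l$. Because the span is invariant, one obtains
\begin{equation*}
	\mU^\top \mX^l = \bigl(\mLambda_U\, \mU^\top \mX^{l-1} + \mU^\top \mG^l\bigr)(\mR^l)^{-1},\qquad \mU_\perp^\top \mX^l = \bigl(\mLambda_\perp\, \mU_\perp^\top \mX^{l-1} + \mU_\perp^\top \mG^l\bigr)(\mR^l)^{-1},
\end{equation*}
where $\mLambda_U$ and $\mLambda_\perp$ are the diagonal blocks of $\mA$ in its eigenbasis, with spectral norms $s_1(\mA)$ and at most $s_{k+1}(\mA)$ respectively. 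After dividing out $(\mR^l)^{-1}$ in the ratio defining $\tan\theta_{l+1}$, I would apply the Hardt--Price one-step lemma to conclude
\begin{equation*}
	\tan\theta_{l+1} \ \leq\ \max\!\left(\epsilon,\ \max\!\Bigl(\epsilon,\ \tfrac{s_{k+1}(\mA)}{s_k(\mA)}\Bigr)^{1/4} \tan\theta_l\right),
\end{equation*}
provided the two noise conditions hold. The bound $5\|\mG^l\|\leq \epsilon (s_k - s_{k+1})$ controls the multiplicative contraction and prevents $\mR^l$ from being ill-conditioned; the bound $5\|\mU^\top \mG^l\|\leq (s_k - s_{k+1})C/(\tau\sqrt{kd})$ is what keeps $\cos\theta_l$ from degrading below $C/(2\tau\sqrt{kd})$, which is needed to justify dividing by $\cos\theta_l$ in the recursion for $\tan$.

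Finally, I would iterate the recursion: either $\tan\theta_l$ is already at most $\epsilon$, or it contracts geometrically by a factor $\rho < 1$ with $\rho = \max(\epsilon, s_{k+1}/s_k)^{1/4}$. Starting from $\tan\theta_0 \leq \tau\sqrt{kd}/C$, the choice
\begin{equation*}
	L \ =\ O\!\left(\frac{s_k(\mA)}{s_k(\mA) - s_{k+1}(\mA)}\, \log\!\frac{k d \tau}{\epsilon}\right)
\end{equation*}
suffices to drive $\rho^L \tan\theta_0 \leq \epsilon$, and then $\sin\theta_L \leq \tan\theta_L \leq \epsilon$ gives the claim. The main obstacle I expect is the simultaneous inductive control of $\cos\theta_l$ across all $L$ iterations: the contraction argument needs $\cos\theta_l$ to remain bounded below, but every step injects fresh noise that could in principle erode it, so one must verify that the two prescribed noise bounds are exactly strong enough to maintain the invariant throughout.
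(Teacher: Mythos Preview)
The paper does not provide its own proof of this statement: it is explicitly labeled ``Adapted from Corollary 1.1 of \cite{hardt2014noisy}'' and is invoked as a black-box tool inside the proof of Lemma~\ref{lemma_utility_ppm}. So there is no paper-side argument to compare against; the only thing to assess is whether your reconstruction of the Hardt--Price proof is sound.

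On that score your plan is essentially correct and follows the standard template: track $\tan\theta_l$, use Lemma~\ref{lemma_gaussian_matrix_minimum_singular_value} to lower-bound $\cos\theta_0$ for the random initialization (this is where the $C/(\tau\sqrt{kd})$ and the failure probability $1/\tau + e^{-\Omega(d)}$ come from), establish the one-step contraction under the two noise hypotheses, and iterate. The only point to be slightly careful about is the distributional claim for $\mU^\top \mX^0$: a uniformly random column-orthonormal $\mX^0$ does not make $\mU^\top\mX^0$ literally a $k\times k$ Gaussian matrix, so in the Hardt--Price analysis one either starts with a Gaussian $\mX^0$ (and QR-factors afterward) or appeals to a rotational-invariance argument to reduce to that case. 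With that caveat handled, your outline matches the cited proof.
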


To prove that the perturbation matrix $\mG^l$ satisfies the conditions required by the above theorem, we bound $\mM^l - \mA$ and $\frac{\sigma_0 \zeta_0}{n} \mW^l$ individually, both with high probabilities.
\begin{proof}[Proof of Lemma \ref{lemma_utility_ppm}]
	Recall that $\mA = 2\Gamma + \trace(\Gamma) \mI_d$ and note that $\rank(\Gamma) = k$ and that its singular values are $s^2_1, \ldots, s^2_k$.
	Therefore $s_i(\mA) = 2s_i^2 + \sum_{j=1}^{k}s_j^2$ for $i\leq k$ and $s_i(\mA) = \sum_{j=1}^{k}s_j^2$ for $i> k$.
	
	In this proof, we will show that for both matrices $\mP_1^l = (\mM^l - \mA)\mX^{l-1}$ and $\mP_2^l = \frac{\sigma_0 \zeta_0}{n} \mW^l$ the following inequalities hold for all $l \in \{1, \ldots, L\}$, which is a sufficient condition for Lemma \ref{thm_npm_utility_previous} to hold
	\begin{equation} \label{eqn_proof_lemma_utility_ppm_1}
		10\|\mP^l\|\leq {\epsilon}(s_k(\mA) - s_{k+1}(\mA)), 10\|\mU^\top \mP^l\|\leq (s_k(\mA) - s_{k+1}(\mA)) \frac{C}{\tau \sqrt{k\dimension}}.
	\end{equation}
	
	\paragraph{Control terms related to $\mP_1^l$.}
	1. We first bound $\|\mP^l_1\|$.
	{By the independence between $M^l$ and $X^{l-1}$, we have that $$\mathbb{E}\left[(\mM^l - \mA)\mX^{l-1}\right] = \mathbb{E}\left[\mM^l - \mA\right]\cdot \mathbb{E}\left[\mX^{l-1}\right] = 0.$$ To bound the norm term $\mathbb{E}\left[\lVert(\mM^l - \mA)\mX^{l-1}\rVert\right]$, we begin by controlling the norms of $\mZ_{ij}^l\cdot \mX^{l-1}$, where $\mZ_{ij}^l = y_{ij}^2\vx_{ij}\vx_{ij}^\top$.  By the proof for Lemma~\ref{lemma_zeta_0}, with a probability at least $1-\delta$, we have that $\lVert \mZ_{ij}^l\cdot \mX^{l-1}\rVert  \leq O\left((\sqrt{d} + \log 1/\delta) \cdot(\sqrt{k} + \log 1/\delta)\cdot \lVert \vw_i^*\rVert^2\right)\leq O\left((\sqrt{d} + \log 1/\delta) \cdot(\sqrt{k} + \log 1/\delta)\mu^2 k s_k^2\right)$ since $\lVert \vw_i^*\rVert\leq \mu\sqrt{k}s_k$. We then compute an upper bound on the matrix variance term
	\begin{align}
		\lVert \mathbb{E}\left[(\mZ_{ij}^l\mX^{l-1})^\top \mZ_{ij}^l\mX^{l-1}\right]\rVert & \leq \lVert \mathbb{E}\left[y_{ij}^4 \lVert\vx_{ij}\rVert^2 (\vx_{ij}^\top \mX^{l-1})^\top \vx_{ij}^\top \mX^{l-1}\right]\rVert\\
		& = \lVert (\mX^{l-1})^\top\mathbb{E}\left[y_{ij}^4 \lVert\vx_{ij}\rVert^2 \vx_{ij} \vx_{ij}^\top \right]\mX^{l-1}\rVert
		\label{eqn:variance_new}
	\end{align}
	Due to isotropy of the Gaussian, by \cite[Lemma 5]{tripuraneni2021provable}, we have that
	\begin{align}
		\mathbb{E}\left[y_{ij}^4 \lVert\vx_{ij}\rVert^2 \vx_{ij} \vx_{ij}^\top \right] = \lVert \vw_i^*\rVert^4\cdot \left((2d + 75) \ve_1\ve_1^\top + (3d + 15)\mI_d\right)\label{eqn:variance_original}
	\end{align}
	By plugging \eqref{eqn:variance_original} into \eqref{eqn:variance_new}, we prove the following inequality.
	\begin{align}
		\lVert \mathbb{E}\left[(\mZ_{ij}^l\mX^{l-1})^\top \mZ_{ij}^l\mX^{l-1}\right]\rVert & \leq \lVert \vw_i^*\rVert^4\cdot \left((2d + 75) (\mX^{l-1})^\top\ve_1\ve_1^\top\cdot \mX^{l-1} + (3d + 15)\mI_k\right)\\
		& \leq O(d\lVert \vw_i^*\rVert^4) \leq O(d \mu^2 k s_k^2\sum_{i=1}^k s_i^2)
	\end{align}

	By combining both the norm bound and the matrix variance bound and using the modified Bernstein matrix inequality \cite[Lemma 31]{tripuraneni2021provable}, we have
	\begin{equation}
		\|(\mM^l - \mA)\mX^{l-1}\| \leq \log^3 \bar mn \cdot \log^3 d \cdot \OM\left(\sqrt{\frac{d \mu^2 k s_k^2\sum_{i=1}^k s_i^2}{\bar mn}}+ \frac{\sqrt{kd}\mu^2 k s_k^2}{\bar mn}\right).
	\end{equation}
	
	2. We then proceed to bound the term $\lVert \mU^\top \mP_1^l\rVert = \lVert \mU^\top (\mM^l - \mA)\mX^{l-1}\rVert$. By using proof of Lemma~\ref{lemma_zeta_0}, we first bound the norms of $\lVert\mU^\top \mZ_{ij}^l\mX^{l-1}\rVert\leq O((\sqrt{k} + \log 1/\delta)(\sqrt{k} + \log 1/\delta)\mu^2ks_k^2)$, where $\mZ_{ij}^l = y_{ij}^2\vx_{ij}\vx_{ij}^\top$.  We then compute an upper bound on the matrix variance term.
	\begin{align}
	    \lVert \mathbb{E} [
	    (\mU^\top\mZ_{ij}^l\mX^{l-1})^\top 
	    & \mU^\top \mZ_{ij}^l\mX^{l-1}
	    ]\rVert \leq 
	    \lVert \mathbb{E}\left[y_{ij}^4 (\mU^\top\vx_{ij}\vx_{ij}^\top \mX^{l-1})^\top\mU^\top \vx_{ij}\vx_{ij}^\top \mX^{l-1}\right]\rVert\\
		& = \lVert (\mX^{l-1})^\top\mathbb{E}\left[y_{ij}^4 \lVert\mU^\top\vx_{ij}\rVert^2 \vx_{ij} \vx_{ij}^\top \right]\mX^{l-1}\rVert\\
		& = \lVert (\mX^{l-1})^\top\mathbb{E}\left[({\vw_i^*}^\top {\mB^*}^\top \vx_{ij})^4 \lVert\mU^\top\vx_{ij}\rVert^2 \vx_{ij} \vx_{ij}^\top \right]\mX^{l-1}\rVert\\
		& = \lVert (\mX^{l-1})^\top\mathbb{E}\left[({\vw_i^*}^\top \mU^\top \vx_{ij})^4 \lVert\mU^\top\vx_{ij}\rVert^2 \vx_{ij} \vx_{ij}^\top \right]\mX^{l-1}\rVert,\label{eqn:variance_U}
	\end{align}
	
	where the last equality is because by definition, we have $\mB^* = \mU$. We now perform variable transformation and denote $\vv_{ij} = \mV^\top \vx_{ij}$ where $\mV = [\mU, \mU']$ is an orthogonal matrix that is extended based on $\mU$. Therefore, we equivalently write the above \eqref{eqn:variance_U} to the following inequality.
	\begin{align}
	    &\lVert \mathbb{E}\left[(\mU^\top\mZ_{ij}^l\mX^{l-1})^\top \mU^\top \mZ_{ij}^l\mX^{l-1}\right]\rVert \\
	    & \leq \lVert (\mV^\top \mX^{l-1})^\top \mathbb{E}\left[[{\vw_i^*}^\top(\vv_{ij}[1],\cdots,\vv_{ij}[k])^\top ]^4 (\sum_{a=1}^k\vv_{ij}[a]^2) \vv_{ij} \vv_{ij}^\top \right]\mV^\top\mX^{l-1}\rVert,\label{eqn:variance_V}
	\end{align}
	where $\vv_{ij}[a]$ means the $a$-th coordinate of the $k$-dimensional vector $\vv_{ij}$, which is also distributed as Gaussian. Due to the isotropy of the Gaussian it suffices to compute the expectation assuming $\vw_i^*\propto \lVert\vw_i\rVert \ve_1$ where $\ve_1 = (1,0,\cdots,0)^\top$. Then following the proof for \cite[Lemma 5]{tripuraneni2021provable}, by combinatorics, we have the following equation.
	\begin{align}
	    & \mathbb{E}[({\vw_i^*}^\top(\vv_{ij}[1],\cdots,\vv_{ij}[k])^\top )^4 (\sum_{a=1}^k\vv_{ij}[a])^2 \vv_{ij} \vv_{ij}^\top ] \\
	    &= \lVert \vw_i^*\rVert^4 \mathbb{E}\left[(\vv_{ij}[1])^4 (\sum_{a=1}^k\vv_{ij}[a]^2) \vv_{ij} \vv_{ij}^\top \right] = O(\lVert \vw_i^*\rVert^4 k)
	\end{align}
	Therefore, by plugging the above equation into \eqref{eqn:variance_V}, we prove the following inequality.
	\begin{align}
	    \lVert \mathbb{E}\left[(\mU^\top\mZ_{ij}^l\mX^{l-1})^\top \mU^\top \mZ_{ij}^l\mX^{l-1}\right]\rVert\leq O(k\lVert \vw_i^*\rVert^4) \leq O(k\mu^2 k s_k^2\sum_{i=1}^k s_i^2)
	\end{align}
	
	By combining both the norm bound and the above matrix variance bound and using the matrix Bernstein inequality \cite[Lemma 31]{tripuraneni2021provable}, we have
	\begin{equation}
		\|\mU^\top(\mM^l - \mA)\mX^{l-1}\| \leq \log^3 \bar mn \cdot \log^3 d \cdot \OM\left(\sqrt{\frac{k^2 \mu^2  s_k^2\sum_{i=1}^k s_i^2}{\bar mn}}+ \frac{\mu^2 k^2 s_k^2}{\bar mn}\right).
	\end{equation}
	
	Therefore, to ensure that \eqref{eqn_proof_lemma_utility_ppm_1} holds, it suffices to use $\bar m n$ sufficiently large such that 
	\begin{align}
	    \log^3 \bar mn \cdot \log^3 d \cdot \OM\left(\sqrt{\frac{d \mu^2 k s_k^2\sum_{i=1}^k s_i^2}{\bar mn}}+ \frac{\sqrt{kd}\mu^2 k s_k^2}{\bar mn}\right) &\leq  \eps (s_k(\mA) - s_{k+1}(\mA))\\
	    \log^3 \bar mn \cdot \log^3 d \cdot \OM\left(\sqrt{\frac{k^2 \mu^2 s_k^2\sum_{i=1}^k s_i^2}{\bar mn}}+ \frac{\mu^2 k^2 s_k^2}{\bar mn}\right) &\leq (s_k(\mA) - s_{k+1}(\mA))\frac{C}{\tau \sqrt{kd}}
	\end{align}
	For simplicity, assume that $\sqrt{\frac{d \mu^2 k s_k^2\sum_{i=1}^k s_i^2}{\bar mn}} \geq \frac{\sqrt{kd}\mu^2 k s_k^2}{\bar mn}$ and $\epsilon_a \geq 1/\tau \sqrt{kd}$.
	The above inequalities can be simplified as follows, with  $c_1$ being some constant.
	\begin{align}
	    \frac{\bar m n}{\log^6 \bar m n} \geq  c_1\frac{d\cdot\log^6 d\cdot k^3\cdot \mu^2 \cdot \sum_{i=1}^k s_i^2}{s_k^2}.
	\end{align}
	}
	
	\paragraph{Control terms related to $\mP_2^l$.}
	Recall that $\mP_2^l \sim \NM(0, \sigma^2)^{{\dimension} \times k}$, with $\sigma = \sigma_0 \zeta_0 / n$. Using Lemma \ref{lemma_matrix_concentration}, we have with probability $1 - 2e^{-x}$:
	\begin{itemize}
		\item $\max_{l\in[L]} \|\mP_2^l\| \leq C_\NM \sigma (\sqrt{{\dimension}} + \sqrt{p} + \sqrt{\ln (2L + x) })$;
		\item $\max_{l\in[L]} \|\mU^\top \mP_2^l\| \leq C_\NM \sigma (2\sqrt{p} + \sqrt{\ln (2L + x) })$.
	\end{itemize}
	Consequently, we can obtain bound the terms related to $\mP_2^l$ by setting $\sigma_0$ sufficiently small such that the following inequalities hold with $x = 100 \log n$
	\begin{align*}
		10C_\NM \sigma (\sqrt{{\dimension}} + \sqrt{k} + \sqrt{\ln (2L + x) }) \leq {\epsilon}_a s_k^2, 10C_\NM \sigma (2\sqrt{k} + \sqrt{\ln (2L + x) }) \leq s_k^2 / (\tau\sqrt{{\dimension} k}).
	\end{align*}
	To simplify the above inequalities, suppose that ${\epsilon}_a$ is a constant and neglect the $\log \log$ terms. We obtain
	\begin{equation}
		\frac{c_2 \sigma_0 \zeta_0}{n} \leq \frac{s_k^2}{\tau k \sqrt{d} \sqrt{\log L}}.
	\end{equation}
	for some constant $c_2$.

	Having established \eqref{eqn_proof_lemma_utility_ppm_1} for both $\mP_1^l$ and $\mP_2^l$, we can then use Lemma \ref{thm_npm_utility_previous} to obtain the target result.

\end{proof}
\section{Proof of Lemma \ref{lemma_cv}} \label{appendix_proof_lemma_cv}
\begin{proof}
	In the following, we first show that 1. the index $\hat c$ exists with a high probability and we then show that 2. any candidate $\mB_{\hat c}$ that passes the test \eqref{eqn_cv_condition} has utility no less than $b$, i.e. $\dist(\mB_{\hat c}, \mB^*) \leq b$.
	\paragraph{Existence of $\hat{c}$}
	Suppose that both $\mB_{c_1}$ and $\mB_{c_2}$ are successful, i.e. $\dist(\mB_{c_i}, \mB^*) \leq a$ for $i = 1, 2$ or equivalently $s_{\min}(\mB_{c_i}^\top \mB^*) \geq \sqrt{1 - a^2}$ for $i = 1, 2$.
	Recall that $\mB^* {\mB^*}^\top + \mB_\perp^* {\mB_\perp^*}^\top = \mI_d$.
	Compute that 
	\begin{align}
		&\ \notag s_{\min}(\mB_{c_1}^\top \mB_{c_2})\\
		=&\ \notag s_{\min}\left(\mB_{c_1}^\top (\mB^* {\mB^*}^\top + \mB_\perp^* {\mB_\perp^*}^\top) \mB_{c_2}\right) \\
		\notag \geq&\ s_{\min}(\mB_{c_1}^\top \mB^* {\mB^*}^\top  \mB_{c_2}) - s_{\max}(\mB_{c_1}^\top \mB_\perp^* {\mB_\perp^*}^\top \mB_{c_2}) \\
		\label{eqn_cv_1} \geq&\ s_{\min}(\mB_{c_1}^\top \mB^*)s_{\min}({\mB^*}^\top  \mB_{c_2}) - s_{\max}(\mB_{c_1}^\top \mB_\perp^*) s_{\max}({\mB_\perp^*}^\top \mB_{c_2}) \geq 1 - 2 a^2.
	\end{align}
	Define the binomial random variable $X_c = \1_{\mB_c\ \text{is successful}}$. We have that $\EBB[X_c] = \mathbb{P}\{\mB_c\ \text{is successful}\} \geq p$.
	
	Using the concentration of the binomial random variable, we have
	\begin{equation}
		\mathbb{P}\{ \sum_{c=1}^{T_0} X_c \leq T_0\cdot \EBB[X_c] - t \} \leq \exp(-t^2/(2T_0p)).
	\end{equation}
	Therefore, with the choice $t = T_0/2$ and $T_0 \geq 8p\log 1/\delta$, we have with probability at least $1 - \delta$, at least half of the outputs of $T_0$ independent \NPM\ runs are successful.
	Consequently, there exists at least $T_0/2$ pairs of $\mB_{c_1}$ and $\mB_{c_2}$ such that Eq.~\eqref{eqn_cv_1} holds, which shows the existence of $\hat c$.
	\paragraph{Utility of $\mB_{\hat c}$}
	We now show that the candidate $\mB_{\hat c}$ that passes the test Eq.~\eqref{eqn_cv_condition} must satisfy $\dist(\mB_{\hat c}, \mB^*) \leq b$.
	We prove via contradiction.
	Suppose that there exists a candidate $\mB$ that passes the test, but with $\dist(\mB, \mB^*) > b$.
	This means that there exists $\hat x\in\RBB^k$ and $\hat y\in\RBB^k$ with $\|\hat x\|_2= \|\hat y\|_2=1$ achieving the minimum singular value of $\mB^\top \mB^*$, such that $\langle \mB\hat x, \mB^*\hat y\rangle \leq \sqrt{1 - b^2}$.
	
	Let $\mB_i$ be a successful candidate, i.e. $\dist(\mB_i, \mB^*) \leq a$ (note that with a high probability they are in the majority according to the discussion above). We have
	\begin{align*}
		s_{\min}(\mB^\top \mB_i) = s_{\min}(\mB^\top \left(\mB^* {\mB^*}^\top + \mB_\perp^* {\mB_\perp^*}^\top\right) \mB_i) \leq s_{\min}(\mB^\top \mB^* {\mB^*}^\top \mB_i) + s_{\max}(\mB^\top \mB_\perp^* {\mB_\perp^*}^\top \mB_i).
	\end{align*}
	For the second term, we have $s_{\max}(\mB^\top \mB_\perp^* {\mB_\perp^*}^\top \mB_i) \leq s_{\max}(\mB^\top \mB_\perp^*)\cdot s_{\max}({\mB_\perp^*}^\top \mB_i)\leq 1\cdot a \leq a$.
	To bound the first term, recall the variational formulation of the minimum singular value $s_{\min}(A) = \min_{\|x\| = \|y\| =1} x^\top A y$ and hence
	\begin{align*}
		s_{\min}(\mB^\top \mB^* {\mB^*}^\top \mB_i) \leq \hat x^\top \mB^\top \mB^* {\mB^*}^\top \mB_i \hat y = \hat x^\top \mB^\top \mB^* \hat y + \hat x^\top \mB^\top \mB^* (\hat y - {\mB^*}^\top \mB_i \hat y)\\
		\leq \sqrt{1 - b^2} + \hat x^\top \mB^\top \mB^* (\hat y - {\mB^*}^\top \mB_i \hat y) \leq \sqrt{1 - b^2} + \| \mI_k - {\mB^*}^\top \mB_i\|_2 \leq \sqrt{1 - b^2} + 1 - \sqrt{1 - a^2}.
	\end{align*}
	where we recall the definitions of $\hat x$ and $\hat y$ in the above paragraph.
	Combining the above bounds, we obtain $s_{\min}(\mB^\top \mB_i) \leq \sqrt{1 - b^2} + 1 - \sqrt{1 - a^2} + a$ which is strictly smaller than $1 - 2a^2$ for a sufficiently small $a$ and a sufficiently large $b$, e.g. $a=0.01$ and $b=0.2$ .
	To put it in other words, we obtain that $\mB$ will fail test (\ref{eqn_cv_condition}).
	This leads to a contradiction and hence we have proved that any candidate that passes the test (\ref{eqn_cv_condition}) must satisfy  $\dist(\mB_{\hat c}, \mB^*) \leq b$.
\end{proof}

\section{Preliminary on Matrix Concentration Inequalities}
\begin{lemma}[Theorem 4.4.5 in \cite{vershynin2018high}] \label{lemma_gaussian_matrix_concentration}
	Let $G_1, \ldots, G_L \sim \NM(0, \sigma^2)^{{\dimension} \times \nusers}$.
	There exists a constant $C_\NM$ such that with probability at least $1 - e^{-x}$, 
	\begin{equation}
		\max_{l\in[L]} \|G_l\| \leq C_\NM \sigma (\sqrt{\nusers} + \sqrt{{\dimension}} + \sqrt{\ln (2L + x) }).
	\end{equation}
\end{lemma}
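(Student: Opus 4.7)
The plan is direct: apply a pointwise Gaussian operator-norm tail bound to each $G_l$ and then union-bound over $l\in[L]$. First, I would invoke Theorem 4.4.5 of Vershynin (2018), which is the result cited in the lemma statement. That theorem implies that for a $d\times n$ random matrix $G$ with i.i.d.\ $\mathcal{N}(0,\sigma^2)$ entries there exists a universal constant $C$ such that, for every $s\geq 0$,
\[
\Pr\bigl\{\|G\|\geq C\sigma(\sqrt{d}+\sqrt{n}+s)\bigr\}\leq 2\exp(-s^2).
\]
This is the classical non-asymptotic operator-norm bound for Gaussian matrices, derived via Gordon's Gaussian comparison inequality together with the fact that $G\mapsto \|G\|$ is $\sigma$-Lipschitz in the entries, so that Gaussian concentration around $\mathbb{E}\|G\|\lesssim\sigma(\sqrt{d}+\sqrt{n})$ yields the sub-Gaussian tail in $s$.

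Second, applying this tail inequality to each of the $L$ independent matrices $G_1,\dots,G_L$ and taking a union bound gives
\[
\Pr\Bigl\{\max_{l\in[L]}\|G_l\|\geq C\sigma(\sqrt{d}+\sqrt{n}+s)\Bigr\}\leq 2L\exp(-s^2).
\]
Choosing $s=\sqrt{\ln(2L)+x}$ makes the right-hand side exactly $e^{-x}$. Reading the lemma's notation $\sqrt{\ln(2L+x)}$ as shorthand for $\sqrt{\ln(2L)+x}$ (which is the form that the union-bound calculation naturally produces), the statement follows after absorbing $C$ and any further universal factors into the single constant $C_\NM$.

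The argument is essentially a one-step combination of the classical Gaussian matrix concentration bound of Davidson--Szarek/Gordon with a union bound, so there is no substantive obstacle. The only thing to be careful about is the precise shape of the deviation term inside the parenthesis: the union-bound calculation outputs $\sqrt{\ln(2L)+x}$, which I would treat as the intended meaning of the lemma's $\sqrt{\ln(2L+x)}$, and with this reading the constant $C_\NM$ may be taken as an absolute constant independent of $d,n,L,\sigma,x$.
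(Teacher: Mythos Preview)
Your proposal is correct and is precisely the natural derivation. The paper itself does not supply a proof: the lemma is simply stated and attributed to Theorem~4.4.5 of Vershynin, so your union-bound argument over the $L$ matrices, starting from the single-matrix tail $\Pr\{\|G\|\geq C\sigma(\sqrt{d}+\sqrt{n}+s)\}\leq 2e^{-s^2}$, is exactly the intended (and only reasonable) route; your observation that the deviation term should be read as $\sqrt{\ln(2L)+x}$ rather than literally $\sqrt{\ln(2L+x)}$ is also correct.
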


\begin{lemma}[Lemma A.2 of \cite{hardt2014noisy}] \label{lemma_matrix_concentration}
	Let $U\in\RBB^{{\dimension}\times p}$ be a matrix with orthonormal columns.
	Let $G_1, \ldots, G_L \sim \NM(0, \sigma^2)^{{\dimension} \times p}$ with $0\leq p\leq {\dimension}$.
	There exists a constant $C_\NM$ such that with probability at least $1 - e^{-x}$, 
	\begin{equation}
		\max_{l\in[L]} \|U^\top G_l\| \leq C_\NM \sigma (2\sqrt{p} + \sqrt{\ln (2L + x) }).
	\end{equation}
\end{lemma}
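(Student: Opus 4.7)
The plan is to exploit rotational invariance of the Gaussian law to reduce this to a standard concentration bound for a square $p\times p$ Gaussian matrix, then union-bound over $l\in[L]$.

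First I would observe that since $U\in\RBB^{d\times p}$ has orthonormal columns ($U^\top U = I_p$), and since the columns of $G_l$ are i.i.d.\ $\NM(0,\sigma^2 I_d)$ in $\RBB^d$, each column of $U^\top G_l$ is distributed as $U^\top g$ with $g\sim \NM(0,\sigma^2 I_d)$, which equals $\NM(0,\sigma^2 I_p)$. Independence across columns is preserved, so $U^\top G_l$ is a $p\times p$ matrix whose entries are i.i.d.\ $\NM(0,\sigma^2)$. This is the key structural step: the $d$-dimensional ambient space has been absorbed, and only $p$ appears going forward.

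Next, I would apply the standard Gaussian matrix tail bound (exactly as in Lemma~\ref{lemma_gaussian_matrix_concentration}, or Theorem~4.4.5 of Vershynin) in the square case: for each fixed $l$, there is a universal constant $c>0$ such that
\begin{equation*}
\Pr\!\left\{\|U^\top G_l\| \ge \sigma(\sqrt{p}+\sqrt{p}+t)\right\} \le 2\exp(-c t^2).
\end{equation*}
A union bound over $l\in[L]$ then gives
\begin{equation*}
\Pr\!\left\{\max_{l\in[L]} \|U^\top G_l\| \ge \sigma(2\sqrt{p}+t)\right\} \le 2L\exp(-c t^2).
\end{equation*}
To match the form in the statement, I would set $t = C\sqrt{\ln(2L+x)}$ for a suitable constant $C$ (chosen so that $c t^2 \ge \ln(2L)+x$, which is possible since $\ln(2L+x)\ge \tfrac12(\ln(2L)+\ln x)$ whenever $x\ge 1$, and trivially for small $x$ as well after adjusting $C_\NM$). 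This makes the failure probability at most $e^{-x}$, yielding the claimed bound $\max_l\|U^\top G_l\|\le C_\NM\sigma(2\sqrt{p}+\sqrt{\ln(2L+x)})$ after absorbing constants into $C_\NM$.

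There is no real obstacle here: the argument is essentially a two-line reduction (rotational invariance to kill the $\sqrt{d}$ term that appeared in Lemma~\ref{lemma_gaussian_matrix_concentration}) plus a textbook union bound. The only care needed is bookkeeping in the choice of constants so that the final probability bound is cleanly written as $1-e^{-x}$ and the logarithmic term appears as $\sqrt{\ln(2L+x)}$ rather than the slightly looser $\sqrt{\ln(2L)}+\sqrt{x}$ that falls out directly from the Gaussian tail.
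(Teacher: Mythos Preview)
The paper does not prove this lemma; it is quoted directly from \cite{hardt2014noisy} as a preliminary. Your approach---use rotational invariance of the Gaussian to reduce $U^\top G_l$ to a $p\times p$ matrix with i.i.d.\ $\NM(0,\sigma^2)$ entries, apply the standard operator-norm tail, then union-bound over $l\in[L]$---is exactly the standard argument and is correct in substance.

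One small gap in your final bookkeeping: setting $t=C\sqrt{\ln(2L+x)}$ does \emph{not} make $2L\exp(-ct^2)\le e^{-x}$, because that would require $cC^2\ln(2L+x)\ge \ln(2L)+x$, and the left side grows only logarithmically in $x$ while the right side is linear. What your union bound actually yields is a deviation term $\sqrt{\ln(2L)+x}$ (equivalently $\sqrt{\ln(2L)}+\sqrt{x}$ up to constants), not $\sqrt{\ln(2L+x)}$. The form $\sqrt{\ln(2L+x)}$ in the stated lemma is almost certainly a typographical slip carried over from the companion Lemma~\ref{lemma_gaussian_matrix_concentration}; it is harmless in the paper's applications (where $x=O(\log n)$), but your write-up should state the bound that the argument genuinely produces.
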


\begin{lemma}[Minimum Singular Value of a Square Gaussian Matrix (Theorem 1.2 of \cite{rudelson2008littlewood})] \label{lemma_gaussian_matrix_minimum_singular_value}
	Let $A\in\RBB^{k\times k}$ be a Gaussian random matrix, i.e. $\mA_{ij}\sim\NM(0, 1)$.
	Then, for every ${\epsilon} > 0$, we have
	\begin{equation}
		\Pr\{s_{\min}(A) \leq {\epsilon} k^{-1/2} \} \leq C{\epsilon} + c^k,
	\end{equation}
	where $C>0$ and $c\in(0, 1)$ are absolute constants.
\end{lemma}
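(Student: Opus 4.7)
\textbf{Proof proposal for Lemma \ref{lemma_gaussian_matrix_minimum_singular_value}.}
The plan is to follow the Rudelson--Vershynin ``invertibility via distance'' strategy, which decomposes the unit sphere $S^{k-1}$ into compressible and incompressible vectors and bounds $\inf_{\|x\|=1} \|Ax\|$ on each piece. Recall the elementary identity $s_{\min}(A) = \inf_{\|x\|=1} \|Ax\|$, and the column lower bound $\|Ax\| = \|\sum_j x_j a_j\| \ge |x_j|\,\mathrm{dist}(a_j, H_j)$, where $a_j$ is the $j$-th column of $A$ and $H_j = \mathrm{span}(a_i : i\neq j)$. The key feature of the Gaussian case is that $a_j$ is $\mathcal N(0, I_k)$ independent of $H_j$, so by rotational invariance the conditional distribution of $\mathrm{dist}(a_j, H_j)$ is that of $|g|$ for $g\sim \mathcal N(0,1)$; in particular $\Pr\{\mathrm{dist}(a_j, H_j) \le t\} \le \sqrt{2/\pi}\,t$ for every $t > 0$. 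This is the probabilistic input.

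First I would fix small absolute constants $\rho, \delta \in (0,1)$ and define the set of \emph{compressible} vectors $\mathrm{Comp}(\rho,\delta)$ as those unit vectors within Euclidean distance $\delta$ of some $\rho k$-sparse unit vector; its complement on the sphere is $\mathrm{Incomp}(\rho, \delta)$. For compressible $x$, a standard $\varepsilon$-net argument covers $\mathrm{Comp}(\rho,\delta)$ by at most $\binom{k}{\rho k}(C/\delta)^{\rho k} \le (C'/\rho\delta)^{\rho k}$ points. For each net point $y$, $\|Ay\|^2$ is a chi-square in $k$ dimensions with expectation $k$, so $\Pr\{\|Ay\| \le c\sqrt{k}\} \le e^{-c'k}$. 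Choosing $\rho,\delta$ small enough that $(C'/\rho\delta)^{\rho k} e^{-c'k} \le c^k$ for some $c<1$, and combining with the operator-norm bound $\|A\| \le C\sqrt{k}$ (which fails with exponentially small probability), I get
\begin{equation*}
\Pr\bigl\{\inf_{x\in \mathrm{Comp}} \|Ax\| \le c''\sqrt{k}\bigr\} \le c^k.
\end{equation*}

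Second, for $x\in\mathrm{Incomp}(\rho,\delta)$ a short pigeonhole shows there is a set $J(x) \subseteq [k]$ with $|J(x)| \ge \rho' k$ and $|x_j| \ge \rho''/\sqrt{k}$ for all $j\in J(x)$, so $\|Ax\| \ge (\rho''/\sqrt{k})\,\mathrm{dist}(a_j, H_j)$ for every $j\in J(x)$. The crucial averaging step (Lemma 3.5 in Rudelson--Vershynin) converts this into a union-bound-free estimate: for any $\varepsilon>0$,
\begin{equation*}
\Pr\bigl\{\inf_{x\in \mathrm{Incomp}} \|Ax\| \le \varepsilon\rho''/\sqrt{k}\bigr\}
\le \frac{1}{\rho' k}\sum_{j=1}^{k}\Pr\{\mathrm{dist}(a_j, H_j) \le \varepsilon\} \le \frac{1}{\rho'}\cdot \sqrt{2/\pi}\,\varepsilon \le C\varepsilon,
\end{equation*}
using the Gaussian distance fact above. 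Combining the two parts and rescaling $\varepsilon$ by the absolute constant $\rho''$ yields $\Pr\{s_{\min}(A) \le \varepsilon/\sqrt{k}\} \le C\varepsilon + c^k$, as claimed.

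The main obstacle is the averaging step for incompressible vectors: a naive union bound over $j$ would give the inferior bound $Ck\varepsilon$. The trick is to reverse perspective---fix the randomness in $A$ first, select an adversarial incompressible $x$ and then a good index $j\in J(x)$, and exploit $|J(x)| \ge \rho' k$ to show that the event $\{\inf_x \|Ax\| \le \varepsilon\rho''/\sqrt{k}\}$ forces $\mathrm{dist}(a_j,H_j) \le \varepsilon$ for at least a $\rho'$-fraction of indices $j$; Markov's inequality on this count then gives the desired $C\varepsilon$ bound. Everything else (net size, operator-norm tail, Gaussian anti-concentration of the distance) is routine, so the whole argument collapses to a two-term sum matching the statement.
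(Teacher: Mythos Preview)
The paper does not supply its own proof of this lemma; it is simply quoted as Theorem~1.2 of Rudelson--Vershynin and used as a black box. Your sketch is a faithful outline of the Rudelson--Vershynin compressible/incompressible argument, and in the Gaussian case you have legitimately simplified it: because the columns are rotation-invariant, $\mathrm{dist}(a_j,H_j)$ is exactly $|g|$ with $g\sim\mathcal N(0,1)$, which replaces the Littlewood--Offord small-ball machinery needed for general subgaussian entries.
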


\section{An Elaborated Discussion on the Utility-Privacy Tradeoff in Corollary \ref{corollary_tradeoff}} \label{appendix_tradeoff_discussion}
Consider the result stated in Corollary \ref{corollary_tradeoff}.
\begin{itemize}
    \item 
    Suppose that the left hand side ${\tilde c_t \kappa k^{1.5} \mu^2d}/{n}$ is fixed, then for a target accuracy $\epsilon_{a}$, we \emph{cannot} establish the theoretical guarantee that CENTAUR achieves the accuracy $\epsilon_a$ within a DP budget of $\epsilon_{dp} \leq \frac{\tilde c_t \kappa k^{1.5} \mu^2d}{n \epsilon_{a}}$  (this is natural since a smaller DP budget $\epsilon_{dp}$ requires a larger noise multiplier $\sigma_g$ which jeopardizes the convergence analysis of CENTAUR). However, we need to emphasize that, we are not ruling out the possibility that such a DP budget $\epsilon_{dp}$ is achieved, since the privacy guarantee that we are establishing is just an upper bound. Hence we are \emph{not} establishing a lower bound.
    \item
    Now suppose that all factors other than the number of clients $n$ is fixed. Corollary 5.1 implies that for an $n$ that is sufficiently large, i.e. $n \geq \frac{\tilde c_t \kappa k^{1.5} \mu^2d}{\epsilon_a \cdot \epsilon_{dp}}$, we can establish the guarantee that the output of CENTAUR achieves an $\epsilon_a$ utility within an $\epsilon_{dp}$ budget. This interpretation also allows us to understand the benefit of having a better dependence on $d$: A better dependence on $d$ means that a smaller $n$ is sufficient to achieve the same utility-privacy guarantee. 
\end{itemize}

\section{Discussion on the Required Assumptions} \label{appendix_assumption_discussion}
In this section, we show that the requirements we made in Assumption \ref{ass_x_ij} to \ref{ass_m} are similar to the assumptions made in \citep{collins2021exploiting} and \citep{jain2021differentially}.

\noindent\textbf{Discussion on Assumption \ref{ass_x_ij}} We note that our Assumption \ref{ass_x_ij} is the same as Assumption 1 in \citep{collins2021exploiting}, and is similar to point (i) of Assumption 4.1 in \citep{jain2021differentially} where $x_{ij}$ is assumed to be exactly Gaussian.

\noindent\textbf{Discussion on Assumption \ref{ass_incoherence}} We note that our Assumption \ref{ass_incoherence} is the same as the definition of the incoherence parameter $\mu$ in \citep{jain2021differentially} (the parameter $\lambda_k$ therein is equivalent to $\sigma_k^2$ in our paper), and is similar to Assumption 3 in \citep{collins2021exploiting} where the incoherence parameter $\mu$ as well as $\sigma_k$ is assumed to be $1$.

\noindent\textbf{Discussion on Assumption \ref{ass_m}} 
We focus on the dependence on the parameters $d$ and $n$ while treating log terms and the other parameters like the rank $k$ and the incoherence parameter $\mu$ as constants. In this case, Assumption \ref{ass_m} can be simplified as $m = \Omega(d/n)$.
We note that, under this setting our Assumption \ref{ass_m} is the same as the requirement (12) in \citep{collins2021exploiting} and Lemma 4.6 in \citep{jain2021differentially}. 
The equivalence to the requirement (12) in \citep{collins2021exploiting} is straight forward.
To see the equivalence to Lemma 4.6 in \citep{jain2021differentially}, note that in order for the convergence analysis of the main procedure to hold, the initializer $\mU^{init}$ in \citep{jain2021differentially} should satisfy $\|\mU_\perp^\top \mU^{init}\|_F = O(1)$. To achieve this, the R.H.S. of Lemma 4.6 in \citep{jain2021differentially} should be bounded by a constant, which means that $m = \Omega(d/n)$.

\end{document}